\documentclass{article} 
\usepackage{iclr2024_conference,times}


\usepackage{amsmath,amsfonts,bm}









\def\eqref#1{equation~\ref{#1}}









\def\ceil#1{\lceil #1 \rceil}
\def\floor#1{\lfloor #1 \rfloor}
\def\1{\bm{1}}

\def\eps{{\epsilon}}










\DeclareMathAlphabet{\mathsfit}{\encodingdefault}{\sfdefault}{m}{sl}
\SetMathAlphabet{\mathsfit}{bold}{\encodingdefault}{\sfdefault}{bx}{n}













\usepackage{amsmath}
\usepackage{hyperref}
\usepackage{url}
\usepackage{graphicx}
\usepackage{amsthm}
\usepackage{graphicx, nicefrac}
\usepackage{url}
\usepackage{xspace}
\usepackage[mathscr]{euscript}
\usepackage{bbm, dsfont}

\usepackage{tikz}

\title{Noisy Interpolation Learning with Shallow Univariate ReLU Networks}


\author{Antiquus S.~Hippocampus, Natalia Cerebro \& Amelie P. Amygdale \thanks{ Use footnote for providing further information
about author (webpage, alternative address)---\emph{not} for acknowledging
funding agencies.  Funding acknowledgements go at the end of the paper.} \\
Department of Computer Science\\
Cranberry-Lemon University\\
Pittsburgh, PA 15213, USA \\
\texttt{\{hippo,brain,jen\}@cs.cranberry-lemon.edu} \\
\And
Ji Q. Ren \& Yevgeny LeNet \\
Department of Computational Neuroscience \\
University of the Witwatersrand \\
Joburg, South Africa \\
\texttt{\{robot,net\}@wits.ac.za} \\
\AND
Coauthor \\
Affiliation \\
Address \\
\texttt{email}
}

%
\newtheorem{theorem}{Theorem}
\newtheorem{lemma}{Lemma}[section]

\newtheorem{claim}[lemma]{Claim}

\newtheorem{definition}[lemma]{Definition}

\newtheorem{mdresult}{Result}


\newcommand{\paren}[1]{\ensuremath{\left(#1\right)}\xspace}
\newcommand{\card}[1]{\left\vert{#1}\right\vert}

\newcommand{\IR}{\ensuremath{\mathbb{R}}}

\newcommand{\prob}[1]{\Pr\paren{#1}}

\DeclareMathOperator*{\Exp}{\ensuremath{{\mathbb{E}}}}
\DeclareMathOperator*{\Prob}{\ensuremath{\mathbb{P}}}
\renewcommand{\Pr}{\Prob}

\newenvironment{tbox}{\begin{tcolorbox}[
		enlarge top by=5pt,
		enlarge bottom by=5pt,
		 breakable,
		 boxsep=0pt,
                  left=4pt,
                  right=4pt,
                  top=10pt,
                  arc=0pt,
                  boxrule=1pt,toprule=1pt,
                  colback=white
                  ]
	}
{\end{tcolorbox}}


\newcommand{\II}{\ensuremath{\mathbb{I}}}

\newcommand{\mireal}[1][]{
  \ifx\relax#1\relax%
    \II(\mione \,; \mitwo)%
  \else%
    \II(\mione \,; \mitwo\mid #1)%
  \fi
}



\newcommand{\dist}{\ensuremath{\mathcal{D}}}


\newcommand{\norm}[1]{\ensuremath{\left\lVert #1 \right\rVert}}

\newcommand{\normal}{\mathcal{N}}

\newcommand{\unif}{\text{Uniform}}

\newcommand{\iid}{\stackrel{\mathrm{ i.i.d.}}{\sim}}

\newcommand{\curv}{\mathsf{curv}}

\newcommand{\ifcomments}{\iftrue}
\newcommand{\reals}{{\mathbb R}}

\newcommand{\removed}[1]{}

\usepackage{algorithm}
\usepackage{algpseudocode}

\makeatletter

\makeatother
\begin{document}

\maketitle

\begin{abstract}

Understanding how overparameterized neural networks generalize despite perfect interpolation of noisy training data is a fundamental question. \citet{mallinar2022benign} noted that neural networks seem to often exhibit ``tempered overfitting'', wherein the population risk does not converge to the Bayes optimal error, but neither does it approach infinity, yielding non-trivial generalization. However, this has not been studied rigorously.  We provide the first rigorous analysis of the overfiting behaviour of regression with minimum norm ($\ell_2$ of weights), focusing on univariate two-layer ReLU networks.  We show overfitting is tempered (with high probability) when measured with respect to the $L_1$ loss, but also show that the situation is more complex than suggested by \citeauthor{mallinar2022benign}, and overfitting is catastrophic with respect to the $L_2$ loss, or when taking an expectation over the training set.

\end{abstract}

\section{Introduction}

A recent realization is that, although sometimes overfitting can be catastrophic as suggested by our classic learning theory understanding, in other cases overfitting may not be so catastrophic. In fact, even \textit{interpolation learning}, which entails achieving zero training error with noisy data, can still allow for good generalization, and even consistency \citep{zhang2017rethinkinggeneralization,belkin2018overfittingperfectfitting}.
This has led to efforts towards understanding the nature of overfitting: how \textit{benign} or \textit{catastrophic} it is, and what determines this behavior, in different settings and using different models. 

Although interest in benign overfitting stems from the empirical success of interpolating large neural networks, theoretical study so far has been mostly limited to linear and kernel methods, or to classification settings where the data is already linearly separable, with very high data dimension (tending to infinity as the sample size grows)\footnote{Minimum $\ell_2$ norm linear prediction (aka ridgeless regression) with noisy labels and (sub-)Gaussian features has been studied extensively 
\citep[e.g.][]{hastie2020surprises,belkin2020two,bartlett2020benignpnas,muthukumar2020harmless,negrea2020defense,chinot2020robustness,koehler2021uniform,wu2020optimal,tsigler2020benign,zhou2022non,wang2022tight,chatterji2021interplay,bartlett2021failures,shamir2022implicit,ghosh2022universal,chatterji2020linearnoise,wang2021binary,cao2021benign,muthukumar2021classification,montanari2020maxmarginasymptotics,liang2021interpolating,thrampoulidis2020theoretical,wang2021benignmulticlass,donhauser2022fastrates,frei2023benign}, and noisy minimum $\ell_1$ linear prediction (aka Basis Persuit) has also been considered \citep[e.g.][]{ju2020overfitting,koehler2021uniform,wang2022tight}.  Either way, these analyses are all in the high dimensional setting, with dimension going to infinity, since to allow for interpolation the dimension must be high, higher than the number of samples. Kernel methods amount to a minimum $\ell_2$ norm linear prediction, with very non-Gaussian features.  But existing analyses of interpolation learning with kernel methods rely on ``Gaussian Universality'': either assuming as an ansatz the behavior is as for Gaussian features \citep{mallinar2022benign} or establishing this rigorously in certain high dimensional scalings \citep{hastie2019surprises,misiakiewicz2022spectrum,mei2022generalization}.
In particular, such analyses are only valid when the input dimension goes to infinity (though possibly slower than the number of samples) and not for fixed low or moderate dimensions.
\citet{frei2022benign,frei2023benign,cao2022benign,kou2023benign} study interpolation learning with neural networks, but only with high input dimension and when the data is interpolatable also with a linear predictor---in these cases, although non-linear neural networks are used, the results show they behave similarly to linear predictors. 
\citet{manoj2023interpolation} take the other extreme and study interpolation learning with ``short programs'', which are certainly non-linear, but this is an abstract model that does not directly capture learning with neural networks.}. 
But what about noisy interpolation learning in low dimensions, using neural networks?

\citet{mallinar2022benign} conducted simulations with neural networks and observed  \textit{``tempered"} overfitting:  the asymptotic risk does not approach the Bayes-optimal risk (there is no consistency), but neither does it diverge to infinity catastrophically.  Such ``tempered'' behavior is well understood for $1$-nearest neighbor, where the asymptotic risk is roughly twice the Bayes risk \citep{cover1967nearest}, and \citeauthor{mallinar2022benign} heuristically explain it also for some kernel methods. However, we do not have a satisfying and rigorous understanding of such behavior in neural networks, nor a more quantitative understanding of just how bad the risk might be when interpolating noisy data using a neural net.

In this paper, we begin rigorously studying the effect of overfitting in the noisy regression setting, with neural networks in low dimensions, where the data is {\em not} linearly interpolatable. Specifically, we study interpolation learning of univariate data (i.e.~in one dimension) using a two-layer ReLU network (with a skip connection), which is a predictor $f_{\theta,a_0,b_0}:\IR \rightarrow \IR$ given by:
\begin{equation}\label{eq:net}
f_{\theta,a_0,b_0}(x) = \sum_{j=1}^m a_j (w_j x + b_j)_+ +  a_0 x + b_0~,
\end{equation}
where $\theta \in \reals^{3m}$ denotes the weights (parameters) $\{a_j,w_j,b_j\}_{j=1}^m$.  To allow for interpolation we do not limit the width $m$, and learn by minimizing the norm of the weights \citep{savarese2019infinite,ergen2021convex,hanin2021ridgeless,debarre2022sparsest,boursier2023penalising}:
\begin{equation} \label{eq:main_problem}
    \hat{f}_S = \arg\min_{f_{\theta,a_0,b_0}} \norm{\theta}^2 \;\text{ s.t. } \; \forall i \in [n],\; f_{\theta,a_0,b_0}(x_i)=y_i~\; \textrm{where $S=\{(x_1,y_1),\ldots,(x_n,y_n)\}$}.
\end{equation}

Following \citet{boursier2023penalising} we allow an unregularized skip-connection in \eqref{eq:net}, where the weights $a_0,b_0$ of this skip connection are not included in the norm $\norm{\theta}$ in \eqref{eq:main_problem}. This skip connection avoids some complications and allows better characterizing $\hat{f}_S$ but does not meaningfully change the behavior (see Section~\ref{sec:review_min_norm_nets}).

\paragraph{Why min norm?} Using unbounded size minimum weight-norm networks is natural for interpolation learning.  It parallels the study of minimum norm high (even infinite) dimension linear predictors.  For interpolation, we must allow the number of parameters to increase as the sample size increases.  But to have any hope of generalization, we must choose among the infinitely many zero training error networks somehow, and it seems that some sort of explicit or implicit low norm bias is the driving force in learning with large overparameterized neural networks \citep{neyshabur2014search}.  Seeking minimum $\ell_2$ norm weights is natural, e.g.~as a result of small weight decay. Even without explicit weight decay, optimizing using gradient descent is also related to an implicit bias toward low $\ell_2$ norm: this can be made precise for linear models and for classification with ReLU networks \citep{chizat2020implicit,safran2022effective}. For regression with ReLU networks, as we study here, 
the implicit bias is not well understood (see \cite{vardi2023implicit}), and studying \eqref{eq:main_problem} is a good starting point for understanding the behavior of networks learned via gradient descent even without explicit weight decay. Interestingly, minimum-norm interpolation corresponds to the \emph{rich regime}, and does not correspond to any kernel \citep{savarese2019infinite}.
For the aforementioned reasons, understanding the properties of min-norm interpolators has attracted much interest in recent years \citep{savarese2019infinite, ongie2019function, ergen2021convex, hanin2021ridgeless, debarre2022sparsest, boursier2023penalising}.

\paragraph{Noisy interpolation learning.} We consider a noisy distribution $\dist$ over $[0,1] \times \IR$:
\begin{equation} \label{eq:define_D}
    x \sim \unif([0,1])\;\;\;\; \text{ and }\;\;\;\; y=f^*(x)+\eps \; \text{ with } \eps \textrm{ independent of }\, x,
\end{equation}
where $x$ is uniform for simplicity and concreteness\footnote{All our results should also hold for any absolutely continuous distribution with bounded density and support. Roughly speaking, this can be achieved by dividing the support into disjoint intervals such that the distribution in each interval is well-approximated by a uniform distribution.}. The noise $\eps$ follows some arbitrary (but non-zero) distribution, and learning is based on an i.i.d.~training set $S\sim\dist^n$.  Since the noise is non-zero, the ``ground truth'' predictor $f^*$ has non-zero training error, seeking a training error much smaller than that of $f^*$ would be overfitting (fitting the noise) and necessarily cause the complexity (e.g.~norm) of the learned predictor to explode.  The ``right'' thing to do is to balance between the training error and the complexity $\norm{\theta}$.  Indeed, under mild assumptions, this balanced approach leads to asymptotic consistency, with $\hat{f}_S \xrightarrow{n\rightarrow\infty} f^* $ and the asymptotic population risk of $\hat{f}_S$ converging to the Bayes risk.  But what happens when we overfit and use the interpolating learning rule \eqref{eq:main_problem}?

\begin{figure}[t]  
 \begin{center} 
 \vspace{-5mm}\includegraphics[scale=0.35]{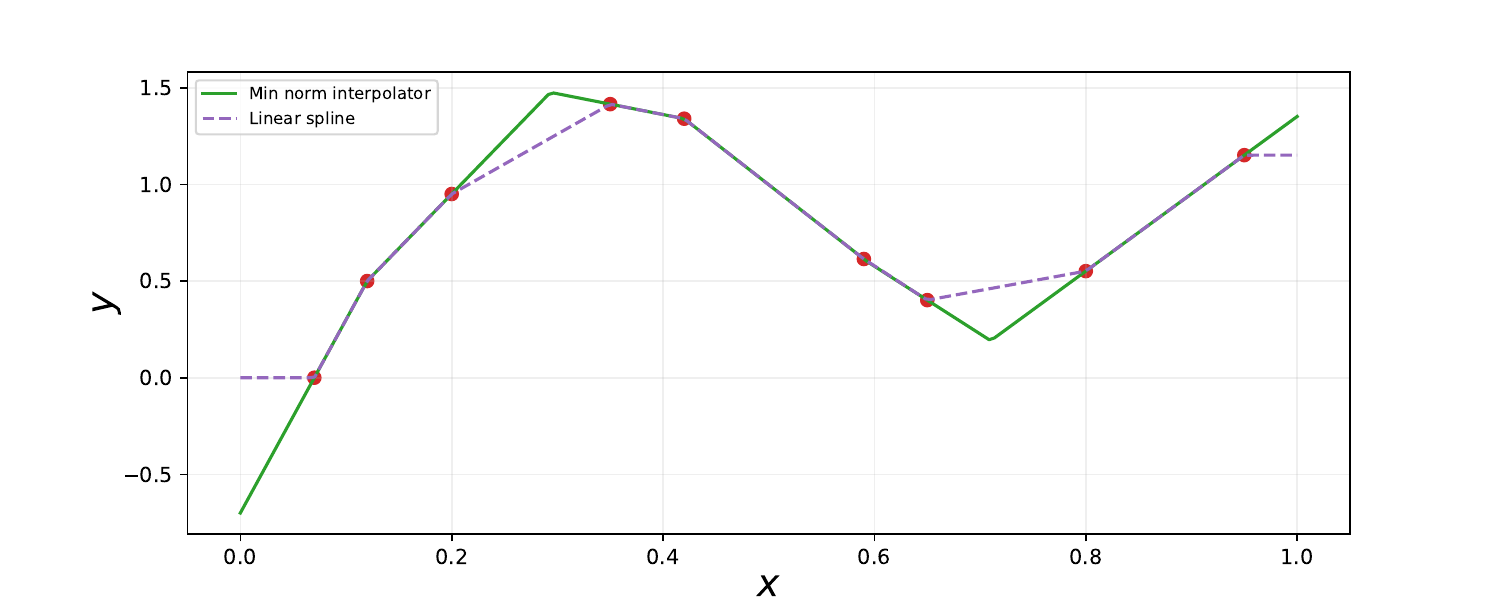}
    \vspace{-2mm}\caption{Comparison between linear-spline (purple) and min-norm (green) interpolators.}
    \vspace{-5mm}
     \label{fig:min-norm-illustration1}
 \end{center}
 \end{figure}

\paragraph{Linear Splines.} At first glance, we might be tempted to think that two-layer ReLUs behave like linear splines (see Figure~\ref{fig:min-norm-illustration1}). Indeed, if minimizing the norm of weights $w_i$ and $a_i$ but {\em not} the biases $b_i$ in \eqref{eq:main_problem}, linear splines are a valid minimizer \citep{savarese2019infinite,ergen2021convex}. As the number of noisy training points increases, linear splines ``zig-zag'' with tighter ``zigs'' but non-vanishing ``amplitude'' around $f^*$, resulting in an interpolator which roughly behaves like $f^*$ plus some added non-vanishing ``noise''.  This does not lead to consistency, but is similar to a nearest-neighbor predictor (each prediction is a weighted average of two neighbors).  Indeed, in Theorem~\ref{thm:linear-splines} of Section~\ref{sec:linear-spline}, we show that linear splines exhibit ``tempered'' behavior, with asymptotic risk proportional to the noise level.

\paragraph{From Splines to Min-Norm ReLU Nets.} It turns out minimum norm ReLU networks, although piecewise linear, are not quite linear splines: roughly speaking, and as shown in Figure~\ref{fig:min-norm-illustration1}, they are more conservative in the number of linear ``pieces''. Because of this, in convex (conversely, concave) regions of the linear spline, minimum norm ReLU nets ``overshoot'' the linear spline in order to avoid breaking linear pieces. This creates additional ``spikes'', extending above and below the data points (see Figures~\ref{fig:min-norm-illustration1} and~\ref{fig:min-norm-illustration2}) and thus potentially increasing the error. In fact, such spikes are also observed in interpolants reached by gradient descent \citep[Figure 1]{shevchenko2022mean}. How bad is the effect of such spikes on the population risk? 

\subsection*{Our Contribution}\label{subsec:contributions}

\paragraph{Effect of Overfitting on $L_p$ Risk.} It turns out the answer is quite subtle and, despite considering the same interpolator, the nature of overfitting actually depends on how we measure the error. For a function $f:\IR \rightarrow\IR$, we measure its $L_p$ population error and the reconstruction error respectively as
$$\mathcal{L}_p(f) := \Exp_{(x,y) \sim \dist} [|f(x)-y|^p] \quad \text{ and } \quad \mathcal{R}_p(f) := \Exp_{x \sim \unif([0,1])} [|f(x)-f^*(x)|^p].$$
We show in Theorems~\ref{thm:tempered-for-ell_1} and~\ref{thm:tempered-for-ell_1-lower_bound} of Section~\ref{sec:tempered-L1} that for $1 \leq p < 2$, 
\begin{equation}\label{eq:intro_main}
    \mathcal{L}_p(\hat{f}_S)\xrightarrow{n\rightarrow\infty} \Theta \left( \frac{1}{(2-p)_+} \right) \, \mathcal{L}_p(f^*).
\end{equation}
This is an upper bound for any Lipschitz target $f^*$ and any noise distribution, and it is matched by a lower bound for Gaussian noise.  That is, for abs-loss ($L_1$ risk), as well as any $L_p$ risk for $1\leq p<2$, overfitting is {\bf tempered}. But this tempered behavior explodes as $p \rightarrow 2$, and we see a sharp transition. We show in Theorem~\ref{thm:ell2-catastrophic} of Section~\ref{sec:catastrophic} that for any $p\geq 2$, including for the square loss ($p=2$), in the presence of noise,
$\mathcal{L}_p(\hat{f}_S)\xrightarrow{n\rightarrow\infty} \infty$ and overfitting is {\bf catastrophic}.

\paragraph{Convergence vs. Expectation.} The behavior is even more subtle, in that even for $1\leq p<2$, although the risk $\mathcal{L}_p(\hat{f}_S)$ converges in probability to a tempered behavior as in \eqref{eq:intro_main}, its {\em expectation} is infinite: $\Exp_S[\mathcal{L}_p(\hat{f}_S)]=\infty$.  Note that in studying tempered overfitting, \citet{mallinar2022benign} focused on this expectation, and so would have categorized the behavior as ``catastrophic'' even for $p=1$, emphasizing the need for more careful consideration of the effect of overfitting.  

\paragraph{I.I.D.~Samples vs. Samples on a Grid.} The catastrophic effect of interpolation on the $L_p$ risk with $p \geq 2$ is a result of the effect of fluctuations in the \textit{spacing} of the training points.  Large, catastrophic, spikes are formed by training points extremely close to their neighbors but with different labels (see Figures~\ref{fig:min-norm-illustration2} and~\ref{fig:bad-case-kink-main-text}). To help understand this, in Section~\ref{sec:grid} we study a ``fixed design'' variant of the problem, where the training inputs lie on a uniform grid, $x_i=i/n$, and responses follow $y_i=f^*(x_i)+\epsilon_i$. In this case, interpolation is always tempered, with $\mathcal{L}_p(\hat{f}_S)\xrightarrow{n\rightarrow\infty} O(  \mathcal{L}_p(f^*) )$ for any constant $p\geq 1$ (Theorem~\ref{thm:grid} of Section~\ref{sec:grid}).

\paragraph{Discussion and Takeaways.} 

Our work is the first to study noisy interpolation learning with min-norm ReLU networks for regression. It is also the first to study noisy interpolation learning in neural networks where the input dimension does not grow with the sample size, and to consider non-linearly-interpolatable data distributions (see below for a comparison with concurrent work in a classification setting). The univariate case might seem simplistic, but is a rich and well-studied model in its own right \citep{shevchenko2022mean,ergen2021convex,hanin2021ridgeless,debarre2022sparsest,boursier2023penalising,williams2019gradient,mulayoff2021implicit,safran2022effective}, and as we see, it already exhibits many complexities and subtleties that need to be resolved, and is thus a non-trivial necessary first step if we want to proceed to the multivariate case.

The main takeaway from our work is that the transition from tempered to catastrophic overfitting can be much more subtle than previously discussed, both in terms of the details of the setting (e.g., sampled data vs. data on a grid) and 
in terms of
the definition and notion of overfitting (the loss function used, and expectation vs. high probability). Understanding these subtleties is crucial before moving on to more complex models.

More concretely, we see that for the square loss, the behavior does not fit the ``tempered overfitting'' predictions of ~\citet{mallinar2022benign}, and for the $L_1$ loss we get a tempered behavior with high probability but not in expectation, which highlights that the definitions of \citep{mallinar2022benign} need to be refined. We would of course not get such strange behavior with the traditional non-overfitting approach of balancing training error and norm; in this situation the risk converges almost surely to the optimal risk, with finite expectation and vanishing variances. Moreover, perhaps surprisingly, when the input data is on the grid (equally spaced), the behavior is tempered for all losses even in the presence of label noise. This demonstrates that the catastrophic behavior for $L_p$ losses for $p \geq 2$ is not just due to the presence of label noise; it is the combination of label noise and sampling of points that hurts generalization.
We note that previous works considered benign overfitting with data on the grid as a simplified setting, which may help in understanding more general situations \citep{beaglehole2022kernel,lai2023generalization}. Our results imply that this simplification might change the behavior of the interpolator significantly. In summary, the nature of overfitting is a delicate property of the combination of how we measure the loss and how training examples are chosen.

\paragraph{Comparison with concurrent work.}
In a concurrent and independent work, \citet{kornowski2023tempered} studied interpolation learning in univariate two-layer ReLU networks in a classification setting, and showed that they exhibit tempered overfitting. 
In contrast to our regression setting,
in classification 
only the output's sign affects generalization, and hence the height of 
the spikes 
do not play a significant role. As a result, 
our regression setting 
exhibits a fundamentally different behavior, and
the above discussion on the delicateness of the overfitting behavior in regression does not apply to their classification setting.

 \begin{figure}[t]
     \begin{center}
     \vspace{-5mm}
\includegraphics[scale=0.4]{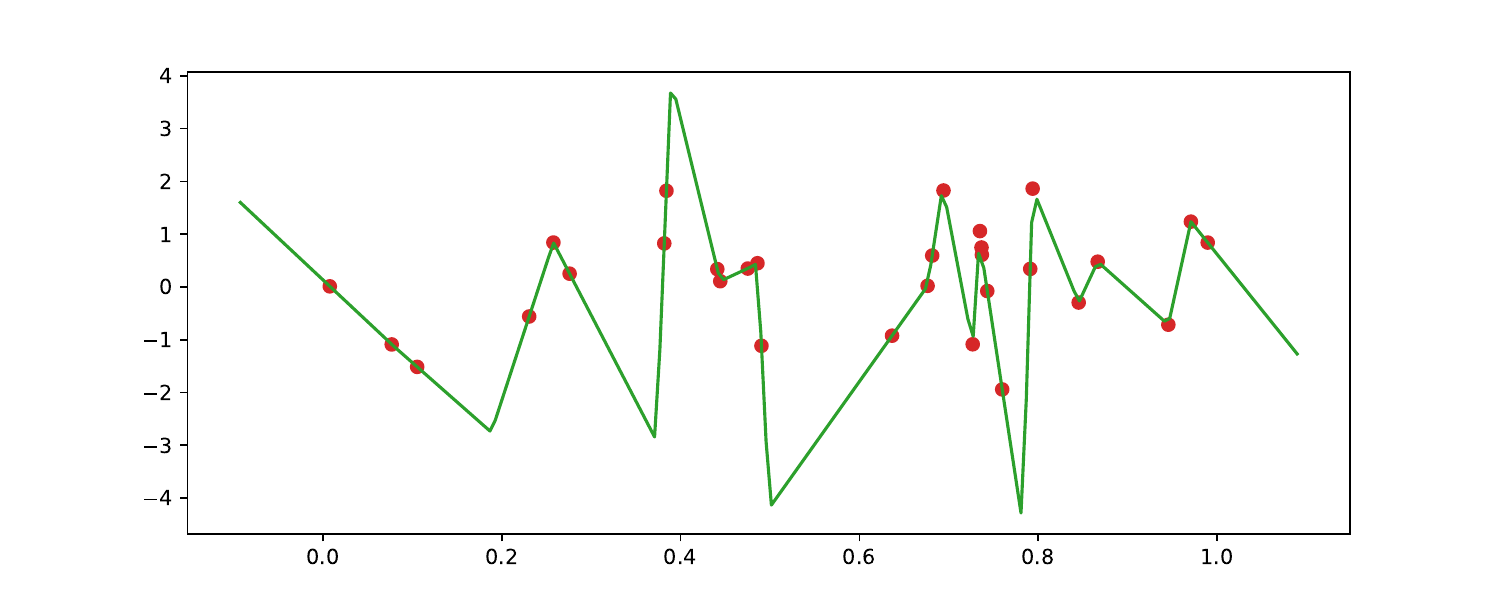}
   \vspace{-2mm}  \caption{The min-norm interpolator for $30$ random points with $f^* \equiv 0$ and $\normal(0,1)$ label noise.}
     \label{fig:min-norm-illustration2}
     \vspace{-5mm}
\end{center}
 \end{figure}
 
\section{Review: Min-Norm ReLU Networks} \label{sec:review_min_norm_nets}

Minimum-norm unbounded-width univariate two-layer ReLU networks have been extensively studied in recent years, starting with \citet{savarese2019infinite}, with the exact formulation \eqref{eq:main_problem} incorporating a skip connection due to \citet{boursier2023penalising}.   \citeauthor{boursier2023penalising}, following prior work, establish that a minimum of \eqref{eq:main_problem} exists, with a finite number of units, and that it is also unique.  

The problem in \eqref{eq:main_problem} is also equivalent to minimizing the ``representation cost'' $
    R(f) = \int_{\IR} \sqrt{1+x^2}|f''(x)| dx $
over all interpolators $f$, although we will not use this characterization explicitly in our analysis. Compared to \citet{savarese2019infinite}, where the representation cost is given by $\max\{\int |f''(x)| dx, |f'(-\infty)+f'(+\infty)|\}$, the weighting $\sqrt{1+x^2}$ is due to penalizing the biases $b_i$.  More significantly, the skip connection in \eqref{eq:net} avoids the ``fallback'' terms of $|f'(-\infty)+f'(+\infty)|$, which only kick-in in extreme cases (very few points or an extreme slope).  This simplified the technical analysis and presentation, while rarely affecting the solution.  

\citeauthor{boursier2023penalising} provide the following characterization of the minimizer\footnote{
If the biases $b_i$ are {\em not} included in the norm $\norm{\theta}$ in \eqref{eq:main_problem}, and this norm is replaced with $\sum_i(a_i^2+w_i^2)$, the modified problem admits multiple non-unique minimizers, including a linear spline (with modified behavior past the extreme points) \citep{savarese2019infinite}.  This set of minimizers was characterized by \citet{hanin2021ridgeless}.  Interestingly, the minimizer $\hat{f}_S$ of \eqref{eq:main_problem} (when the biases are included in the norm) is also a minimizer of the modified problem (without including the biases).  All our results apply also to the setting without penalizing the biases in the following sense: the upper bounds are valid for all minimizers, while some minimizer, namely $\hat{f}_S$ that we study, exhibits the lower bound behavior.} $\hat{f}_S$ of \eqref{eq:main_problem}, which we will rely on heavily:
\begin{lemma}[\citet{boursier2023penalising}]\label{lem:char}
    For $0 \leq x_1<x_2<\cdots<x_n$, the problem in \eqref{eq:main_problem} admits a unique minimizer of the form:
    \begin{equation} \label{eq:min-norm_interpolator}
        \hat{f}_S(x) = ax + b + \sum_{i=1}^{n-1} a_i (x - \tau_i)_+~,
    \end{equation}
    where $\tau_i \in [x_i,x_{i+1})$ for every $i \in [n-1]$. 
\end{lemma}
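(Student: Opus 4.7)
The plan is to reduce \eqref{eq:main_problem} to an infinite-dimensional convex program over signed Radon measures and then exploit LP-extreme-point reasoning. Absorbing the positive-homogeneity of the ReLU via the rescaling $(a_j,w_j,b_j)\mapsto (a_j/c, c w_j, c b_j)$, one checks that each hidden unit contributes at least $2|a_j|\sqrt{w_j^2+b_j^2}=2|a_j|\sqrt{1+\tau_j^2}$ to $\|\theta\|^2$ when parameterized by its knot $\tau_j=-b_j/w_j$. Letting $\mu$ be the signed measure $\sum_j \alpha_j\delta_{\tau_j}$ with signs absorbed from $\sign(w_j)$, \eqref{eq:main_problem} becomes: minimize $R(f)=\int_{\IR}\sqrt{1+t^2}\,d|\mu|(t)$ over all $f(x)=ax+b+\int (x-t)_+\,d\mu(t)$ with $f(x_i)=y_i$ for all $i$, which is the representation-cost formulation of Savarese et al./Ongie et al./Boursier et al. Existence then follows from weak-$\star$ lower semicontinuity and coercivity of $R$ applied to any minimizing sequence (feasibility is clear since piecewise-linear interpolants with knots at the data points have finite cost).

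The second step is to obtain finite support and locate the atoms. Because we have only $n$ linear interpolation constraints in the triple $(a,b,\mu)$, Bauer's maximum principle for generalized moment problems provides a minimizer whose $\mu^*$ is a finite sum of Dirac masses, and a rough degree-of-freedom count (affine part plus $k$ atoms against $n$ equalities) already suggests $k\le n-1$. To pin down the locations, I would first rule out atoms outside $[x_1,x_n)$: an atom at $\tau\ge x_n$ does not affect any $f(x_i)$ and can be deleted with a strict cost decrease; an atom at $\tau<x_1$ acts as an affine function $\alpha(x-\tau)$ on the data and can be absorbed into the unregularized $ax+b$, again strictly cheaper. For interior atoms, the key is a merging argument: if $\tau<\tau'$ both lie in a common interval $[x_i,x_{i+1})$ with weights $\alpha,\alpha'$, then on the data $(x_j)$ they act only through the pair $(\alpha+\alpha',\alpha\tau+\alpha'\tau')$, since $(x-\tau)_+=x-\tau$ for every $x\ge x_{i+1}$ and vanishes for every $x\le x_i$. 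When $\sign(\alpha)=\sign(\alpha')$, replacing them by a single atom at the weighted average $\tau^\star=(\alpha\tau+\alpha'\tau')/(\alpha+\alpha')$ with weight $\alpha+\alpha'$ preserves all data values and strictly decreases $R$ by strict convexity of $t\mapsto\sqrt{1+t^2}$.

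The opposite-sign case is the delicate one. If $\alpha+\alpha'=0$, removing both atoms alters $f$ only on $(\tau,\tau')\subset (x_i,x_{i+1})$, a region containing no data point, which is a strict improvement. If $\alpha+\alpha'\ne 0$ but the extrapolated $\tau^\star$ leaves $[x_i,x_{i+1})$, I would instead argue via a local KKT/duality certificate: any minimizing $\mu^*$ must be supported on the set where a dual function $\lambda$ (a linear combination of the constraint kernels $t\mapsto (x_i-t)_+$ and the constants $1,t$ dual to the free $a,b$) satisfies $|\lambda(t)|=\sqrt{1+t^2}$ with matching sign. Because $\lambda$ is piecewise linear with breakpoints at the $x_i$'s while $\sqrt{1+t^2}$ is strictly convex, on each interval $[x_i,x_{i+1})$ the equality can be attained by at most one value of $t$ and with a single sign, ruling out two opposite-sign atoms in the same interval. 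This yields at most one atom per interval, hence at most $n-1$ atoms total, and combined with the $n$ interpolation constraints and the 2 skip parameters $(a,b)$, the atoms and weights are determined uniquely, giving the claimed form and uniqueness.

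The main obstacle is precisely the opposite-sign case in the merging step: naive convexity only handles same-sign atoms, and a direct combinatorial argument has to contend with merged knots that may leave the host interval. The cleanest route is through the dual/KKT certificate sketched above, exploiting strict convexity of $\sqrt{1+t^2}$ against the piecewise-linear structure of any feasible dual variable; turning this into a rigorous statement (and checking the boundary cases $\tau=x_i$) is where the real work sits.
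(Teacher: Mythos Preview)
The paper does not prove this lemma at all: it is quoted verbatim from \citet{boursier2023penalising} and used as a black box, so there is no ``paper's own proof'' to compare against.  Your sketch is therefore an attempt to reconstruct the Boursier--Flammarion argument, and in broad strokes (pass to the measure formulation with weight $\sqrt{1+t^2}$, rule out mass outside $[x_1,x_n)$ via the unpenalized skip connection, and then use the strict convexity of $\sqrt{1+t^2}$ together with a piecewise-linear dual certificate to force at most one atom per data interval) it is the right shape and is indeed how that paper proceeds.

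There is one concrete slip worth flagging.  In the opposite-sign subcase $\alpha+\alpha'=0$ you assert that deleting both atoms only perturbs $f$ on $(\tau,\tau')$.  That is false: for $x\ge\tau'$ the two ReLUs contribute $\alpha(x-\tau)-\alpha(x-\tau')=\alpha(\tau'-\tau)\neq 0$, a constant shift that hits every data point $x_j$ with $j\ge i+1$ but none with $j\le i$.  This step function on the data cannot be absorbed into the free affine part $ax+b$ (which would shift \emph{all} $x_j$ equally), nor into a single ReLU in the interval (which acts linearly, not constantly, on $\{x_j:j\ge i+1\}$).  So the quick ``delete and done'' move is not available here, and you really do need the KKT/dual-touching argument you outline afterwards to exclude two opposite-sign atoms in a common interval---including the $\alpha+\alpha'=0$ case, not only the case where $\tau^\star$ escapes the interval.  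Once you accept that the dual certificate is doing all the work for opposite signs, the remainder of your plan (strictly convex $\sqrt{1+t^2}$ versus piecewise-linear dual, hence at most one tangency per interval with a determined sign) is exactly what is required, and matches the original proof.
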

As in the above characterization, it is very convenient to take the training points to be sorted.  Since the learned network $\hat{f}_S$ does not depend on the order of the points, we can always ``sort'' the points without changing anything.  And so, throughout the paper, we will always take the points to be sorted (formally, the results apply to i.i.d.~points, and the analysis is done after sorting these points).

\section{Warm up: tempered overfitting in linear-spline interpolation} \label{sec:linear-spline}

We start by analyzing tempered overfitting for linear-spline interpolation. Namely, we consider the piecewise-linear function obtained by connecting each pair of consecutive points in the dataset $S \sim \dist^n$ (see Figures~\ref{fig:min-norm-illustration1} and \ref{fig:splines-main_text} left) and analyze its test performance. 

\begin{figure}[t]
\begin{center}
\vspace{-5mm}
\hspace{1cm}
\includegraphics[scale=0.6]{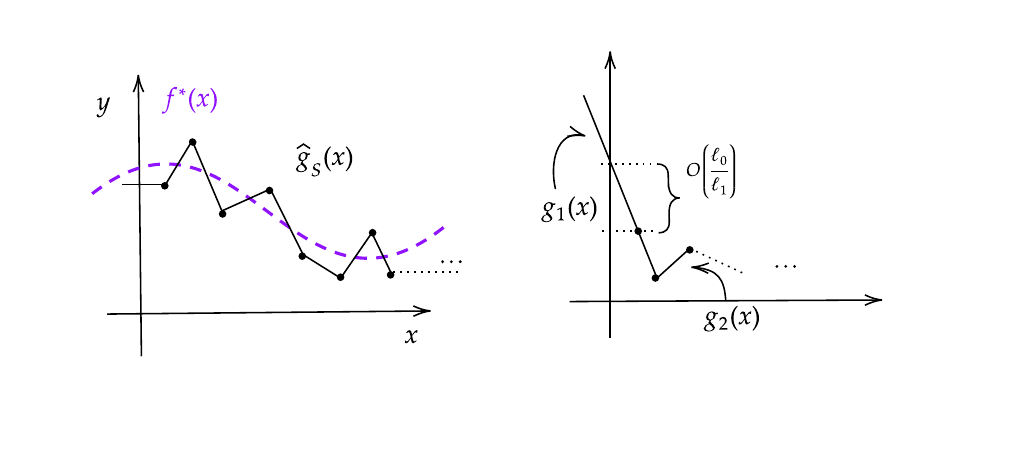}
\vspace{-0.9cm}
\caption{An illustration of the linear spline interpolator $\hat{g}_S$ (left), and of the variant $\hat{h}_S$ where linear pieces are extended beyond the endpoints (right).}
\vspace{-5mm}
\label{fig:splines-main_text}
\end{center}
\end{figure}

Given a dataset $S = \{(x_i,y_i)\}_{i=1}^n$, let $g_i:\IR \rightarrow \IR$ be the affine function joining the points $(x_{i},y_{i})$ and $(x_{i+1},y_{i+1})$. Thus, $g_i$ is the straight line joining the endpoints of the $i$-th interval. Then, the linear spline interpolator $\hat{g}_S: [0,1] \rightarrow \IR$ is given by
\begin{equation}\label{eq:linear-spline}
   \hat{g}_S(x) := y_1 \cdot \mathbf{1}\{ x < x_1\} + y_n \cdot \mathbf{1}\{x \geq x_{n}\}+\sum_{i=1}^{n-1} g_i(x)\cdot \mathbf{1}\{x\in [x_{i},x_{i+1})\}. 
\end{equation}
Note that in the intervals $[0,x_1]$ and $[x_n,1]$ the linear-spline $\hat{g}_S$ is defined to be constants that correspond to labels $y_1$ and $y_n$ respectively. The following theorem characterizes the asymptotic behavior of $\mathcal{L}_p(\hat{g}_S)$ for every $p \geq 1$:

\begin{theorem}\label{thm:linear-splines}
    Let $f^*$ be any Lipschitz function and $\dist$ be the distribution from \eqref{eq:define_D}. Let $S \sim \dist^{n}$, and $\hat{g}_S$ be the linear-spline interpolator (\eqref{eq:linear-spline}) w.r.t. the dataset $S$. Then, for any $p \geq 1$ there is a constant $C_p$ such that       
    \begin{equation*} \label{eq:spline-ub}
        \lim_{n \rightarrow \infty} \Pr_S  \left[\mathcal{R}_p(\hat{g}_S) \leq C_p\, \mathcal{L}_p(f^*) \right] = 1 \quad \text{ and }  \quad  \lim_{n \rightarrow \infty} \Pr_S  \left[\mathcal{L}_p(\hat{g}_S) \leq C_p\, \mathcal{L}_p(f^*) \right] = 1.
    \end{equation*}
\end{theorem}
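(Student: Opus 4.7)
I would first reduce $\mathcal{L}_p$ to $\mathcal{R}_p$: since $|\hat{g}_S(x)-y|^p\leq 2^{p-1}(|\hat{g}_S(x)-f^*(x)|^p+|\eps|^p)$, taking expectation over a fresh $(x,y)\sim\dist$ gives $\mathcal{L}_p(\hat{g}_S)\leq 2^{p-1}(\mathcal{R}_p(\hat{g}_S)+\mathcal{L}_p(f^*))$, so it suffices to bound $\mathcal{R}_p(\hat{g}_S)$. For the reconstruction error, on each interval $x\in[x_i,x_{i+1})$, writing $\lambda=(x_{i+1}-x)/(x_{i+1}-x_i)\in[0,1]$, I would decompose
\begin{equation*}
\hat{g}_S(x)-f^*(x)=\bigl[\lambda(f^*(x_i)-f^*(x))+(1-\lambda)(f^*(x_{i+1})-f^*(x))\bigr]+\bigl[\lambda\eps_i+(1-\lambda)\eps_{i+1}\bigr],
\end{equation*}
bound the first bracket by $L(x_{i+1}-x_i)$ by Lipschitzness, and bound the $p$-th power of the second by $\lambda|\eps_i|^p+(1-\lambda)|\eps_{i+1}|^p$ by convexity of $|\cdot|^p$. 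Applying $|a+b|^p\leq 2^{p-1}(|a|^p+|b|^p)$, integrating over each interval, and summing, the approximation contribution is at most a constant times $L^p\max_i(x_{i+1}-x_i)^p$, while the noise contribution is at most a constant times $T:=\sum_j W_j|\eps_j|^p$, where $W_j$ is the sum of the spacings on the two sides of $x_j$; thus $\sum_j W_j\leq 2$ and $\max_j W_j\leq 2\max_i(x_{i+1}-x_i)$. The approximation term and the boundary integrals $\int_0^{x_1}$ and $\int_{x_n}^{1}$ are $o_p(1)$ by the standard fact that $\max_i(x_{i+1}-x_i)=O_p(\log n/n)$ for $n$ i.i.d.\ uniform points, leaving $T$ as the main object to control.

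The real obstacle is showing $T=O_p(\mathcal{L}_p(f^*))$ assuming only $\Exp[|\eps|^p]<\infty$ (no higher-moment control, so a direct Chebyshev bound on $T$ may diverge). I would use a truncation argument, exploiting that after sorting the noises $\eps_1,\dots,\eps_n$ remain i.i.d.\ and independent of the order statistics $x_1<\dots<x_n$ (since $\eps$ is drawn independently of $x$). Fix $\eta>0$: choose $M=M(\eta)$ so that $\Exp[|\eps|^p\mathbf{1}\{|\eps|^p>M\}]\leq\eta\mathcal{L}_p(f^*)$, which is possible because $\mathcal{L}_p(f^*)<\infty$, and write $T=T_{\leq M}+T_{>M}$. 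Conditionally on the positions, $\Exp[T_{>M}]\leq 2\eta\mathcal{L}_p(f^*)$, so Markov's inequality gives $T_{>M}=O_p(\eta\mathcal{L}_p(f^*))$. For the bounded part, the conditional variance is $\sum_j W_j^2\,\mathrm{Var}[|\eps|^p\mathbf{1}\{|\eps|^p\leq M\}]\leq 2M\mathcal{L}_p(f^*)\max_j W_j$, which on the high-probability event $\max_j W_j=O(\log n/n)$ tends to zero for any fixed $M$; Chebyshev then pins $T_{\leq M}$ close to its conditional mean, which is at most $2\mathcal{L}_p(f^*)$. Fixing $\eta$ at an absolute constant yields $T\leq C\mathcal{L}_p(f^*)$ with probability $1-o(1)$ for some universal $C$.

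Combining the pieces, $\mathcal{R}_p(\hat{g}_S)\leq C_p\mathcal{L}_p(f^*)+o_p(1)$, and the $o_p(1)$ residual is absorbed by inflating $C_p$ by $1$, since the noisy-distribution assumption forces $\mathcal{L}_p(f^*)>0$; the $\mathcal{L}_p$ bound then follows from the initial reduction. The non-routine step is the truncation-based concentration of $T$; the rest (the approximation-error estimate, boundary corrections, and the passage from $\mathcal{L}_p$ to $\mathcal{R}_p$) is bookkeeping.
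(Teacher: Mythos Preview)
Your proposal is correct and takes a genuinely different route from the paper. The reduction $\mathcal{L}_p(\hat g_S)\leq 2^{p-1}(\mathcal{R}_p(\hat g_S)+\mathcal{L}_p(f^*))$, the interval-by-interval decomposition into an approximation piece (controlled by $L^p\max_i\ell_i^p=o_p(1)$) and a noise piece $T=\sum_j W_j|\eps_j|^p$, and the handling of the two boundary intervals all parallel the paper. The difference is in how $T$ is controlled.

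The paper invokes the exponential representation of uniform spacings, $(\ell_0,\dots,\ell_n)\sim(X_0/X,\dots,X_n/X)$ with $X_i$ i.i.d.\ $\mathrm{Exp}(1)$ and $X=\sum_i X_i$, replaces each $\ell_i$ by $\tilde\ell_i=X_i/(n+1)$, shows the swap is asymptotically negligible almost surely, and then applies the strong law of large numbers to the i.i.d.\ products $X_i|\eps_j|^p$ (after an odd/even split to break the one-step overlap). This yields almost-sure convergence of the upper bound to an explicit multiple of $\mathcal{L}_p(f^*)$. Your route---conditioning on the order statistics, using that the concomitant noises remain i.i.d.\ and independent of them, and then a truncation-plus-Chebyshev argument for the weighted sum---is more elementary in that it never uses the exponential representation and would go through for any marginal on $x$ with $\max_i\ell_i\to_p 0$; it delivers only convergence in probability and less explicit constants, but that is exactly what the statement requires.

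One small wrinkle in your write-up: the sentence ``Fixing $\eta$ at an absolute constant yields $T\leq C\mathcal{L}_p(f^*)$ with probability $1-o(1)$'' is not quite right as phrased. For a fixed $\eta$ (hence fixed $M$), Markov on $T_{>M}$ leaves a residual probability of order $\eta$ that does not vanish with $n$. The standard repair is the usual double limit: for any $\delta>0$ and any $\alpha>0$, first choose $\eta$ (and thus $M$) so small that $\Pr[T_{>M}>\tfrac{\delta}{2}\mathcal{L}_p(f^*)]\leq\alpha$ uniformly in $n$, then for that fixed $M$ let $n\to\infty$ so that Chebyshev pins $T_{\leq M}$ within $\tfrac{\delta}{2}\mathcal{L}_p(f^*)$ of its conditional mean $\leq 2\mathcal{L}_p(f^*)$; since $\alpha$ is arbitrary, $\limsup_n\Pr[T>(2+\delta)\mathcal{L}_p(f^*)]=0$. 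This is routine and does not affect the validity of your strategy.
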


The theorem shows that the linear-spline interpolator exhibits tempered behavior, namely, w.h.p. over $S$ the interpolator $\hat{g}_S$ performs like the predictor $f^*$, up to a constant factor. To understand why Theorem~\ref{thm:linear-splines} holds, note that for all $i \in [n-1]$ and $x \in [x_i,x_{i+1}]$ the linear-spline interpolator satisfies $\hat{g}_S(x) \in [\min\{y_i,y_{i+1}\}, \max\{y_i,y_{i+1}\}]$. Moreover, we have for all $i \in [n]$ that $|y_i-f^*(x_i)| = |\epsilon_i|$, where $\epsilon_i$ is the random noise. Using these facts, it is not hard to bound the expected population loss of $\hat{g}_S$ in each interval $[x_i,x_{i+1}]$, and by using the law of large numbers it is also possible to bound the probability (over $S$) that the loss in the 
domain $[0,1]$ is large. Thus, we can bound the $L_p$ loss both in expectation and in probability.

\paragraph{Delicate behavior of linear splines.} We now consider the following variant of the linear-spline interpolator:
\begin{equation}\label{eq:hhat}
   \hat{h}_S(x) := g_1(x) \cdot \mathbf{1}\{ x < x_1\} + g_{n-1}(x) \cdot \mathbf{1}\{x > x_{n}\}+ \hat{g}_S(x) \cdot \mathbf{1}\{x\in [x_{1},x_{n}]\}. 
\end{equation}
In words, $\hat{h}_S$ is exactly the same as $\hat{g}_S$ in the interval $[x_1,x_n]$, but it extends the linear pieces $g_1$ and $g_{n-1}$ beyond the endpoints $x_1$ and $x_n$ (respectively), as illustrated in Figure~\ref{fig:splines-main_text} (right). The interpolator $\hat{h}_S$ still exhibits tempered behavior in probability, similarly to $\hat{g}_S$. However, perhaps surprisingly, $\hat{h}_S$ is not tempered in expectation (see Appendix \ref{sec:delicate} for details). This delicate behavior of the linear-spline interpolator is important since in the next section we will show that the min-norm interpolator has a similar behavior to $\hat{h}_S$ in the intervals $[0,x_1],[x_n,1]$, and as a consequence, it is tempered with high probability but not in expectation.
\section{Min-norm interpolation with random data} \label{sec:main_results} 

In this section, we study the performance of the min-norm interpolator with random data. We first present some important properties of the min-norm interpolator in Section~\ref{sec:characterizing}. In Sections~\ref{sec:tempered-L1} and~\ref{sec:catastrophic} we use this characterization to study its performance.

\subsection{Characterizing the min-norm interpolator} \label{sec:characterizing}

Our goal is to give a characterization of the min-norm interpolator $\hat{f}_S(x)$ (\eqref{eq:min-norm_interpolator}), in terms of linear splines as defined in \eqref{eq:linear-spline}. Recall the definition of affine functions $g_1(x),\dots, g_{n-1}(x)$, which are piece-wise affine functions joining consecutive points. Let $\delta_i$ be the slope of the line $g_i(x)$, i.e. $\delta_i=g_i'(x)$. We denote $\delta_0:=\delta_1$ and $\delta_n:=\delta_{n-1}$. Then, we can define the sign of the curvature of the linear spline $\hat{g}_S(x)$ at each point.

\begin{definition}\label{def:discrete-curvature}
    For any $i\in [n]$,
      $$ \curv(x_i) = \begin{cases}
  +1  & \delta_i > \delta_{i-1} \\
  0 & \delta_i = \delta_{i+1} \\
  -1 & \delta_i < \delta_{i-1}
\end{cases}$$
\end{definition}

\begin{figure}[t]
\begin{center}
 \vspace{-5mm}   
\includegraphics[scale=0.6]{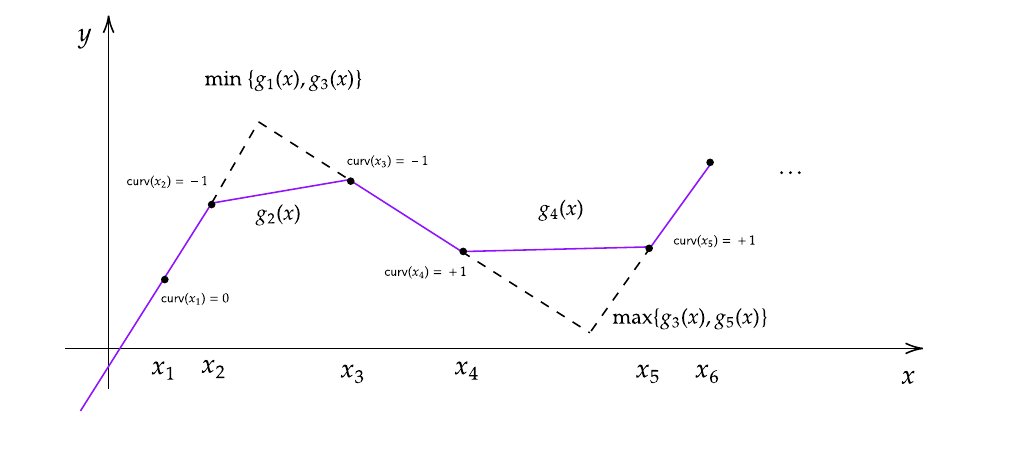}
\vspace{-0.5cm}
\caption{An illustration of the characterization of $\hat{f}_S$ from Lemma~\ref{lem:complete-description-property-main_text}.}
\label{fig:geometric-illustration-of-triangle-main_text}
\vspace{-3mm}
\end{center}
\end{figure}
Based on the curvature, the following lemma geometrically characterizes $\hat{f}_S$ in any interval $[x_i,x_{i+1})$, in terms of the linear pieces $g_{i-1},g_i,g_{i+1}$.

\begin{lemma}\label{lem:complete-description-property-main_text}
The function $\hat{f}_S$ can be characterized as follows:
        \begin{itemize}
        \item $\hat{f}_S(x) =g_1(x)$ for $x\in (-\infty, x_2)$;
        \item $\hat{f}_S(x) =g_{n-1}(x)$ for $x \in [x_{n-1},\infty)$;
        \item In each interval $[x_i,x_{i+1})$ for $i \in \{2, \dots n-2\}$, 
        \begin{enumerate}
            \item If $\curv(x_i)=\curv(x_{i+1})=+1$ then
        $$ \max\{g_{i-1}(x), g_{i+1}(x)\} \leq  \hat{f}_S (x) \leq g_i(x);$$
        \item  If $\curv(x_i)=\curv(x_{i+1})=-1$ then
        $$  \min\{g_{i-1}(x), g_{i+1}(x)\} \geq  \hat{f}_S (x) \geq  g_i(x);$$ 
        \item Else, i.e. either $\curv(x_i)=0$ or $\curv(x_{i+1})=0$ or $\curv(x_{i}) \neq \curv(x_{i+1})$,
        $$\hat{f}_S(x)=g_i(x).$$
        \end{enumerate}
        \end{itemize}
\end{lemma}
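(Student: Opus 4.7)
The plan is to use Lemma~\ref{lem:char} to parameterize $\hat{f}_S$ by its slopes and kink positions, then exploit the equivalent representation-cost formulation $R(\hat{f}_S) = \sum_i |a_i|\sqrt{1+\tau_i^2}$ together with minimality to pin down the local sign structure of the slope jumps; the three cases of the lemma will then fall out from elementary comparisons of affine functions. Writing $\hat{f}_S(x) = ax + b + \sum_{i=1}^{n-1} a_i(x - \tau_i)_+$ with $\tau_i \in [x_i, x_{i+1})$, let $\alpha_k$ denote the slope of $\hat{f}_S$ on $(\tau_k, \tau_{k+1})$ using the conventions $\tau_0 := -\infty$ and $\tau_n := +\infty$, so that $a_i = \alpha_i - \alpha_{i-1}$. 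Integrating the slope over $[x_j, x_{j+1}]$ and using $\hat{f}_S(x_j)=y_j$ yields the workhorse identity
\[
\alpha_{j-1}(\tau_j - x_j) + \alpha_j (x_{j+1} - \tau_j) = \delta_j (x_{j+1} - x_j),
\]
equivalently $\delta_j = \lambda_j \alpha_{j-1} + (1-\lambda_j)\alpha_j$ with $\lambda_j := (\tau_j - x_j)/(x_{j+1} - x_j) \in [0,1)$, expressing every secant slope as a convex combination of the two neighboring slopes of $\hat{f}_S$.

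For the boundary intervals, note that $\alpha_0$ enters only the $j=1$ interpolation identity and, among the cost terms, only through $|a_1|\sqrt{1+\tau_1^2}$; setting $\alpha_0 = \alpha_1$ is then feasible and zeros this cost, so optimality forces $a_1 = 0$ and hence $\alpha_0 = \alpha_1 = \delta_1$, giving $\hat{f}_S \equiv g_1$ on $(-\infty, x_2)$. The symmetric argument at the right endpoint gives $\hat{f}_S \equiv g_{n-1}$ on $[x_{n-1}, \infty)$. For interior $i \in \{2,\ldots,n-2\}$ the core claim is that $\sign(a_i)$ equals the common value of $\curv(x_i)$ and $\curv(x_{i+1})$ whenever these agree and are nonzero, and $a_i = 0$ otherwise. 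I would prove this by a perturbation argument: if some $a_i$ violates the claim, then a coordinated local adjustment of $\alpha_{i-1}, \alpha_i$ and of the neighboring kinks $\tau_{i-1}, \tau_i, \tau_{i+1}$ preserves all $n$ interpolation identities while strictly decreasing $\sum_k |a_k| \sqrt{1+\tau_k^2}$, contradicting minimality. The curvature condition at $x_i, x_{i+1}$ enters through the signs of $\delta_{i-1} - \alpha_{i-1}$ and $\alpha_i - \delta_{i+1}$, which dictate both the direction and the feasibility of the perturbation.

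Once the sign structure is fixed, the three cases follow easily. In Case~1, $a_i > 0$ combined with the workhorse identity gives $\alpha_{i-1} < \delta_i \leq \alpha_i$ (with equality on the right only if $\tau_i = x_i$): on $[x_i, \tau_i]$, $\hat{f}_S$ rises from $y_i$ with slope below $\delta_i$, so it lies below the secant $g_i$; on $[\tau_i, x_{i+1}]$ it reaches $y_{i+1}$ with slope at least $\delta_i$, again below $g_i$. The lower bounds $g_{i-1} \leq \hat{f}_S$ and $g_{i+1} \leq \hat{f}_S$ on $[x_i, x_{i+1}]$ follow from $\delta_{i-1} \leq \alpha_{i-1}$ and $\alpha_i \leq \delta_{i+1}$, which are consequences of the sign analysis applied at $\tau_{i-1}$ and $\tau_{i+1}$ together with the workhorse identity. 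Case~2 is the mirror image, and Case~3 is immediate: $a_i = 0$ and the identity force $\alpha_{i-1} = \alpha_i = \delta_i$, so $\hat{f}_S \equiv g_i$ on $[x_i, x_{i+1}]$. The main obstacle is the sign-structure step: constructing a feasibility-preserving perturbation that strictly reduces the cost requires coordinating simultaneous adjustments of the $\alpha$ and $\tau$ variables across three adjacent intervals, with particular care at edge cases like $\tau_i = x_i$ or $i \in \{2, n-2\}$; the rest reduces to routine comparisons of affine functions with matching endpoints and ordered slopes.
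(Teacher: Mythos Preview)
The paper's proof is short because it invokes a result you are effectively trying to re-derive: Lemma~\ref{lem:slopes-bound-left-right} (Lemma~3 of \citet{boursier2023penalising}), which says the incoming slope satisfies $s^*_i=\alpha_{i-1}\in[\min(\delta_{i-1},\delta_i),\max(\delta_{i-1},\delta_i)]$. With that bound in hand, all three cases fall out of your workhorse identity and affine comparisons essentially as you sketch---but the input is the slope bound at each data point, not the sign of $a_i$.

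Your intermediate sign-structure claim is false in the opposite-curvature subcase of Case~3. Take four points with $\delta_1<\delta_2$ and $\delta_2>\delta_3$; then $\curv(x_2)=+1$, $\curv(x_3)=-1$, and the min-norm interpolator is the linear spline, with $a_2=\delta_2-\delta_1>0$, not zero. What actually holds in this subcase is not $a_i=0$ but that the kink sits at the left endpoint: the slope bounds give $\alpha_{i-1}\le\delta_i$ and $\alpha_i\le\delta_i$, so your convex-combination identity $\delta_i=\lambda_i\alpha_{i-1}+(1-\lambda_i)\alpha_i$ forces either $\lambda_i=0$ (i.e.\ $\tau_i=x_i$ and $\alpha_i=\delta_i$) or $\alpha_{i-1}=\alpha_i=\delta_i$; either way $\hat{f}_S=g_i$ on $[x_i,x_{i+1})$. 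Your boundary argument has a parallel gap: ``setting $\alpha_0=\alpha_1$ is feasible'' would force $\alpha_1=\delta_1$ via the $j=1$ identity, but $\alpha_1$ is tied down by the $j=2$ identity and cannot be moved for free; the paper instead reads off $\alpha_0=\delta_1$ directly from the slope bound (using the convention $\delta_0:=\delta_1$), and the one-kink property then finishes. The repair for your route is to aim the perturbation argument at the slope-bound inequality itself (which is how \citet{boursier2023penalising} prove it), rather than at the stronger and incorrect claim $a_i=0$.
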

The lemma implies that $\hat{f}_S$ coincides with $\hat{g}_S$ except in an interval $[x_i,x_{i+1})$ where the curvature of the two points are both $+1$ or $-1$ (see Figure~\ref{fig:geometric-illustration-of-triangle-main_text}). Intuitively, this property captures the worst-case effect of the spikes and will be crucial in showing the tempered behavior of $\hat{f}_S$ w.r.t. $L_p$ for $p\in [1,2)$. However, this still does not imply that such spikes are necessarily formed. 

To this end, \citet[Lemma 8]{boursier2023penalising} characterized the situation under which indeed these spikes are formed. Roughly speaking, if the sign of the curvature changes twice within three points, then we get a spike. Formally, we identify special points from left to right recursively where the sign of the curvature changes.
\begin{definition}\label{def:cruv-changed}
    We define $n_1:=1$. Having defined the location of the special points $n_1,\dots, n_{i-1}$, we recursively define 
    $$n_i= \min \{ j>n_{i-1} : \curv(x_{j}) \neq \curv (x_{n_i})\}.$$
    If there is no such $n_{i-1} < j \leq n$ where $\curv(x_{j}) \neq \curv (x_{n_i})$, then $n_{i-1}$ is the location of the last special point.
\end{definition}

\begin{lemma}[\citet{boursier2023penalising}]\label{lem:sparsity-main_text}
    For any $k \geq 1$, if $\delta_{n_k -1 }  \neq  \delta_{n_k}$ and $n_{k+1}= n_k + 2$, then $\hat{f}_S$ has exactly one kink between $(x_{{n_k}-1}, x_{n_{k+1}})$. Moreover, if $\curv(x_{n_k})=\curv(x_{n_k+1})=-1$ then $\hat{f}_S(x)=\min\{g_{n_k-1}(x), g_{n_k+1}(x)\}$ in $[x_{n_k},x_{n_k+1})$.
\end{lemma}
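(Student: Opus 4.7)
The plan is to combine Lemma~\ref{lem:complete-description-property-main_text} with the structural form~\eqref{eq:min-norm_interpolator} from Lemma~\ref{lem:char} to analyze $\hat{f}_S$ on the four consecutive subintervals determined by $x_{n_k-1}, x_{n_k}, x_{n_k+1}, x_{n_k+2}$.

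First, I would identify the curvatures at the relevant indices. By Definition~\ref{def:cruv-changed}, all indices strictly between consecutive special points share the same curvature, so $\curv(x_{n_k-1}) = \curv(x_{n_{k-1}}) \neq \curv(x_{n_k})$, while the hypothesis $n_{k+1}=n_k+2$ gives $\curv(x_{n_k+1}) = \curv(x_{n_k})$ and $\curv(x_{n_k+2}) \neq \curv(x_{n_k+1})$. Applying case~3 of Lemma~\ref{lem:complete-description-property-main_text} on $[x_{n_k-1}, x_{n_k})$ and on $[x_{n_k+1}, x_{n_k+2})$ then yields $\hat{f}_S = g_{n_k-1}$ on the former interval and $\hat{f}_S = g_{n_k+1}$ on the latter, so in particular $\hat{f}_S$ is affine on both outer intervals.

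Second, I would use affineness on the outer intervals to pin down the kinks. By Lemma~\ref{lem:char}, the only possible kinks inside the window $(x_{n_k-1}, x_{n_k+2})$ are $\tau_{n_k-1}\in[x_{n_k-1},x_{n_k})$, $\tau_{n_k}\in[x_{n_k},x_{n_k+1})$, and $\tau_{n_k+1}\in[x_{n_k+1},x_{n_k+2})$. Since $\hat{f}_S$ is affine on each outer interval, any $\tau_i$ strictly inside an outer interval must carry coefficient $a_i=0$, leaving only $\tau_{n_k}$ as a potentially active kink strictly inside $(x_{n_k-1},x_{n_k+2})$, so there is at most one kink there. For existence, the assumption $\delta_{n_k-1}\neq\delta_{n_k}$ combined with $\curv(x_{n_k+1})=\curv(x_{n_k})\neq 0$ forces $\delta_{n_k-1},\delta_{n_k},\delta_{n_k+1}$ to be strictly monotone, so in particular $\delta_{n_k-1}\neq\delta_{n_k+1}$ and at least one slope change must occur. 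Hence exactly one kink lies in $(x_{n_k-1},x_{n_k+2})$.

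Finally, for the refinement when $\curv(x_{n_k})=\curv(x_{n_k+1})=-1$, I would specialize to $\delta_{n_k-1}>\delta_{n_k}>\delta_{n_k+1}$ and apply case~2 of Lemma~\ref{lem:complete-description-property-main_text}, which gives $g_{n_k}(x)\leq \hat{f}_S(x)\leq \min\{g_{n_k-1}(x),g_{n_k+1}(x)\}$ on $[x_{n_k},x_{n_k+1})$. The previous steps show that $\hat{f}_S$ has left-limit slope $\delta_{n_k-1}$ at $x_{n_k}$, right-limit slope $\delta_{n_k+1}$ at $x_{n_k+1}$, passes through $(x_{n_k},y_{n_k})$ and $(x_{n_k+1},y_{n_k+1})$, and has exactly one interior kink at some $\tau^*\in[x_{n_k},x_{n_k+1})$. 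These five conditions uniquely determine $\hat{f}_S$ to follow $g_{n_k-1}$ on $[x_{n_k},\tau^*]$ and $g_{n_k+1}$ on $[\tau^*,x_{n_k+1}]$, where $\tau^*$ is the intersection of the two lines. Because $\delta_{n_k-1}>\delta_{n_k+1}$, we have $g_{n_k-1}(x)<g_{n_k+1}(x)$ for $x<\tau^*$ and $g_{n_k-1}(x)>g_{n_k+1}(x)$ for $x>\tau^*$, so this piecewise description coincides with $\min\{g_{n_k-1}(x),g_{n_k+1}(x)\}$, as claimed.

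The main obstacle is the second step: ruling out ``boundary'' kinks at $x_{n_k}$ or $x_{n_k+1}$ with nonzero coefficient, which would change the slope exactly at a sample point without violating affineness on the outer intervals. The cleanest way to handle this is via the uniqueness of the min-norm interpolator in Lemma~\ref{lem:char}: any such boundary kink combined with an additional compensating kink could be merged into a single interior kink with strictly smaller representation cost $R(f)=\int\sqrt{1+x^2}|f''(x)|\,dx$, contradicting uniqueness unless the minimizer already has the single-interior-kink form derived above.
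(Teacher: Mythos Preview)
Your route is genuinely different from the paper's. The paper simply cites \citep[Lemma~8]{boursier2023penalising} for the ``exactly one kink'' statement, and then derives the shape by a one-line pigeonhole: with only two linear pieces available on $(x_{n_k-1},x_{n_k+2})$ and neither triple $(x_{n_k-1},x_{n_k},x_{n_k+1})$ nor $(x_{n_k},x_{n_k+1},x_{n_k+2})$ collinear, the break must fall in $[x_{n_k},x_{n_k+1}]$ and the two pieces are forced to be $g_{n_k-1}$ and $g_{n_k+1}$. You instead try to derive ``exactly one kink'' internally from Lemma~\ref{lem:complete-description-property-main_text} and Lemma~\ref{lem:char}, which is more ambitious and self-contained. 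Your first step (identifying curvatures and applying case~3 on both outer intervals to get $\hat f_S=g_{n_k-1}$ and $\hat f_S=g_{n_k+1}$ there) is correct, and your ``at least one kink'' argument is fine.

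The gap is exactly the obstacle you flag, and your proposed fix does not close it. Affineness on the outer intervals only forces $\tau_{n_k-1}$ and $\tau_{n_k+1}$ to sit at the sample points $x_{n_k-1}$ or $x_{n_k+1}$ (or carry zero weight); it does \emph{not} force $a_{n_k+1}=0$. So you could have two active kinks in $(x_{n_k-1},x_{n_k+2})$: one at $\tau_{n_k}\in[x_{n_k},x_{n_k+1})$ and one at $\tau_{n_k+1}=x_{n_k+1}$. Your merging argument claims this two-kink configuration has strictly larger cost $R(f)=\int\sqrt{1+x^2}|f''|$ than the single-kink one, but this is not automatic: in the $\curv=-1$ case the two slope jumps $\delta_{n_k-1}-\alpha$ and $\alpha-\delta_{n_k+1}$ have the same sign and sum to $\delta_{n_k-1}-\delta_{n_k+1}$, so the total variation of $f'$ is identical in both configurations. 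The cost difference comes entirely from the location weights $\sqrt{1+\tau^2}$, and to conclude strict inequality you would need a monotonicity or convexity argument relating $\tau_{n_k}(\alpha)$, $x_{n_k+1}$ and the intersection point $\tau^*$, which you have not supplied. This is precisely the content that the paper outsources to \citep{boursier2023penalising}; reproducing it requires real work beyond what you sketch. Once ``exactly one kink'' is in hand, your step~3 is correct, though the paper's non-collinearity argument gets to $\hat f_S=\min\{g_{n_k-1},g_{n_k+1}\}$ more directly.
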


\begin{figure}[t] 
\vspace{-5mm}
\begin{center}  
\hspace{1cm}\includegraphics[scale=0.65]{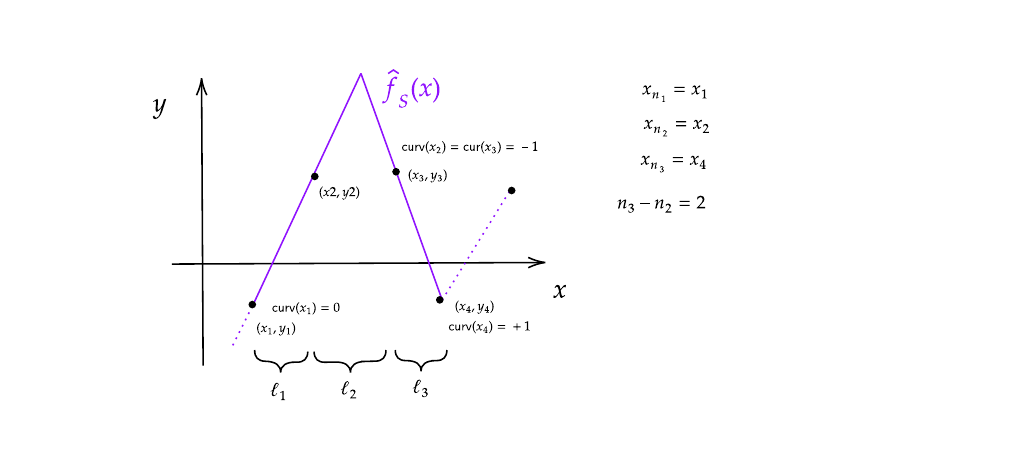}
\vspace{-8mm}
\caption{An illustration of the spike formed by Lemma~\ref{lem:sparsity-main_text}. Here, $x_2$ and $x_4$ are two consecutive special points with exactly one point in between. There must be exactly one kink in $(x_1,x_4)$. Thus, in $[x_2,x_3)$, the  interpolator $\hat{f}_S$ must be $\min\{g_1(x),g_3(x)\}$.}
\label{fig:bad-case-kink-main-text}
\vspace{-3mm}
\end{center}
\end{figure}

This is a slight variation of \citep[Lemma 8]{boursier2023penalising}, which we reprove in the appendix for completeness. See Figure~\ref{fig:bad-case-kink-main-text} for an illustration of the above lemma.
To show the catastrophic behavior of $\hat{f}_S$ for $p \geq 2$, we will consider events under which such configurations of points are formed. This will result in spikes giving catastrophic behavior. 

\subsection{Tempered overfitting for \texorpdfstring{$L_p$}{TEXT} with \texorpdfstring{$p \in [1,2)$}{TEXT}}\label{sec:tempered-L1}

We now show the tempered behavior of the minimal norm interpolator w.r.t. $L_p$ losses for $p \in [1,2)$.
\begin{theorem} \label{thm:tempered-for-ell_1}
Let $f^*$ be a Lipschitz function and $\dist$ be the distribution from \eqref{eq:define_D}. Sample $S\sim \dist^n$, and let $\hat{f}_S$ be the min-norm interpolator (\eqref{eq:min-norm_interpolator}). Then, for some universal constant $C > 0$, for any $p \in [1,2)$ we have 
\[
  \lim_{n \rightarrow \infty} \Pr_S \left[\mathcal{R}_p(\hat{f}_S) \leq \frac{C}{2-p} \cdot \,  \mathcal{L}_p(f^*) \right] = 1 \quad \text{and} \quad   \lim_{n \rightarrow \infty} \Pr_S \left[\mathcal{L}_p(\hat{f}_S) \leq \frac{C}{2-p} \cdot \,  \mathcal{L}_p(f^*) \right] = 1~.
\]
\end{theorem}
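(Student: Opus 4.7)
The plan is to reduce the bound on $\hat{f}_S$ to a bound on the linear-spline interpolator $\hat{g}_S$ (which is already controlled by Theorem~\ref{thm:linear-splines}) plus a ``spike correction'' term, and then to estimate the correction using the pointwise envelope from Lemma~\ref{lem:complete-description-property-main_text} together with sharp moment computations on uniform spacings. Since $\mathcal{L}_p(\hat{f}_S)\le 2^{p-1}(\mathcal{R}_p(\hat{f}_S)+\mathcal{L}_p(f^*))$ by the triangle inequality (as $\mathcal{L}_p(f^*)=\mathbb{E}|\epsilon|^p$), it suffices to prove the $\mathcal{R}_p$ bound. Writing $\hat{f}_S-f^*=(\hat{f}_S-\hat{g}_S)+(\hat{g}_S-f^*)$, another triangle step gives $\mathcal{R}_p(\hat{f}_S)\le 2^{p-1}(\Delta_S+\mathcal{R}_p(\hat{g}_S))$ with $\Delta_S:=\int_0^1|\hat{f}_S-\hat{g}_S|^p\,dx$, and Theorem~\ref{thm:linear-splines} already disposes of $\mathcal{R}_p(\hat{g}_S)$; hence the real work is bounding $\Delta_S$.

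By Lemma~\ref{lem:complete-description-property-main_text}, $\hat{f}_S$ agrees with $\hat{g}_S=g_i$ on every ``non-bad'' interval $[x_i,x_{i+1})$, so only the bad intervals (those with $\curv(x_i)=\curv(x_{i+1})\in\{+1,-1\}$) contribute to $\Delta_S$. On such an interval the lemma squeezes $\hat{f}_S$ between $g_i$ and $\min(g_{i-1},g_{i+1})$ (resp.\ $\max(g_{i-1},g_{i+1})$), yielding the pointwise tent envelope
\[
|\hat{f}_S(x)-g_i(x)|\;\le\;\min\bigl(A_i(x-x_i),\,B_i(x_{i+1}-x)\bigr),
\]
with $A_i=|\delta_i-\delta_{i-1}|$ and $B_i=|\delta_i-\delta_{i+1}|$. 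Integrating the tent gives
\[
\int_{x_i}^{x_{i+1}}|\hat{f}_S-g_i|^p\,dx\;\le\;\frac{w_i^{p+1}}{p+1}\Bigl(\tfrac{A_iB_i}{A_i+B_i}\Bigr)^p\;\le\;\frac{w_i^{p+1}}{p+1}\min(A_i,B_i)^p,
\]
where $w_i=x_{i+1}-x_i$. Using $y_j=f^*(x_j)+\epsilon_j$ and that $f^*$ is $L$-Lipschitz, a short calculation gives $\min(A_i,B_i)\lesssim L+\mathcal{E}_i\bigl(1/w_i+1/\max(w_{i-1},w_{i+1})\bigr)$, where $\mathcal{E}_i=\max(|\epsilon_{i-1}|,|\epsilon_i|,|\epsilon_{i+1}|)$; the crucial point is that the ``worse'' of the two neighboring spacings is replaced by the better one through taking the min.

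Plugging back, the per-interval contribution is $O\bigl(L^p w_i^{p+1}+\mathcal{E}_i^p w_i+\mathcal{E}_i^p w_i (w_i/\max(w_{i-1},w_{i+1}))^p\bigr)$. The first two sums are easy: $\sum L^p w_i^{p+1}=O(L^p/n^p)$ and $\sum\mathcal{E}_i^p w_i$ is essentially a Riemann average of $|\epsilon|^p$, converging in probability to $\mathbb{E}|\epsilon|^p=\mathcal{L}_p(f^*)$. The delicate term is
\[
T\;:=\;\sum_i\mathcal{E}_i^p\,w_i\Bigl(\tfrac{w_i}{\max(w_{i-1},w_{i+1})}\Bigr)^p\;\approx\;\frac{1}{n}\sum_i\mathcal{E}_i^p\,\frac{Y_i^{p+1}}{M_i^p},
\]
where $Y_i\approx n w_i$ are (approximately) iid $\mathrm{Exp}(1)$ and $M_i=\max(Y_{i-1},Y_{i+1})$. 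Since $Y_i$ is independent of $M_i$ (disjoint indices) and $M_i$ has density $2(1-e^{-m})e^{-m}\sim 2m$ at the origin,
\[
\mathbb{E}[M_i^{-p}]\;=\;\int_0^\infty m^{-p}\cdot 2(1-e^{-m})e^{-m}\,dm\;\asymp\;\frac{1}{2-p},
\]
finite precisely for $p<2$, and this is exactly where the $1/(2-p)$ constant in the theorem originates. Consequently $\mathbb{E}[T]\le \tfrac{C}{2-p}\cdot\mathbb{E}|\epsilon|^p=\tfrac{C}{2-p}\cdot\mathcal{L}_p(f^*)$, and since $T$ is a normalized sum of weakly dependent terms with controlled variance, a Chebyshev-type estimate upgrades this to the desired bound in probability.

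The main obstacle is the final concentration step: uniform-spacings are Dirichlet rather than iid, so the heuristic $w_i\approx Y_i/n$ needs to be justified via the gamma representation $w_i=E_i/\sum_j E_j$ (with $E_j$ iid $\mathrm{Exp}(1)$) or by a direct Dirichlet moment computation; the noise $\epsilon$ may be unbounded (Gaussian), requiring truncation at some slowly growing $T_n$ while showing the truncated mass contributes only negligibly; and the local dependencies between $M_i$ and $M_{i+2}$ (which share $Y_i$) must be controlled, e.g.\ by grouping into nearly-independent blocks. The boundary intervals $[0,x_1]$ and $[x_n,1]$, where Lemma~\ref{lem:complete-description-property-main_text} shows $\hat{f}_S$ extends the linear pieces $g_1,g_{n-1}$ exactly as in $\hat{h}_S$ of Section~\ref{sec:linear-spline}, are handled separately using tail bounds on the extreme spacings together with tail bounds on $\delta_1,\delta_{n-1}$; here one only gets a high-probability bound, not an in-expectation one, matching the delicate behavior already observed for $\hat{h}_S$.
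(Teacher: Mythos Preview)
Your proposal is correct and follows essentially the paper's approach: the same reduction from $\mathcal{L}_p$ to $\mathcal{R}_p$, the same use of Lemma~\ref{lem:complete-description-property-main_text} to sandwich the spike by $g_{i-1},g_{i+1}$, the same exponential representation of spacings, the same key moment $\mathbb{E}[\max(X_{i-1},X_{i+1})^{-p}]\asymp 1/(2-p)$ (the paper's Claim~\ref{clm:maxA,B}), the same mod-$4$ blocking to obtain i.i.d.\ subsums, and the same separate high-probability treatment of $[0,x_1]\cup[x_n,1]$. Two small remarks: your $\mathcal{E}_i$ must also include $|\epsilon_{i+2}|$ (since $B_i$ depends on $\delta_{i+1}$), and the ``Chebyshev-type estimate'' you propose would fail as stated because the summands $\mathcal{E}_i^p Y_i^{p+1}/M_i^p$ have infinite variance for every $p\ge 1$---the paper (and the blocking alternative you already mention) instead applies the strong law of large numbers after splitting into four i.i.d.\ groups, which only needs a finite first moment.
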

The proof of Theorem~\ref{thm:tempered-for-ell_1} builds on Lemma~\ref{lem:complete-description-property-main_text}, which implies that in an interval $[x_i,x_{i+1})$, a spike in the interpolator $\hat{f}_S$ must be bounded within the triangle obtained from $g_{i-1},g_i,g_{i+1}$ (see Figure~\ref{fig:geometric-illustration-of-triangle-main_text}). Analyzing the population loss of $\hat{f}_S$ requires considering the distribution of the spacings between data points. Let $\ell_0,\ldots,\ell_n$ be such that 
\begin{equation} \label{eq:define_ell}
    \forall i \in [n-1]\;\; \ell_i = x_{i+1}-x_i, \;\;\;\; \ell_0 = x_1, \;\;\;\; \ell_n = 1-x_n~.
\end{equation}
Prior works \citep{alagar1976distribution,pinelis2019order} established that  
\begin{equation} \label{eq:ell_distribution}
    \left( \ell_0,\dots,\ell_n \right) \sim  \left( \frac{X_0}{X}, \dots, \frac{X_n}{X} \right), \text{ where } X_0,\dots,X_n \iid \text{Exp}(1), \text{ and } X:=\sum_{i=0}^{n} X_i~.
\end{equation}
The slopes of the affine functions $g_{i-1},g_{i+1}$ are roughly $\frac{1}{\ell_{i-1}},\frac{1}{\ell_{i+1}}$, where $\ell_j$ are the lengths as defined in \eqref{eq:define_ell}. Hence, the spike's height is proportional to $\frac{\ell_i}{\max\{\ell_{i-1},\ell_{i+1}\}}$. As a result, the $L_p$ loss in the interval $[x_i,x_{i+1}]$ is roughly 
\[
    \left(\frac{\ell_i}{\max\{\ell_{i-1},\ell_{i+1}\}} \right)^p \cdot \ell_i = \frac{\ell_i^{p+1}}{\max\{\ell_{i-1},\ell_{i+1}\}^p}~.
\]
Using the distribution of the $\ell_j$'s given in \eqref{eq:ell_distribution}, we can bound the expectation of this expression. Then, similarly to our discussion on linear splines in Section~\ref{sec:linear-spline}, in the range $[x_1,x_n]$ we can bound the $L_p$ loss both in expectation and in probability. In the intervals $[0,x_1]$ and $[x_n,1]$, the expected loss is infinite (similarly to the interpolator $\hat{h}_S$ in \eqref{eq:hhat}), and therefore we have 
\begin{equation} \label{eq:L1_infinite_exp}
    \Exp_S \left[\mathcal{L}_p(\hat{f}_S) \right] = \infty~.
\end{equation}
Still, we can get a high probability upper bound for the $L_p$ loss in the intervals $[0,x_1]$ and $[x_n,1]$. Thus, we get a bound on $L_p$ loss in the entire domain $[0,1]$ w.h.p. We note that the definition of tempered overfitting in \citet{mallinar2022benign} considers only the expectation. Theorem~\ref{thm:tempered-for-ell_1} and \eqref{eq:L1_infinite_exp} imply that in our setting we have tempered behavior in probability but not in expectation, which demonstrates that tempered behavior is delicate.

We also show a lower bound for the population loss $L_p$ which matches the upper bound from Theorem~\ref{thm:tempered-for-ell_1} (up to a constant factor independent of $p$). The lower bound holds already for $f^* \equiv 0$ and Gaussian label noise.

\begin{theorem} \label{thm:tempered-for-ell_1-lower_bound}
Let $f^* \equiv 0$, consider label noise $\epsilon \sim \normal(0,\sigma^2)$ for some constant $\sigma>0$, and let
$\dist$ be the corresponding distribution from \eqref{eq:define_D}. Let $S\sim \dist^n$, and let $\hat{f}_S$ be the min-norm interpolator (\eqref{eq:min-norm_interpolator}). Then, for some universal constant $c > 0$, for any $p \in [1,2)$ we have 
\[
   \lim_{n \rightarrow \infty} \Pr_S \left[\mathcal{R}_p(\hat{f}_S) \geq \frac{c}{2-p} \cdot \mathcal{L}_p(f^*)  \right] = 1 \quad \text{and} \quad \lim_{n \rightarrow \infty} \Pr_S \left[\mathcal{L}_p(\hat{f}_S) \geq \frac{c}{2-p} \cdot \mathcal{L}_p(f^*)  \right] = 1~.
\]
\end{theorem}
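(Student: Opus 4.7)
The plan is to show that a linear number of intervals $[x_i,x_{i+1}]$ carry substantial spikes forced by Lemma~\ref{lem:sparsity-main_text}, whose aggregate $L_p$ contribution concentrates at $\Theta(\sigma^p/(2-p))$. Since $f^*\equiv 0$ makes $\mathcal{L}_p(f^*)=\mathbb{E}|\epsilon|^p$ a constant depending only on $\sigma$ and $p$, and since symmetry of the Gaussian noise gives the pointwise bound $\mathbb{E}_\epsilon|a-\epsilon|^p\ge \tfrac{1}{2}\max(|a|^p,\mathbb{E}|\epsilon|^p)$ (proved by averaging $|a-\epsilon|^p$ and $|a+\epsilon|^p$ and using $\max(|a-\epsilon|,|a+\epsilon|)\ge\max(|a|,|\epsilon|)$), we have $\mathcal{L}_p(\hat f_S)\ge\tfrac{1}{2}\mathcal{R}_p(\hat f_S)$. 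Hence it suffices to establish the stated lower bound for $\mathcal{R}_p(\hat f_S)=\mathbb{E}_x|\hat f_S(x)|^p$.

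For each index $i\in\{2,\dots,n-2\}$, I would define the \emph{spike event} $\mathcal{S}_i$ by three conditions: (a) the curvature pattern $(\curv(x_{i-1}),\curv(x_i),\curv(x_{i+1}),\curv(x_{i+2}))=(+1,-1,-1,+1)$, which by Definition~\ref{def:cruv-changed} forces $i=n_k,\,i+2=n_{k+1}$ for some $k$, so that Lemma~\ref{lem:sparsity-main_text} applies and $\hat f_S(x)=\min\{g_{i-1}(x),g_{i+1}(x)\}$ on $[x_i,x_{i+1})$; (b) the noise conditions $|\epsilon_i-\epsilon_{i-1}|\ge\sigma$ and $|\epsilon_{i+2}-\epsilon_{i+1}|\ge\sigma$ with signs consistent with (a); and (c) the spacing condition $\ell_i\ge 2\max(\ell_{i-1},\ell_{i+1})$. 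A Gaussian anti-concentration argument on the noises involved, combined with \eqref{eq:ell_distribution}, gives $\Pr[\mathcal{S}_i]\ge c_0>0$ uniformly in $i$ and in $n$. On $\mathcal{S}_i$, a direct intersection of the lines $g_{i-1}$ and $g_{i+1}$ yields an excursion with peak height
\[
h_i \;=\; \frac{\ell_i(\delta_{i-1}-\delta_i)(\delta_i-\delta_{i+1})}{\delta_{i-1}-\delta_{i+1}} \;\ge\; c_1\,\frac{\sigma\,\ell_i}{\max(\ell_{i-1},\ell_{i+1})},
\]
where the lower bound uses the noise conditions (giving $|\delta_{i-1}|\gtrsim\sigma/\ell_{i-1}$ and $|\delta_{i+1}|\gtrsim\sigma/\ell_{i+1}$) together with the identity $ab/(a+b)\ge\tfrac{1}{2}\min(a,b)$ and $\min(1/x,1/y)=1/\max(x,y)$. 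Because the tent has width $\Theta(\ell_i)$, the interval $[x_i,x_{i+1}]$ contributes at least $c_2\,\sigma^p\,\ell_i^{p+1}/\max(\ell_{i-1},\ell_{i+1})^p$ to $\mathcal{R}_p(\hat f_S)$.

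Using \eqref{eq:ell_distribution} with $X\approx n$ (by concentration of $X/n\to 1$), so $\ell_j\approx X_j/n$, and restricting to a spread-out subset $I\subset\{2,\dots,n-2\}$ of size $\Theta(n)$ with pairwise gaps $\ge 7$ — ensuring disjoint $X_j$'s and $\epsilon_j$'s and hence independence of the per-index contributions — a single-index computation gives
\[
\mathbb{E}\!\left[\mathbf{1}_{\mathcal{S}_i}\,\sigma^p\,\frac{\ell_i^{p+1}}{\max(\ell_{i-1},\ell_{i+1})^p}\right] \;\asymp\; \frac{\sigma^p}{n}\,\mathbb{E}\!\left[\frac{X_i^{p+1}}{\max(X_{i-1},X_{i+1})^p}\right] \;\asymp\; \frac{\sigma^p}{n(2-p)},
\]
because the max-of-two-Exp(1) density is $2e^{-m}(1-e^{-m})\sim 2m$ near $0$, making $\int_0^\infty m^{-p}\cdot 2e^{-m}(1-e^{-m})\,dm$ behave like $\int_0^1 2m^{1-p}\,dm=2/(2-p)$ (with the large-$m$ tail bounded uniformly in $p\in[1,2)$). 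Summing over $I$ yields $\mathbb{E}[\sum_{i\in I}(\cdot)]\ge c_3\sigma^p/(2-p)$.

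Finally, since individual contributions have infinite variance for $p\ge 1$ (the integrand $1/m^{2p}$ is not integrable near $0$), I would upgrade to high probability via truncation: set $Z_i=\min(\text{contribution}_i,M_n)$ for a threshold $M_n=n^{\alpha}$ with small $\alpha>0$, chosen so that $M_n=o(\mathbb{E}[\sum_i Z_i])$ while $\mathbb{E}[\sum_i Z_i]\ge c_3\sigma^p/(2(2-p))$ (since the mass of the integral $\int m^{1-p}\,dm$ is spread across scales, a polynomial threshold preserves a constant fraction of the mean). Then $\text{Var}(\sum_i Z_i)\le M_n\cdot\mathbb{E}[\sum_i Z_i]=o(\mathbb{E}[\sum_i Z_i]^2)$, so Chebyshev yields $\sum_i Z_i\ge c_3\sigma^p/(4(2-p))$ w.h.p., and $\mathcal{R}_p(\hat f_S)\ge\sum_i Z_i$ inherits the bound. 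The main technical obstacle I anticipate is establishing the uniform lower bound $\Pr[\mathcal{S}_i]\ge c_0>0$: the curvature events depend intricately on linear combinations of Gaussian noises weighted by random (and possibly extreme) spacings, and one must verify that conditioning on the spacings still leaves enough freedom in $\epsilon_{i-1},\dots,\epsilon_{i+2}$ for the signs and magnitudes to jointly produce the pattern $(+1,-1,-1,+1)$ and the required slope lower bounds; a secondary subtlety is the mild coupling of the $\ell_j$'s via the common normalizer $X$ in \eqref{eq:ell_distribution}, handled by conditioning on the event $X\in[n/2,2n]$.
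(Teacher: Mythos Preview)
Your approach is essentially the paper's: partition into $\Theta(n)$ disjoint blocks of consecutive points, force the curvature pattern $(+1,-1,-1,+1)$ so that Lemma~\ref{lem:sparsity-main_text} pins $\hat f_S=\min\{g_{i-1},g_{i+1}\}$ on the middle interval, lower-bound the resulting tent by $\ell_i^{p+1}/(\ell_{i-1}+\ell_{i+1})^p$ (equivalently $\max(\ell_{i-1},\ell_{i+1})^p$), and use the $\Gamma(2-p)\asymp 1/(2-p)$ calculation via the exponential representation \eqref{eq:ell_distribution}.

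Two points where the paper's execution is cleaner than yours. First, the obstacle you flag---getting $\Pr[\mathcal{S}_i]\ge c_0$ when the curvature signs depend on both noises and spacings---is dissolved by pinning the noises in explicit boxes: the paper takes $\epsilon_{i,1},\epsilon_{i,3},\epsilon_{i,4},\epsilon_{i,6}\in[1,2]$ and $\epsilon_{i,2},\epsilon_{i,5}\le -1$, together with $\ell_{i,3}\ge\ell_{i,2},\ell_{i,4}$. Under these constraints the pattern $(+1,-1,-1,+1)$ at the four middle points is \emph{forced deterministically}, so $\Pr[A_i]$ factors as a product of fixed Gaussian probabilities times a fixed probability on three independent exponentials---a universal constant with no delicate conditioning. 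This also makes the correlation between $\mathbf{1}[A_i]$ and the integrand transparent: conditioning on $X_{i,2},X_{i,4}\le X_{i,3}$ only increases $\Exp[X_{i,3}^{p+1}/(X_{i,2}+X_{i,4})^p]$ by monotonicity, so $\Exp[\mathbf{1}[A_i]\cdot(\text{contribution})]\ge c_0\cdot\Gamma(p+2)\Gamma(2-p)$ with no further work. Second, your truncation-plus-Chebyshev step is unnecessary: for $p<2$ the per-block contribution has finite mean (that is exactly the $\Gamma(2-p)$ computation), so Kolmogorov's strong law of large numbers applied to the i.i.d.\ block contributions gives the high-probability statement directly, with the normalization $X/(n+1)\to 1$ handled by the same device you mention.
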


The proof of the above lower bound follows similar arguments to the proof of catastrophic overfitting for $p \geq 2$, which we will discuss in the next section.

\subsection{Catastrophic overfitting for \texorpdfstring{$L_p$}{TEXT} with \texorpdfstring{$p \geq 2$}{TEXT}} \label{sec:catastrophic}

Next, we prove that for the $L_p$ loss with $p \geq 2$, the min-norm interpolator exhibits catastrophic overfitting. We prove this result already for $f^* \equiv 0$ and Gaussian label noise:

\begin{theorem}\label{thm:ell2-catastrophic}
    Let $f^* \equiv 0$, consider label noise $\epsilon \sim \normal(0,\sigma^2)$ for some constant $\sigma>0$, and let $\dist$ be the corresponding distribution from \eqref{eq:define_D}.
    Let $S \sim \dist^n$, and let $\hat{f}_S$ be the min-norm interpolator (\eqref{eq:min-norm_interpolator}).
    Then, for any $p \geq 2$ and $b>0$,
    \[
       \lim_{n \to \infty} \Pr_S\left[{\mathcal{R}_p(\hat{f}_S) > b} \right] = 1 \quad \text{and} \quad \lim_{n \to \infty} \Pr_S\left[{\mathcal{L}_p(\hat{f}_S) > b} \right] = 1~.
    \]
\end{theorem}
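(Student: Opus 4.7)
Since Lyapunov's inequality gives $\mathcal{R}_p(\hat{f}_S)^{1/p} \geq \mathcal{R}_2(\hat{f}_S)^{1/2}$ for all $p \geq 2$, and Minkowski's inequality gives $\mathcal{L}_p(\hat{f}_S)^{1/p} \geq \mathcal{R}_p(\hat{f}_S)^{1/p} - \Exp[|\epsilon|^p]^{1/p}$, it suffices to prove $\mathcal{R}_2(\hat{f}_S) \to \infty$ in probability. The plan is to lower-bound $\mathcal{R}_2$ by a sum of heavy-tailed per-interval contributions, each obtained via a tall spike forced by \Cref{lem:sparsity-main_text}.

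For each index $i$, define a pure noise event $E_i$ prescribing an ``M-shape'' pattern on $\epsilon_{i-2},\dots,\epsilon_{i+3}$ --- for example, $\epsilon_{i-1},\epsilon_{i+2}\in[-2\sigma,-3\sigma/2]$, $\epsilon_i\in[3\sigma/2,2\sigma]$, $\epsilon_{i+1}\in[\sigma,3\sigma/2]$, and $|\epsilon_{i-2}|,|\epsilon_{i+3}|\le \sigma/4$. By Gaussianity, $\Pr[E_i]\ge c_0>0$. On $E_i$ the slopes satisfy $\delta_{i-2}<0<\delta_{i-1}$, $\delta_i<0$, $\delta_{i+1}<0<\delta_{i+2}$; provided $\ell_{i-1}$ and $\ell_{i+1}$ are small enough (which happens automatically in the heavy-tail regime where the spike is large) the magnitudes $|\delta_{i\pm 1}|$ dominate $|\delta_{i\pm 2}|$ and the curvature pattern becomes $(\curv(x_{i-1}),\curv(x_i),\curv(x_{i+1}),\curv(x_{i+2}))=(+1,-1,-1,+1)$. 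Then $i$ and $i+2$ are consecutive special points (\Cref{def:cruv-changed}), so \Cref{lem:sparsity-main_text} yields $\hat{f}_S(x)=\min\{g_{i-1}(x),g_{i+1}(x)\}$ on $[x_i, x_{i+1})$. Computing the intersection of $g_{i-1}$ and $g_{i+1}$, the interpolator peaks at height $\gtrsim \sigma\cdot \ell_i/(\ell_{i-1}+\ell_{i+1})$ above $f^*\equiv 0$, yielding
\[
\int_{x_i}^{x_{i+1}} \hat{f}_S(x)^2\, dx \;\gtrsim\; \frac{\sigma^2\,\ell_i^3}{(\ell_{i-1}+\ell_{i+1})^2}\cdot \mathbf{1}\{E_i\}.
\]

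Switching to $\ell_j=X_j/X$ from \eqref{eq:ell_distribution} with $X_j\iid\mathrm{Exp}(1)$ and $X\asymp n$, the contribution is $\gtrsim (1/n)\cdot R_i\cdot \mathbf{1}\{E_i\}$ with $R_i := \sigma^2 X_i^3/(X_{i-1}+X_{i+1})^2$. Using $\Pr[X_{i-1}+X_{i+1}\leq s]\asymp s^2$ for small $s$ and $\Exp[X_i^3]<\infty$, a direct integration gives the Pareto-1 tail $\Pr[R_i>t]\asymp 1/t$, so $\Exp[R_i]=\infty$. Crucially, conditional on $R_i$ being above a sufficiently large absolute constant, one has $X_{i-1}+X_{i+1}\lesssim 1$, which forces the slope inequalities $\delta_{i-1}>\delta_i>\delta_{i+1}$ on $E_i$, making the spike lower bound rigorous precisely on the heavy-tail regime. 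Restricting to a sparse subset $\mathcal{I}=\{7,13,19,\dots\}$ makes $\{R_i\mathbf{1}\{E_i\}\}_{i\in\mathcal{I}}$ mutually independent, since each depends on disjoint $(X_{i-1},X_i,X_{i+1})$ and $(\epsilon_{i-2},\dots,\epsilon_{i+3})$. Truncating at level $n$ gives $\Exp[(R_i\wedge n)\mathbf{1}\{E_i\}]\asymp \log n$ and $\operatorname{Var}((R_i\wedge n)\mathbf{1}\{E_i\})\lesssim n$. Chebyshev's inequality applied to $\sum_{i\in\mathcal{I}}(R_i\wedge n)\mathbf{1}\{E_i\}$ --- mean of order $n\log n$, variance of order $n^2$ --- shows the sum is $\gtrsim n\log n$ with probability $\to 1$, whence $\mathcal{R}_2(\hat{f}_S)\gtrsim \log n\to\infty$.

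\textbf{Main obstacle.} The critical case is $p=2$: a single extreme spike, even at the probable minimum $\ell_{i-1}+\ell_{i+1}\asymp n^{-3/2}$, contributes only $\Theta(\sigma^2)$ to $\mathcal{R}_2$ because $\ell_i^3/(\ell_{i-1}+\ell_{i+1})^2=\Theta(1/n)\cdot n=\Theta(1)$ at that scale. So no single-event argument succeeds; the divergence must be extracted by aggregating the Pareto-1 tails of $\Theta(n)$ per-interval contributions, producing the $\log n$ growth characteristic of partial sums of i.i.d.\ Cauchy-tailed variables. The technical care lies in designing $E_i$ so that (i) the curvature pattern is forced deterministically on the high-$R_i$ portion of the event (by using the dominance of short-spacing slopes over typical-spacing ones), and (ii) $E_i$ and $R_i$ factor cleanly over a sparse independent index set so that Chebyshev applies.
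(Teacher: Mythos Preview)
Your approach is essentially the paper's: force the curvature pattern $(+1,-1,-1,+1)$ via a noise configuration so that \Cref{lem:sparsity-main_text} produces a spike, obtain the per-interval contribution $\ell_i^3/(\ell_{i-1}+\ell_{i+1})^2$, pass to the exponential representation \eqref{eq:ell_distribution}, exploit the Pareto-$1$ tail of $X_i^3/(X_{i-1}+X_{i+1})^2$, and aggregate over $\Theta(n)$ independent blocks. Two differences are worth flagging. First, the paper folds the spacing constraint ($\ell_{i,3}\ge \ell_{i,2},\ell_{i,4}$) directly into its event $A_i$, so the curvature pattern holds deterministically on $A_i$; it then truncates at a \emph{fixed} level $a$ and invokes the strong law of large numbers, which is simpler but only gives $\mathcal{R}_p\to\infty$ without a rate. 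Your truncation at level $n$ plus Chebyshev is more work but yields the quantitative $\mathcal{R}_2\gtrsim\log n$. Second, your claim ``conditional on $R_i$ above a large constant, $X_{i-1}+X_{i+1}\lesssim 1$'' is not literally true (large $R_i$ can come from large $X_i$), and the slope comparison that actually needs a spacing hypothesis on your $E_i$ is $\delta_{i+1}<\delta_i$ (to get $\curv(x_{i+1})=-1$), not $|\delta_{i\pm1}|$ versus $|\delta_{i\pm2}|$ as you wrote --- those curvatures are already forced by sign alone. The clean fix is the paper's: absorb $\{X_{i+1}\le cX_i\}$ into the event from the start; on the complement one has $R_i\le c'X_i$ with finite mean, so your $\log n$ computation is unaffected.
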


To obtain some intuition on this phenomenon, consider the first four samples $(x_1,y_1),\ldots,(x_4,y_4)$, and let $\ell_i$ be the lengths of the intervals as defined in \eqref{eq:define_ell}. We show that with constant probability, 
the configuration of the labels of these samples satisfies certain properties, which are illustrated in Figure~\ref{fig:bad-case-kink-main-text}.
In this case, Lemma~\ref{lem:sparsity-main_text} implies that in the interval $[x_2,x_3]$ the interpolator $\hat{f}_S$ is equal to $\min\{g_1(x),g_3(x)\}$, where $g_1$ (respectively, $g_3$) is the affine function that connects $x_1,x_2$ (respectively, $x_3,x_4$). Now, as can be seen in the figure, in this ``unfortunate configuration'' the interpolator $\hat{f}_S$ spikes above $f^* \equiv 0$ in the interval $[x_2,x_3]$, and the spike's height is proportional to $\frac{\ell_2}{\max\{\ell_1,\ell_3\}}$. As a result, the $L_p$ loss in the interval $[x_2,x_3]$ is roughly $\frac{\ell_2^{p+1}}{\max\{\ell_1,\ell_3\}^p}$. Using \eqref{eq:ell_distribution}, we can show that 
$\Exp_S \left[ \frac{\ell_2^{p+1}}{\max\{\ell_1,\ell_3\}^p} \right] = \infty$ for any $p \geq 2$.

We then divide divide the $n$ samples in $S$ into $\Theta(n)$ disjoint subsets and consider the events that labels are such that the $4$ middle points exhibit an ``unfortunate configuration'' as described above. Using the fact that we have $\Theta(n)$ such subsets and the losses in these subsets are only mildly correlated, we are able to prove that $\hat{f}_S$ exhibits a catastrophic behavior also in probability.

We note that the proof of Theorem~\ref{thm:tempered-for-ell_1-lower_bound} follows similar arguments, except that when $p<2$ the expectation of the $L_p$ loss in each subset with an ``unfortunate configuration'' is finite, and hence we get a finite lower bound.

\section{Min-norm interpolation with samples on the grid} \label{sec:grid}

In this section, we analyze the population loss of the min-norm interpolator, when the $n$ data-points in $S$ are uniformly spaced, instead of i.i.d. uniform sampling considered in the previous sections. Namely, consider
the training set $S=\{(x_i,y_i): i \in [n] \}$, where
\begin{equation} \label{eq:grid-data}
    x_i=\frac{i}{n} \hspace{2mm} \text{and} \hspace{2mm} y_i=f^*(x_i)+\eps_i \; 
    \text{ for i.i.d. noise } \; \eps_i~. 
\end{equation}
Note that the randomness in $S$ is only in the label noises $\epsilon_i$. 
It can be interpreted as a \emph{non-adaptive active learning} setting, where the learner can actively choose the training points, and then observe noisy measurements at these points, and the query points are selected on an equally spaced grid. We show that in this situation the min-norm interpolator exhibits tempered overfitting with respect to any ${L}_p$ loss:
\begin{theorem}\label{thm:grid}
 Let $f^*$ be any Lipschitz function. For the size-$n$ dataset $S$ given by \eqref{eq:grid-data}, let $\hat{f}_S$ be the min-norm interpolator (\eqref{eq:min-norm_interpolator}). Then for any $p\geq 1$, there is a constant $C_p$ such that
 \[
    \lim_{n \rightarrow \infty} \Pr_S \left[\mathcal{R}_p(\hat{f}_S) \leq C_p \, \mathcal{L}_p(f^*)\right] = 1 \quad \text{and} \quad \lim_{n \rightarrow \infty} \Pr_S \left[\mathcal{L}_p(\hat{f}_S) \leq C_p \, \mathcal{L}_p(f^*)\right] = 1~.
 \]
\end{theorem}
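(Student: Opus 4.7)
The plan is to exploit the equal grid spacing $\ell_i = 1/n$ to obtain a pointwise bound on $|\hat{f}_S(x) - f^*(x)|$ on each small interval, then sum and apply the law of large numbers. The intuition from Section~\ref{sec:main_results} is that on the grid the spike ratio $\ell_i / \max(\ell_{i-1}, \ell_{i+1})$ that drove the catastrophic behavior in the random case is identically $1$, so any ``spike'' of $\hat{f}_S$ can only be a constant multiple of the local noise fluctuations. Without loss of generality I assume $\mathcal{L}_p(f^*) = \Exp|\epsilon|^p < \infty$, since otherwise the conclusion is vacuous.

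The first step applies Lemma~\ref{lem:complete-description-property-main_text} to each interior interval $[x_i, x_{i+1})$ with $2\le i\le n-2$. A short case analysis of the three sub-cases in that lemma yields the uniform sandwich bound $|\hat{f}_S(x) - g_i(x)| \le \max\bigl(|g_i(x) - g_{i-1}(x)|,\,|g_i(x) - g_{i+1}(x)|\bigr)$: in Case~1 the inequalities $\max(g_{i-1},g_{i+1})\le \hat f_S\le g_i$ give $g_i-\hat f_S \le g_i-\max(g_{i-1},g_{i+1}) \le \max(g_i-g_{i-1},\,g_i-g_{i+1})$; Case~2 is symmetric and Case~3 is trivial. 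Because $g_i$ and $g_{i-1}$ agree at $x_i$ with value $y_i$ and have slopes $n(y_{i+1}-y_i)$ and $n(y_i-y_{i-1})$, on the length-$1/n$ interval $[x_i,x_{i+1}]$ their difference is at most $|y_{i+1}-2y_i+y_{i-1}|$, and analogously for $g_i$ vs.\ $g_{i+1}$. Writing $y_j=f^*(x_j)+\epsilon_j$ and using Lipschitzness of $f^*$ to absorb the pure-$f^*$ discrete second differences into an $O(L/n)$ term, combined with $|g_i(x)-f^*(x)| \le \max(|\epsilon_i|,|\epsilon_{i+1}|)+L/n$ on $[x_i,x_{i+1}]$, I get the pointwise estimate
\[
|\hat{f}_S(x) - f^*(x)| \;\le\; C \max_{j\in\{i-1,i,i+1,i+2\}} |\epsilon_j| \;+\; \frac{CL}{n}
\]
for an absolute constant $C$.

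Next I integrate this pointwise bound. Raising to the $p$-th power via $(a+b)^p\le 2^{p-1}(a^p+b^p)$ and bounding $\max$ by the sum, the contribution of $[x_i,x_{i+1}]$ to $\mathcal{R}_p(\hat{f}_S)$ is at most $\frac{C_p}{n}\sum_{j=i-1}^{i+2}|\epsilon_j|^p + O(n^{-(p+1)})$. Summing over interior $i$, each $|\epsilon_j|^p$ appears in at most four such bounds, so $\mathcal{R}_p(\hat{f}_S)\le \frac{4C_p}{n}\sum_{j=1}^n |\epsilon_j|^p + O(n^{-p}) + (\textnormal{boundary})$. The boundary intervals $[0,x_1]$ and anything forced by the first/second bullets of Lemma~\ref{lem:complete-description-property-main_text} to equal $g_1$ or $g_{n-1}$ contribute at most $O\bigl((|\epsilon_1|^p+|\epsilon_2|^p+|\epsilon_{n-1}|^p+|\epsilon_n|^p+1)/n\bigr)$, which tends to $0$ in probability once $\Exp|\epsilon|^p<\infty$. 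The weak law of large numbers then gives $\frac{1}{n}\sum_j|\epsilon_j|^p \to \Exp|\epsilon|^p = \mathcal{L}_p(f^*)$ in probability, yielding the claimed $\mathcal{R}_p$ bound. The $\mathcal{L}_p$ bound follows immediately from $|\hat{f}_S(x)-y|^p \le 2^{p-1}\bigl(|\hat{f}_S(x)-f^*(x)|^p + |\epsilon|^p\bigr)$ and taking expectation over $(x,y)$.

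The main obstacle is the uniform sandwich bound of the first step --- that is where the geometric content of Lemma~\ref{lem:complete-description-property-main_text} is actually used --- but once it is in place the rest is routine. The conceptual point is that the cancellation which fails in the random-spacing setting, namely that $\ell_i/\max(\ell_{i-1},\ell_{i+1})$ has unbounded $p$-th moment for $p\ge 2$, becomes an identity on the grid; the local noise magnitude, averaged via the LLN, is then the \emph{only} quantity controlling $\mathcal{R}_p(\hat{f}_S)$, explaining why no catastrophic phase appears in the grid design.
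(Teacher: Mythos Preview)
Your proposal is correct and follows essentially the same approach as the paper's proof: both use the characterization from Lemma~\ref{lem:complete-description-property-main_text} to obtain a pointwise bound of the form $|\hat{f}_S(x)-f^*(x)| \le C\max_{j\in\{i-1,\dots,i+2\}}|\epsilon_j| + O(L/n)$ on each interior interval (the paper goes through its corollary Lemma~\ref{lem:bound on |f^-f*|} rather than directly, but the resulting estimate is the same up to constants), handle the boundary pieces $[0,x_2)$ and $[x_{n-1},1]$ separately as $O(1/n)$ contributions, and then sum and invoke the law of large numbers for $\frac{1}{n}\sum_j|\epsilon_j|^p$. The only cosmetic difference is that you bound $|\hat{f}_S-g_i|$ first and then add $|g_i-f^*|$, whereas the paper bounds $|\hat{f}_S-f^*|$ in one step via the $\max/\min$ of the $|g_j-f^*|$; both routes yield the same final inequality and constant $C_p$ of the same order.
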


An intuitive explanation is as follows. Since the points are uniformly spaced, whenever spikes are formed, they can at most reach double the height without the spikes. Thus, the population loss of $\hat{f}_S(x)$ becomes worse but only by a constant factor. 
We remark that in this setting the min-norm interpolator exhibits tempered overfitting both in probability (as stated in Theorem~\ref{thm:grid}) and in expectation. From Theorem~\ref{thm:grid} we conclude that the catastrophic behavior for $L_p$ with $p \geq 2$ shown in Theorem~\ref{thm:ell2-catastrophic}
stems from the non-uniformity in the lengths of the intervals $[x_i,x_{i+1}]$, which occurs when the $x_i$'s are drawn at random. 


\bibliography{iclr2024_conference}

\begin{thebibliography}{56}
\providecommand{\natexlab}[1]{#1}
\providecommand{\url}[1]{\texttt{#1}}
\expandafter\ifx\csname urlstyle\endcsname\relax
  \providecommand{\doi}[1]{doi: #1}\else
  \providecommand{\doi}{doi: \begingroup \urlstyle{rm}\Url}\fi

\bibitem[Alagar(1976)]{alagar1976distribution}
V.~S. Alagar.
\newblock The distribution of the distance between random points.
\newblock \emph{Journal of Applied Probability}, 13\penalty0 (3):\penalty0
  558--566, 1976.

\bibitem[Bartlett and Long(2021)]{bartlett2021failures}
P.~L. Bartlett and P.~M. Long.
\newblock Failures of model-dependent generalization bounds for least-norm
  interpolation.
\newblock \emph{The Journal of Machine Learning Research}, 22\penalty0
  (1):\penalty0 9297--9311, 2021.

\bibitem[Bartlett et~al.(2020)Bartlett, Long, Lugosi, and
  Tsigler]{bartlett2020benignpnas}
P.~L. Bartlett, P.~M. Long, G.~Lugosi, and A.~Tsigler.
\newblock Benign overfitting in linear regression.
\newblock \emph{Proceedings of the National Academy of Sciences}, 117\penalty0
  (48):\penalty0 30063--30070, 2020.

\bibitem[Beaglehole et~al.(2022)Beaglehole, Belkin, and
  Pandit]{beaglehole2022kernel}
D.~Beaglehole, M.~Belkin, and P.~Pandit.
\newblock Kernel ridgeless regression is inconsistent for low dimensions.
\newblock \emph{arXiv preprint arXiv:2205.13525}, 2022.

\bibitem[Belkin et~al.(2018)Belkin, Hsu, and
  Mitra]{belkin2018overfittingperfectfitting}
M.~Belkin, D.~J. Hsu, and P.~Mitra.
\newblock Overfitting or perfect fitting? {R}isk bounds for classification and
  regression rules that interpolate.
\newblock In \emph{Advances in Neural Information Processing Systems
  (NeurIPS)}, 2018.

\bibitem[Belkin et~al.(2020)Belkin, Hsu, and Xu]{belkin2020two}
M.~Belkin, D.~Hsu, and J.~Xu.
\newblock Two models of double descent for weak features.
\newblock \emph{SIAM Journal on Mathematics of Data Science}, 2\penalty0
  (4):\penalty0 1167--1180, 2020.

\bibitem[Boursier and Flammarion(2023)]{boursier2023penalising}
E.~Boursier and N.~Flammarion.
\newblock Penalising the biases in norm regularisation enforces sparsity.
\newblock \emph{arXiv preprint arXiv:2303.01353}, 2023.

\bibitem[Cao et~al.(2021)Cao, Gu, and Belkin]{cao2021benign}
Y.~Cao, Q.~Gu, and M.~Belkin.
\newblock Risk bounds for over-parameterized maximum margin classification on
  sub-gaussian mixtures.
\newblock In \emph{Advances in Neural Information Processing Systems
  (NeurIPS)}, 2021.

\bibitem[Cao et~al.(2022)Cao, Chen, Belkin, and Gu]{cao2022benign}
Y.~Cao, Z.~Chen, M.~Belkin, and Q.~Gu.
\newblock Benign overfitting in two-layer convolutional neural networks.
\newblock \emph{arXiv preprint arXiv:2202.06526}, 2022.

\bibitem[Chatterji and Long(2021)]{chatterji2020linearnoise}
N.~S. Chatterji and P.~M. Long.
\newblock Finite-sample analysis of interpolating linear classifiers in the
  overparameterized regime.
\newblock \emph{Journal of Machine Learning Research}, 22\penalty0
  (129):\penalty0 1--30, 2021.

\bibitem[Chatterji et~al.(2021)Chatterji, Long, and
  Bartlett]{chatterji2021interplay}
N.~S. Chatterji, P.~M. Long, and P.~L. Bartlett.
\newblock The interplay between implicit bias and benign overfitting in
  two-layer linear networks.
\newblock \emph{arXiv preprint arXiv:2108.11489}, 2021.

\bibitem[Chinot and Lerasle(2020)]{chinot2020robustness}
G.~Chinot and M.~Lerasle.
\newblock On the robustness of the minimum $\ell_2$ interpolator.
\newblock \emph{arXiv preprint arXiv:2003.05838}, 2020.

\bibitem[Chizat and Bach(2020)]{chizat2020implicit}
L.~Chizat and F.~Bach.
\newblock Implicit bias of gradient descent for wide two-layer neural networks
  trained with the logistic loss.
\newblock In \emph{Conference on Learning Theory (COLT)}, 2020.

\bibitem[Cover and Hart(1967)]{cover1967nearest}
T.~Cover and P.~Hart.
\newblock Nearest neighbor pattern classification.
\newblock \emph{IEEE transactions on information theory}, 13\penalty0
  (1):\penalty0 21--27, 1967.

\bibitem[Debarre et~al.(2022)Debarre, Denoyelle, Unser, and
  Fageot]{debarre2022sparsest}
T.~Debarre, Q.~Denoyelle, M.~Unser, and J.~Fageot.
\newblock Sparsest piecewise-linear regression of one-dimensional data.
\newblock \emph{Journal of Computational and Applied Mathematics},
  406:\penalty0 114044, 2022.

\bibitem[Deming and Colcord(1935)]{deming1935minimum}
W.~E. Deming and C.~G. Colcord.
\newblock The minimum in the gamma function.
\newblock \emph{Nature}, 135\penalty0 (3422):\penalty0 917--917, 1935.

\bibitem[Donhauser et~al.(2022)Donhauser, Ruggeri, Stojanovic, and
  Yang]{donhauser2022fastrates}
K.~Donhauser, N.~Ruggeri, S.~Stojanovic, and F.~Yang.
\newblock Fast rates for noisy interpolation require rethinking the effect of
  inductive bias.
\newblock In \emph{International Conference on Machine Learning (ICML)}, 2022.

\bibitem[Ergen and Pilanci(2021)]{ergen2021convex}
T.~Ergen and M.~Pilanci.
\newblock Convex geometry and duality of over-parameterized neural networks.
\newblock \emph{Journal of machine learning research}, 2021.

\bibitem[Frei et~al.(2022)Frei, Chatterji, and Bartlett]{frei2022benign}
S.~Frei, N.~S. Chatterji, and P.~L. Bartlett.
\newblock Benign overfitting without linearity: Neural network classifiers
  trained by gradient descent for noisy linear data.
\newblock In \emph{Conference on Learning Theory (COLT)}, 2022.

\bibitem[Frei et~al.(2023)Frei, Vardi, Bartlett, and Srebro]{frei2023benign}
S.~Frei, G.~Vardi, P.~L. Bartlett, and N.~Srebro.
\newblock Benign overfitting in linear classifiers and leaky relu networks from
  kkt conditions for margin maximization.
\newblock \emph{arXiv preprint arXiv:2303.01462}, 2023.

\bibitem[Ghosh and Belkin(2022)]{ghosh2022universal}
N.~Ghosh and M.~Belkin.
\newblock A universal trade-off between the model size, test loss, and training
  loss of linear predictors.
\newblock \emph{arXiv preprint arXiv:2207.11621}, 2022.

\bibitem[Hanin(2021)]{hanin2021ridgeless}
B.~Hanin.
\newblock Ridgeless interpolation with shallow relu networks in $1 d $ is
  nearest neighbor curvature extrapolation and provably generalizes on
  lipschitz functions.
\newblock \emph{arXiv preprint arXiv:2109.12960}, 2021.

\bibitem[Hastie et~al.(2019)Hastie, Montanari, Rosset, and
  Tibshirani]{hastie2019surprises}
T.~Hastie, A.~Montanari, S.~Rosset, and R.~J. Tibshirani.
\newblock Surprises in high-dimensional ridgeless least squares interpolation.
\newblock \emph{arXiv preprint arXiv:1903.08560}, 2019.

\bibitem[Hastie et~al.(2020)Hastie, Montanari, Rosset, and
  Tibshirani]{hastie2020surprises}
T.~Hastie, A.~Montanari, S.~Rosset, and R.~J. Tibshirani.
\newblock Surprises in high-dimensional ridgeless least squares interpolation.
\newblock \emph{Preprint, arXiv:1903.08560}, 2020.

\bibitem[Ju et~al.(2020)Ju, Lin, and Liu]{ju2020overfitting}
P.~Ju, X.~Lin, and J.~Liu.
\newblock Overfitting can be harmless for basis pursuit, but only to a degree.
\newblock \emph{Advances in Neural Information Processing Systems},
  33:\penalty0 7956--7967, 2020.

\bibitem[Koehler et~al.(2021)Koehler, Zhou, Sutherland, and
  Srebro]{koehler2021uniform}
F.~Koehler, L.~Zhou, D.~J. Sutherland, and N.~Srebro.
\newblock Uniform convergence of interpolators: Gaussian width, norm bounds,
  and benign overfitting.
\newblock \emph{arXiv preprint arXiv:2106.09276}, 2021.

\bibitem[Kornowski et~al.(2023)Kornowski, Yehudai, and
  Shamir]{kornowski2023tempered}
G.~Kornowski, G.~Yehudai, and O.~Shamir.
\newblock From tempered to benign overfitting in relu neural networks.
\newblock \emph{arXiv preprint arXiv:2305.15141}, 2023.

\bibitem[Kou et~al.(2023)Kou, Chen, Chen, and Gu]{kou2023benign}
Y.~Kou, Z.~Chen, Y.~Chen, and Q.~Gu.
\newblock Benign overfitting for two-layer relu networks.
\newblock \emph{arXiv preprint arXiv:2303.04145}, 2023.

\bibitem[Lai et~al.(2023)Lai, Xu, Chen, and Lin]{lai2023generalization}
J.~Lai, M.~Xu, R.~Chen, and Q.~Lin.
\newblock Generalization ability of wide neural networks on $\mathbb{R}$.
\newblock \emph{arXiv preprint arXiv:2302.05933}, 2023.

\bibitem[Liang and Recht(2021)]{liang2021interpolating}
T.~Liang and B.~Recht.
\newblock Interpolating classifiers make few mistakes.
\newblock \emph{arXiv preprint arXiv:2101.11815}, 2021.

\bibitem[Mallinar et~al.(2022)Mallinar, Simon, Abedsoltan, Pandit, Belkin, and
  Nakkiran]{mallinar2022benign}
N.~Mallinar, J.~B. Simon, A.~Abedsoltan, P.~Pandit, M.~Belkin, and P.~Nakkiran.
\newblock Benign, tempered, or catastrophic: A taxonomy of overfitting.
\newblock \emph{arXiv preprint arXiv:2207.06569}, 2022.

\bibitem[Manoj and Srebro(2023)]{manoj2023interpolation}
N.~S. Manoj and N.~Srebro.
\newblock Interpolation learning with minimum description length.
\newblock \emph{arXiv preprint arXiv:2302.07263}, 2023.

\bibitem[Mei and Montanari(2022)]{mei2022generalization}
S.~Mei and A.~Montanari.
\newblock The generalization error of random features regression: Precise
  asymptotics and the double descent curve.
\newblock \emph{Communications on Pure and Applied Mathematics}, 75\penalty0
  (4):\penalty0 667--766, 2022.

\bibitem[Misiakiewicz(2022)]{misiakiewicz2022spectrum}
T.~Misiakiewicz.
\newblock Spectrum of inner-product kernel matrices in the polynomial regime
  and multiple descent phenomenon in kernel ridge regression.
\newblock \emph{arXiv preprint arXiv:2204.10425}, 2022.

\bibitem[Montanari et~al.(2020)Montanari, Ruan, Sohn, and
  Yan]{montanari2020maxmarginasymptotics}
A.~Montanari, F.~Ruan, Y.~Sohn, and J.~Yan.
\newblock The generalization error of max-margin linear classifiers:
  High-dimensional asymptotics in the overparametrized regime.
\newblock \emph{Preprint, arXiv:1911.01544}, 2020.

\bibitem[Mulayoff et~al.(2021)Mulayoff, Michaeli, and
  Soudry]{mulayoff2021implicit}
R.~Mulayoff, T.~Michaeli, and D.~Soudry.
\newblock The implicit bias of minima stability: A view from function space.
\newblock \emph{Advances in Neural Information Processing Systems},
  34:\penalty0 17749--17761, 2021.

\bibitem[Muthukumar et~al.(2020)Muthukumar, Vodrahalli, Subramanian, and
  Sahai]{muthukumar2020harmless}
V.~Muthukumar, K.~Vodrahalli, V.~Subramanian, and A.~Sahai.
\newblock Harmless interpolation of noisy data in regression.
\newblock \emph{IEEE Journal on Selected Areas in Information Theory}, 2020.

\bibitem[Muthukumar et~al.(2021)Muthukumar, Narang, Subramanian, Belkin, Hsu,
  and Sahai]{muthukumar2021classification}
V.~Muthukumar, A.~Narang, V.~Subramanian, M.~Belkin, D.~Hsu, and A.~Sahai.
\newblock Classification vs regression in overparameterized regimes: Does the
  loss function matter?
\newblock \emph{Journal of Machine Learning Research}, 22\penalty0
  (222):\penalty0 1--69, 2021.

\bibitem[Negrea et~al.(2020)Negrea, Dziugaite, and Roy]{negrea2020defense}
J.~Negrea, G.~K. Dziugaite, and D.~Roy.
\newblock In defense of uniform convergence: Generalization via derandomization
  with an application to interpolating predictors.
\newblock In \emph{International Conference on Machine Learning}, pages
  7263--7272, 2020.

\bibitem[Neyshabur et~al.(2014)Neyshabur, Tomioka, and
  Srebro]{neyshabur2014search}
B.~Neyshabur, R.~Tomioka, and N.~Srebro.
\newblock In search of the real inductive bias: On the role of implicit
  regularization in deep learning.
\newblock \emph{Preprint, arXiv:1412.6614}, 2014.

\bibitem[Ongie et~al.(2019)Ongie, Willett, Soudry, and
  Srebro]{ongie2019function}
G.~Ongie, R.~Willett, D.~Soudry, and N.~Srebro.
\newblock A function space view of bounded norm infinite width relu nets: The
  multivariate case.
\newblock \emph{arXiv preprint arXiv:1910.01635}, 2019.

\bibitem[Pinelis(2019)]{pinelis2019order}
I.~Pinelis.
\newblock Order statistics on the spacings between order statistics for the
  uniform distribution.
\newblock \emph{arXiv preprint arXiv:1909.06406}, 2019.

\bibitem[Safran et~al.(2022)Safran, Vardi, and Lee]{safran2022effective}
I.~Safran, G.~Vardi, and J.~D. Lee.
\newblock On the effective number of linear regions in shallow univariate relu
  networks: Convergence guarantees and implicit bias.
\newblock \emph{Advances in Neural Information Processing Systems (NeurIPS)},
  2022.

\bibitem[Savarese et~al.(2019)Savarese, Evron, Soudry, and
  Srebro]{savarese2019infinite}
P.~Savarese, I.~Evron, D.~Soudry, and N.~Srebro.
\newblock How do infinite width bounded norm networks look in function space?
\newblock In \emph{Conference on Learning Theory}, pages 2667--2690. PMLR,
  2019.

\bibitem[Shamir(2022)]{shamir2022implicit}
O.~Shamir.
\newblock The implicit bias of benign overfitting.
\newblock In \emph{Conference on Learning Theory}, pages 448--478. PMLR, 2022.

\bibitem[Shevchenko et~al.(2022)Shevchenko, Kungurtsev, and
  Mondelli]{shevchenko2022mean}
A.~Shevchenko, V.~Kungurtsev, and M.~Mondelli.
\newblock Mean-field analysis of piecewise linear solutions for wide relu
  networks.
\newblock \emph{The Journal of Machine Learning Research}, 23\penalty0
  (1):\penalty0 5660--5714, 2022.

\bibitem[Thrampoulidis et~al.(2020)Thrampoulidis, Oymak, and
  Soltanolkotabi]{thrampoulidis2020theoretical}
C.~Thrampoulidis, S.~Oymak, and M.~Soltanolkotabi.
\newblock Theoretical insights into multiclass classification: A
  high-dimensional asymptotic view.
\newblock \emph{Advances in Neural Information Processing Systems},
  33:\penalty0 8907--8920, 2020.

\bibitem[Tsigler and Bartlett(2020)]{tsigler2020benign}
A.~Tsigler and P.~L. Bartlett.
\newblock Benign overfitting in ridge regression.
\newblock \emph{Preprint, arXiv:2009.14286}, 2020.

\bibitem[Vardi(2023)]{vardi2023implicit}
G.~Vardi.
\newblock On the implicit bias in deep-learning algorithms.
\newblock \emph{Communications of the ACM}, 66\penalty0 (6):\penalty0 86--93,
  2023.

\bibitem[Wang et~al.(2022)Wang, Donhauser, and Yang]{wang2022tight}
G.~Wang, K.~Donhauser, and F.~Yang.
\newblock Tight bounds for minimum l1-norm interpolation of noisy data.
\newblock In \emph{International Conference on Artificial Intelligence and
  Statistics (AISTATS)}, 2022.

\bibitem[Wang and Thrampoulidis(2021)]{wang2021binary}
K.~Wang and C.~Thrampoulidis.
\newblock Binary classification of gaussian mixtures: Abundance of support
  vectors, benign overfitting and regularization.
\newblock \emph{Preprint, arXiv:2011.09148}, 2021.

\bibitem[Wang et~al.(2021)Wang, Muthukumar, and
  Thrampoulidis]{wang2021benignmulticlass}
K.~Wang, V.~Muthukumar, and C.~Thrampoulidis.
\newblock Benign overfitting in multiclass classification: All roads lead to
  interpolation.
\newblock In \emph{Advances in Neural Information Processing Systems
  (NeurIPS)}, 2021.

\bibitem[Williams et~al.(2019)Williams, Trager, Panozzo, Silva, Zorin, and
  Bruna]{williams2019gradient}
F.~Williams, M.~Trager, D.~Panozzo, C.~Silva, D.~Zorin, and J.~Bruna.
\newblock Gradient dynamics of shallow univariate relu networks.
\newblock In \emph{Advances in Neural Information Processing Systems}, pages
  8378--8387, 2019.

\bibitem[Wu and Xu(2020)]{wu2020optimal}
D.~Wu and J.~Xu.
\newblock On the optimal weighted $\ell_2$ regularization in overparameterized
  linear regression.
\newblock \emph{Advances in Neural Information Processing Systems},
  33:\penalty0 10112--10123, 2020.

\bibitem[Zhang et~al.(2017)Zhang, Bengio, Hardt, Recht, and
  Vinyals]{zhang2017rethinkinggeneralization}
C.~Zhang, S.~Bengio, M.~Hardt, B.~Recht, and O.~Vinyals.
\newblock Understanding deep learning requires rethinking generalization.
\newblock In \emph{International Conference on Learning Representations
  (ICLR)}, 2017.

\bibitem[Zhou et~al.(2022)Zhou, Koehler, Sur, Sutherland, and
  Srebro]{zhou2022non}
L.~Zhou, F.~Koehler, P.~Sur, D.~J. Sutherland, and N.~Srebro.
\newblock A non-asymptotic moreau envelope theory for high-dimensional
  generalized linear models.
\newblock \emph{arXiv preprint arXiv:2210.12082}, 2022.

\end{thebibliography}
\bibliographystyle{iclr2024_conference}

\appendix
\newpage
\section{Delicate behavior of linear splines} \label{sec:delicate}

Recall the definition of $\hat{h}_S$ from \eqref{eq:hhat}.
$$ \hat{h}_S(x) := g_1(x) \cdot \mathbf{1}\{ x < x_1\} + g_{n-1}(x) \cdot \mathbf{1}\{x > x_{n}\}+ \hat{g}_S(x) \cdot \mathbf{1}\{x\in [x_{1},x_{n}]\}.$$
For simplicity, assume that $f^* \equiv 0$ and consider the $L_1$ loss.
Since in the interval $[0,x_1]$ the interpolator $\hat{h}_S$ is defined by extending the line connecting $(x_1,y_1)$ and $(x_2,y_2)$, then it has slope of $\Theta\left(\frac{1}{\ell_1}\right)$, and hence the $L_1$ loss of $\hat{h}_S$ in $[0,x_1]$ is 
\[
    \Theta\left(\frac{\ell_0^2}{\ell_1}\right)
    = \Theta\left( \frac{X_0^2}{X \cdot X_1} \right)~, 
\]
as can be seen in Figure~\ref{fig:splines-main_text} (right). Recall that $(X_0,\dots, X_n)$ and $X$ are defined in \eqref{eq:ell_distribution}. Since $X_1 \sim \text{Exp}(1)$, then $\Exp \left[\frac{1}{X_1} \right] = \infty$, and as a consequence the expected $L_1$ loss in $[0,x_1]$ is infinite. A similar argument also holds for the interval $[x_n,1]$. Thus, we get that $\Exp_S  \left[\mathcal{L}_p(\hat{h}_S)\right] = \infty$. However, with high probability the lengths $\ell_1$ and $\ell_{n-1}$ will not be too short, and therefore the loss in the intervals $[0,x_1]$ and $[x_n,1]$ will be bounded, which implies tempered overfitting with high probability. 

\section{Proof of Theorem \ref{thm:linear-splines}}
\begin{proof}[Proof of Theorem \ref{thm:linear-splines}]
Let $G<\infty$ be the Lipschitz constant of $f^*$. We sample $S\sim \dist^n$ and we number points $(x_i,y_i)$ such that:
$$0 < x_1 < x_2 < \dots < x_n < 1.$$ 
We will also denote $x_0=0$ and $x_{n+1} = 1$ for simplicity of exposition. Our goal is to analyze the population and reconstruction errors of linear splines $\hat{g}_S(x)$, as defined in \eqref{eq:linear-spline}. 
\begin{align}
  \mathcal{L}_p(\hat{g}_S) &= \Exp_{(x,y)\sim \dist} \left[ | \hat{g}_S(x)-y |^p\right] \nonumber \\  
  &= \Exp_{x\sim \unif([0,1]),\eps} \left[ |\hat{g}_{S}(x)-f^*(x)-\eps|^p\right] \nonumber\\
  & \leq 2^{p-1} \Exp_{x\sim \unif([0,1]),\eps} \left[ |\hat{g}_S(x)-f^*(x)|^p+|\eps|^p\right] \nonumber\\
  &= 2^{p-1} \left( \Exp_{x\sim \unif([0,1])} \left[ |\hat{g}_S(x)-f^*(x)|^p \right] + \Exp_{\eps} \left[ |\eps|^p\right] \right) \nonumber\\
  &= 2^{p-1} ( \mathcal{R}_p(\hat{g}_S)+ \mathcal{L}_p(f^*) ) \label{eq:L_p-R_p-splines}~.
\end{align}
Therefore, it boils down to analyzing $\mathcal{R}_p(\hat{g}_S)$. We define the risk in the interval $[x_i,x_{i+1}]$ as the random variable $R_i$. In particular, for $i\in \{0,1,\dots,n\}$ as
\begin{equation}\label{eq:lin-spline-sum-of-risk}
   R_i:=\int_{x_i}^{x_{i+1}} |\hat{g}_S(x)-f^*(x)|^p \, dx  \, , \quad \text{ and } \quad \mathcal{R}_p(\hat{g}_S(x))=\sum_{i=0}^{n} R_i. 
\end{equation}
The entire range from $[0,1]$ is divided into $n+1$ intervals. We denote their length by $\ell_0,\dots, \ell_n$. In particular, $\ell_i:= x_{i+1}-x_{i}.$ Recall the joint distribution of $(\ell_0,\dots,\ell_n)$ in \eqref{eq:ell_distribution}.

We first show that the sum of the risks in the first and the last intervals is vanishing as $n\rightarrow \infty$.
\begin{lemma}\label{lem:R_0+R_n-spline}
    For any $\gamma>0$, we have $\lim_{n \rightarrow \infty} \Pr_S[ R_0 +R_n \leq \gamma]=1.$
\end{lemma}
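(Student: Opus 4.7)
The plan is to bound $R_0$ and $R_n$ separately and show each vanishes in probability; by a union bound this yields $R_0 + R_n \to 0$ in probability, which is exactly the stated claim. For $R_0$, recall from \eqref{eq:linear-spline} that on $[0,x_1]$ the linear-spline interpolator is the constant $y_1 = f^*(x_1) + \eps_1$. Using the $G$-Lipschitzness of $f^*$, for every $x \in [0,x_1]$ I have $|\hat g_S(x) - f^*(x)| \leq |\eps_1| + G\ell_0$, so
\[
  R_0 \;\leq\; \ell_0\,(|\eps_1| + G\ell_0)^p \;\leq\; 2^{p-1}\ell_0|\eps_1|^p + 2^{p-1}G^p\ell_0^{\,p+1}.
\]
A symmetric bound holds for $R_n$ in terms of $\ell_n$ and $\eps_n$.

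Next, I would use the distributional identity \eqref{eq:ell_distribution} to show $\ell_0, \ell_n \to 0$ in probability. Since $\ell_0 \stackrel{d}{=} X_0/X$ with $X_0 \sim \mathrm{Exp}(1)$ and $X = \sum_{i=0}^n X_i$ a sum of $n+1$ i.i.d.\ $\mathrm{Exp}(1)$ variables, the strong law gives $X/(n+1) \to 1$ a.s., so $\ell_0 = O_p(1/n)$ and in particular $\ell_0 \to 0$ in probability. The same argument handles $\ell_n$.

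The main subtlety is that the noise distribution is completely arbitrary and may have no finite moments, so I cannot control $\ell_0|\eps_1|^p$ by taking expectations. Instead I would use truncation, exploiting the key fact that after sorting the $x_i$'s the noises $\eps_i$ remain i.i.d.\ and independent of the sorted $x$'s (since the unsorted $(\eps_i)$ are exchangeable and independent of the unsorted $(x_i)$, and the sort permutation is a function of the $x$'s only). Hence $\eps_1$ is independent of $\ell_0$, and for any fixed $M>0$,
\[
  \Pr\!\left[R_0 > \gamma/4\right] \;\leq\; \Pr[|\eps_1| > M] \;+\; \Pr\!\left[\,\ell_0(M + G\ell_0)^p > \gamma/4\,\right].
\]
The second term goes to $0$ for every fixed $M$ by $\ell_0 \to 0$ in probability, giving $\limsup_n \Pr[R_0 > \gamma/4] \leq \Pr[|\eps_1| > M]$, and letting $M \to \infty$ drives the right-hand side to $0$. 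The identical argument handles $R_n$, and combining the two through $\Pr[R_0 + R_n > \gamma] \leq \Pr[R_0 > \gamma/2] + \Pr[R_n > \gamma/2]$ completes the proof. The one step that requires the most care is justifying that the sorted noise $\eps_1$ remains independent of $\ell_0$; everything else is a clean combination of the Lipschitz bound, the exponential spacing representation, and a standard truncation.
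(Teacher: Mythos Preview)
Your proof is correct but takes a different route from the paper's. You both start from the same pointwise estimate $|\hat g_S(x)-f^*(x)|\le |\eps_1|+G\ell_0$ on $[0,x_1]$ (and symmetrically on $[x_n,1]$), yielding $R_0\le \ell_0(|\eps_1|+G\ell_0)^p$. From there the paper simply computes $\Exp_S[R_0+R_n]$: using independence of $\eps_1$ from $\ell_0$ and the fact that $\Exp|\eps|^p=\mathcal{L}_p(f^*)<\infty$, together with $\Exp[\ell_0]=\tfrac{1}{n+1}$ and an explicit bound on $\Exp[\ell_0^{p+1}]$ via the exponential representation, it shows $\Exp_S[R_0+R_n]=o_n(1)$ and concludes by Markov's inequality. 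Your truncation argument is a valid alternative and is actually more general (it works even if $\Exp|\eps|^p=\infty$), but your stated concern that ``the noise distribution may have no finite moments'' is unnecessary in this context: Theorem~\ref{thm:linear-splines} bounds everything by $\mathcal{L}_p(f^*)=\Exp|\eps|^p$, so the $p$-th moment is implicitly assumed finite. One further remark: your truncation inequality $\Pr[R_0>\gamma/4]\le \Pr[|\eps_1|>M]+\Pr[\ell_0(M+G\ell_0)^p>\gamma/4]$ is just a union bound and does not itself require independence of $\eps_1$ and $\ell_0$; what you do need (and correctly justify) is only that the marginal law of the post-sorting $\eps_1$ equals that of $\eps$, so that $\Pr[|\eps_1|>M]$ is independent of $n$.
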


All the helper lemmas, including the above, are proved at the end of the proof of the theorem. We now focus on bounding the remaining $R_i$'s. Define 
\begin{equation}\label{eq:def-R-splines}
    R:=\sum_{i=1}^{n-1}R_i~,
\end{equation}
then we are interested in bounding $R$. For any $i\in[n-1]$ and $x\in [x_i,x_{i+1}]$,
\begin{align*}
 |\hat{g}_{S}(x)-f^*(x)| =& |g_i(x)-f^*(x)| \\
 =&\left| y_{i}+ \frac{(y_{i+1}-y_{i})}{(x_{i+1}-x_{i})} (x-x_{i})-f^*(x)\right|\\
 =& \left| f^*(x_{i})+\eps_{i} + \left(\frac{f^*(x_{i+1})+\eps_{i+1}-f^*(x_{i})-\eps_{i}}{x_{i+1}-x_{i}}\right)(x-x_{i}) -f^*(x)\right| \\
 \leq & \left| f^*(x_{i}) -f^*(x)\right|+ |\eps_{i}|+  \frac{G|x_{i+1}-x_{i}|+|\eps_{i+1}-\eps_{i}|}{x_{i+1}-x_{i}}   (x-x_{i})  \\
 \leq &  G(x-x_{i}) + |\eps_{i}|+ \left( \frac{G(x_{i+1}-x_{i})+|\eps_{i+1}|+|\eps_{i}|}{x_{i+1}-x_{i}} \right)  (x-x_{i})  \\
 \leq & G \cdot \ell_i + |\eps_{i}|+ \left( \frac{G \cdot \ell_i +|\eps_{i+1}|+|\eps_{i}|}{\ell_i} \right) \cdot \ell_i\\
 = & 2G \cdot \ell_i + |\eps_{i+1}|+2 |\eps_{i}|.
\end{align*}
Therefore, for $i\in [n-1]$
\begin{align*}
  R_i=\int_{x_i}^{x_{i+1}} |\hat{g}_{S}(x)-f^*(x)|^p \, dx \, & \leq \ell_i (2G \cdot \ell_i + |\eps_{i+1}|+2 |\eps_{i}|)^p\\
  & \leq 3^{p-1} \ell_i ((2G)^p \ell_i^p+|\eps_{i+1}|^{p}+2^p |\eps_{i}|^p)\\
  &\leq 3^{p-1}(2G)^p \ell_i^{p+1}+3^{p-1} \ell_i |\eps_{i+1}|^{p}  + 3^{p-1} 2^p \ell_i |\eps_{i}|^p  :=\hat{R}_i  
\end{align*}
Therefore, if we define $\hat{R}:=\sum_{i=1}^{n-1} \hat{R}_i$ then it serves as upper bound for $R$, i.e. $R \leq \hat{R}$. Since the $\ell_i$'s are mildly dependent random variables; we will try to re-express $\hat{R}$ as the sum of independent random variables. We now define random variables $\Tilde{\ell}_0, \dots, \Tilde{\ell}_n \iid \text{Exp}(1)/(n+1)$. More specifically, $(\Tilde{\ell}_0,\dots,\Tilde{\ell}_n)=(X_0,\dots,X_n)/(n+1)$. Using these random variables, define random variables similar to $\hat{R}_1,\dots, \hat{R}_{n-1}$, but replace $\ell_i$ with $\Tilde{\ell}_i$
$$ \Tilde{R}_i:= 3^{p-1}(2G)^p \Tilde{\ell}_i^{p+1}+3^{p-1} \Tilde{\ell}_i |\eps_{i+1}|^{p}  + 3^{p-1} \cdot 2^p \Tilde{\ell}_i |\eps_{i}|^p,  \quad \text{and} \quad \Tilde{R}=\sum_{i=1}^{n-1} \Tilde{R}_i.$$
Then, the following lemma (whose proof we include after the proof of the theorem) establishes the almost sure convergence between $\hat{R}$ and $\Tilde{R}$. Therefore, it suffices to bound the latter.
\begin{lemma}\label{lem:poison-equivalence-splines}
    As $n \rightarrow \infty$, we have $\Tilde{R} - \hat{R} \xrightarrow{\textnormal{a.s.}} 0$.
\end{lemma}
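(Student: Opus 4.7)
The plan is to leverage the fact that the two interval-length vectors $(\ell_i)$ and $(\tilde{\ell}_i)$ differ only by a common scalar that tends to $1$ almost surely. Concretely, writing $Y_n := (n+1)/X$, we have $\ell_i = Y_n\, \tilde{\ell}_i$ for every $i$ by \eqref{eq:ell_distribution}. Since $X = \sum_{j=0}^n X_j$ is a sum of $n+1$ i.i.d.\ $\mathrm{Exp}(1)$ variables with $\Exp[X_0]=1$, the strong law of large numbers gives $Y_n \to 1$ a.s., and therefore $Y_n^{p+1} \to 1$ a.s.\ as well.

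The next step is to factor this common scaling out of $\hat{R}$. Substituting $\ell_i = Y_n \tilde{\ell}_i$ into the definition of $\hat{R}_i$ and collecting powers of $Y_n$, one can write $\hat{R} = Y_n^{p+1} A_n + Y_n B_n$, where
\[
A_n := 3^{p-1}(2G)^p \sum_{i=1}^{n-1} \tilde{\ell}_i^{p+1}, \qquad B_n := 3^{p-1} \sum_{i=1}^{n-1} \tilde{\ell}_i \bigl(|\eps_{i+1}|^p + 2^p|\eps_i|^p\bigr),
\]
and simultaneously $\tilde{R} = A_n + B_n$. Hence $\tilde{R} - \hat{R} = (1-Y_n^{p+1})\, A_n + (1-Y_n)\, B_n$, and it will suffice to show that $A_n$ and $B_n$ are each almost surely $O(1)$.

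For $A_n$, I would rewrite it as $\tfrac{3^{p-1}(2G)^p}{(n+1)^{p+1}} \sum_{i=1}^{n-1} X_i^{p+1}$ and apply the SLLN using $\Exp[X_1^{p+1}] = (p+1)! < \infty$, which yields $A_n = \Theta(n^{-p}) \to 0$ a.s. For $B_n$, I would split the sum into $\sum_i X_i |\eps_{i+1}|^p$ and $\sum_i X_i|\eps_i|^p$; each piece is now a sum of an i.i.d.\ sequence in $i$ (the index sets $\{X_i,\eps_{i+1}\}$ and $\{X_i,\eps_i\}$ are disjoint across $i$, and the two coordinates are independent within each pair). Assuming $\Exp|\eps|^p < \infty$ (otherwise Theorem~\ref{thm:linear-splines} is vacuous since $\mathcal{L}_p(f^*) = \Exp|\eps|^p$), SLLN yields $B_n \to 3^{p-1}(1+2^p)\, \Exp|\eps|^p < \infty$ a.s. Combined with $Y_n \to 1$ and $Y_n^{p+1}\to 1$ a.s., this gives $\tilde{R}-\hat{R} \to 0$ a.s.

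The only genuine book-keeping obstacle lies in $B_n$: treated as a single sum in $i$, the summands are only $1$-dependent because $\eps_{i+1}$ appears in two consecutive terms. Splitting into the two pieces above circumvents this entirely by producing genuinely i.i.d.\ sequences, after which the SLLN applies cleanly. Everything else is routine factoring, so the argument reduces to a short application of the strong law combined with the identity $\ell_i = Y_n \tilde{\ell}_i$.
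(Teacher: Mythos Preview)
Your proposal is correct and takes essentially the same approach as the paper: factor out the common scaling $(n+1)/X$ (your $Y_n$), then apply the strong law of large numbers to the resulting sums of i.i.d.\ terms $X_i^{p+1}$, $X_i|\eps_i|^p$, and $X_i|\eps_{i+1}|^p$. Your packaging via $\hat{R}=Y_n^{p+1}A_n+Y_nB_n$ is slightly tidier than the paper's term-by-term computation, but the mathematical content is identical; one small slip is that $\Exp[X_1^{p+1}]=\Gamma(p+2)$ rather than $(p+1)!$ for non-integer $p$, though only finiteness is needed.
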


Still on looking at $\Tilde{R}$, any two consecutive $\Tilde{R}_i$ and $\Tilde{R}_{i+1}$ are dependent since it shares $\eps_{i+1}$ in its definition. Due to this, we split $\Tilde{R}$ into two sums $\Tilde{R}_{\text{odd}}$ (and $\Tilde{R}_{\text{even}}$ ) containing odd-numbered terms (and even-numbered terms respectively).
$$\Tilde{R}_{\text{odd}} := \hspace{-5mm} \sum_{i \in [n-1],i\%2=1} \hspace{-5mm}\Tilde{R}_i, \quad \Tilde{R}_{\text{even}} := \hspace{-5mm}\sum_{i \in [n-1],i\%2=0} \hspace{-5mm}\Tilde{R}_i, \quad \text{and } \quad \Tilde{R}=\Tilde{R}_{\text{odd}}+\Tilde{R}_{\text{even}}$$
Now, $\Tilde{R}_{\text{odd}}$ is the sum of $\ceil{(n-1)/2}$ i.i.d. random variables. Similarly, $\Tilde{R}_{\text{even}}$ is the sum of $\floor{(n-1)/2}$ i.i.d. random variables. Let us calculate the expectation of these identically distributed random variables, which are $\Tilde{R}_i$'s. For any $i \in [n-1]$,
Further simplifying:
\begin{align}
    \Tilde{R}_{\text{odd}} &=\sum_{i \in [n-1], i \%2=1} \left( \frac{3^{p-1} (2G)^p X_i^{p+1}}{(n+1)^{p+1}}+\frac{3^{p-1} |\eps_{i+1}|^p X_i}{(n+1)} + \frac{3^{p-1} \cdot 2 ^p \cdot |\eps_{i}|^p X_i}{(n+1)} \right) \label{eq:expanding_R_odd}
\end{align}
By the strong law of large numbers (LLN), we can say that as $n \rightarrow \infty$
$$ \frac{1}{\ceil{(n-1)/2}} \sum_{i \in [n-1], i \%2=1} X_i^{p+1} \xrightarrow{\text{\text{a.s.}}} \Exp [\text{Exp(1)}^{p+1}]=\Gamma(p+2)$$
Therefore, as $n\rightarrow \infty$, for $p\geq 1$ the first term of \eqref{eq:expanding_R_odd}
\begin{equation}\label{eq:15}
 \sum_{i \in [n-1], i \%2=1}  \frac{3^{p-1} (2G)^p X_i^{p+1}}{(n+1)^{p+1}} \xrightarrow{\text{\text{a.s.}}} 0.   
\end{equation}
Similarly, using the strong LLN
\begin{align*}
    \frac{1}{\ceil{(n-1)/2}} \sum_{i \in [n-1], i \%2=1} \hspace{-5mm}3^{p-1}|\eps_{i+1}|^p X_i+ 3^{p-1} \cdot 2^p |\eps_i|^p X_i \xrightarrow{\text{\text{a.s.}}} & \,3^{p-1} \Exp \left[|\eps_{i+1}|^p X_i \right] + 3^{p-1} \cdot 2^p \Exp[ |\eps_i|^p X_i],\\
    &= 3^{p-1} \mathcal{L}_p(f^*) + 3^{p-1} \cdot 2^p \mathcal{L}_p(f^*).
\end{align*} 
Therefore, as $n \rightarrow \infty$, the second and third terms of \eqref{eq:expanding_R_odd} converge almost surely as follows.
\begin{equation}\label{eq:16}
 \sum_{i \in [n-1], i \%2=1} \left( \frac{3^{p-1} |\eps_{i+1}|^p X_i}{(n+1)} + \frac{3^{p-1} \cdot 2 ^p \cdot |\eps_{i}|^p X_i}{(n+1)} \right) \xrightarrow{\text{\text{a.s.}}} \frac{3^{p-1} \mathcal{L}_p(f^*) + 3^{p-1} \cdot 2^p \mathcal{L}_p(f^*)}{2}.   
\end{equation}
Therefore, combining \eqref{eq:15} and \eqref{eq:16} and substituting in \eqref{eq:expanding_R_odd}, as $n\rightarrow \infty$
$$\Tilde{R}_{\text{odd}} \xrightarrow{\text{a.s.}} \frac{3^{p-1} \mathcal{L}_p(f^*) + 3^{p-1} \cdot 2^p \mathcal{L}_p(f^*)}{2}.$$
Exactly following a similar argument,
$$\Tilde{R}_{\text{even}} \xrightarrow{\text{a.s.}} \frac{3^{p-1} \mathcal{L}_p(f^*) + 3^{p-1} \cdot 2^p \mathcal{L}_p(f^*)}{2}.$$
Therefore,
$$\Tilde{R} \xrightarrow{\text{\text{a.s.}}} 3^{p-1} \mathcal{L}_p(f^*) + 3^{p-1} \cdot 2^p \mathcal{L}_p(f^*)~.$$

Using $ \Tilde{R} - \hat{R}\hspace{1mm} \xrightarrow{\text{\text{a.s.}}} 0$ by Lemma \ref{lem:poison-equivalence-splines}, as $n\rightarrow \infty$, we have
$\hat{R} \xrightarrow{\text{\text{a.s.}}} 3^{p-1} \mathcal{L}_p(f^*) + 3^{p-1} \cdot 2^p \mathcal{L}_p(f^*)~$.
Finally, using the fact that $R \leq \hat{R}$, we obtain the following:
\begin{equation}\label{eq:R-bound-splines}
   \lim_{n \rightarrow \infty} \Pr_S\left[ R \leq (3^{p-1}\,(2^p + 1) +1) \, \mathcal{L}_p(f^*) \right] =1. 
\end{equation}

   Recall the definition of $R$ in \eqref{eq:def-R-splines} and $\mathcal{R}_p(\hat{g}_S)$ in \eqref{eq:lin-spline-sum-of-risk}. Having a probabilistic bound on $R$ gives us a bound on $\mathcal{R}_p(\hat{g}_S)$ when using Lemma \ref{lem:R_0+R_n-spline}. In particular, we get
\begin{equation}\label{eq:R_p(ghat)}
    \lim_{n\rightarrow \infty} \Pr_S \left[ \mathcal{R}_p(\hat{g}_S) \leq \, (3^{p-1}\,(2^p + 1) +2) \, \mathcal{L}_p(f^*) \right]=1.
\end{equation}
Finally, combining this with \eqref{eq:L_p-R_p-splines}:
$$ \lim_{n \rightarrow \infty} \Pr_S \left[ \mathcal{L}_p(\hat{g}_S) \leq \, C_p \, \mathcal{L}_p(f^*) \right]=1,$$
where $C_p:=2^{p-1}[3^{p-1}((2^{p}+1)+2)+1]$. 
 \end{proof}
We now prove Lemmas \ref{lem:R_0+R_n-spline} and \ref{lem:poison-equivalence-splines} in order.

\begin{proof}[Proof of Lemma \ref{lem:R_0+R_n-spline}]
    For any $x \in [0,x_1]$, we have $\hat{g}_S(x)=y_1$. Therefore,
    \begin{align*}
        |\hat{g}_S(x)-f^*(x)|&=|y_1-f^*(x)|=|f^*(x_1)+\eps_1-f^*(x)| \leq G|x_1-x| +|\eps_1| \leq G \ell_0+|\eps_1|.
    \end{align*}
    The implies that
        \begin{align*}
             R_0 & =\int_0^{x_1} |\hat{g}_S(x)-f^*(x)|^p \, dx \leq (G\ell_0+|\eps_1|)^p \ell_0 \leq 2^{p-1}G^p \ell_0^{p+1} + 2^{p-1} |\eps_1|^p \ell_0. 
    \end{align*}
 Similarly, for $x \in [x_n,1]$
      \begin{align*}
        |\hat{g}_S(x)-f^*(x)|&=|y_n-f^*(x)|=|f^*(x_n)+\eps_n-f^*(x)| \leq G|x_n-x|+|\eps_n| \leq G \ell_n+|\eps_n|.
    \end{align*}
    Therefore
        \begin{align*}
        R_n & = \int_{x_n}^{1} |\hat{g}_S(x)-f^*(x)|^p \, dx \leq (G \ell_n+|\eps_n|)^p \ell_n \leq 2^{p-1} G^p \ell_n^{p+1}+2^{p-1} |\eps_n|^p \ell_n. 
    \end{align*}
    Combining the two we get
    \begin{align*}
        \Exp_S [R_0+R_n] & \leq 2^{p-1} G^p \cdot (\Exp[\ell_0^{p+1}]+\Exp[\ell_n^{p+1}])+2^{p-1} (\Exp[|\eps_1|^p \ell_0]+\Exp[|\eps_n|^p\ell_n])\\
        &= 2^p G^p \Exp[\ell_0^{p+1}] + 2^p \mathcal{L}_p(f^*) \Exp[\ell_0] = 2^p G^p \Exp[\ell_0^{p+1}] + \frac{2^p \mathcal{L}_p(f^*)}{n+1} \\
        & \leq 2^p G^p \cdot \Exp \left[ \frac{X_0^{2}}{(X_1+\dots+X_n)^{2}} \right]+\frac{2^p \mathcal{L}_p(f^*)}{n+1} \\
        &=2^p G^p \cdot \Exp[\text{Exp}(1)^{2}] \cdot \Exp \left[ \frac{1}{\Gamma(n,1)^{2}} \right] + o_n(1)\\
        &= 2^p G^p \cdot 2 \cdot \int_{0}^{\infty} \frac{1}{z^2} \cdot \frac{ z^{n-1}\cdot e^{-z} }{\Gamma(n)} \, dz +o_n(1)=  \frac{2^{p+1} G^p}{\Gamma(n)} \cdot \int_{0}^{\infty} z^{n-3} \cdot e^{-z} \, dz +o_n(1)\\
        &=  \frac{2^{p+1}G^p \Gamma(n-2)}{\Gamma(n)} +o_n(1)= \frac{2^{p+1} G^p}{(n-1)(n-2)}+o_n(1) = o_n(1).\\
    \end{align*} 
Therefore, applying Markov's inequality yields that for any $\gamma>0$,
$$\Pr_S[R_0+R_n >\gamma] \leq \hspace{2mm} \frac{\Exp[R_0+R_n]}{\gamma} \leq \hspace{2mm} o_n(1),$$
and the lemma follows. 
\end{proof}
\begin{proof}[Proof of Lemma \ref{lem:poison-equivalence-splines}]
    By \eqref{eq:ell_distribution}, we have $\left( \ell_0,\dots,\ell_n \right) \sim  \left( \frac{X_0}{X}, \dots, \frac{X_n}{X} \right)$, where $X_0,\dots,X_n \iid \text{Exp}(1)$ and $X:=\sum_{i=0}^{n} X_i$. Also, we have $\tilde{\ell}_i = \frac{X_i}{n+1}$ for all $i$. 
    
    For every $p \geq 1$ we have
    \begin{align*}
        \sum_{i=1}^{n-1} \left( \tilde{\ell}_i^{p+1} - \ell_i^{p+1}  \right)
        &= \sum_{i=1}^{n-1} \left( \frac{X_i^{p+1}}{(n+1)^{p+1}} -  \frac{X_i^{p+1}}{X^{p+1}} \right)
        \\
        &= \sum_{i=1}^{n-1} \frac{X_i^{p+1}}{(n+1)^{p+1}} \left( 1 - \frac{(n+1)^{p+1}}{X^{p+1}} \right)
        \\
        &= \frac{n-1}{(n+1)^{p+1}} \cdot \frac{ \sum_{i=1}^{n-1} X_i^{p+1}}{n-1} \left[ 1 - \left(\frac{n+1}{X}\right)^{p+1} \right]~,
    \end{align*}
    and by the strong law of large numbers we have $\frac{ \sum_{i=1}^{n-1} X_i^{p+1}}{n-1} \xrightarrow{\text{a.s.}} \Exp X_i^{p+1} < \infty$ and $\left(\frac{n+1}{X}\right)^{p+1} \xrightarrow{\text{a.s.}} 1$, and thus 
    \[
        \sum_{i=1}^{n-1} \left( \tilde{\ell}_i^{p+1} - \ell_i^{p+1}\right) \xrightarrow{\text{a.s.}} 0~.
    \]

    Moreover, we have
    \begin{align*}
        \sum_{i=1}^{n-1} \left( \tilde{\ell}_i |\epsilon_i|^p - \ell_i |\epsilon_i|^p  \right)
        &= \sum_{i=1}^{n-1} \left( \frac{X_i |\epsilon_i|^p}{n+1} -  \frac{X_i |\epsilon_i|^p}{X} \right)
        \\
        &= \sum_{i=1}^{n-1} \frac{X_i |\epsilon_i|^p}{n+1} \left( 1 - \frac{n+1}{X} \right)
        \\
        &= \frac{n-1}{n+1} \cdot \frac{ \sum_{i=1}^{n-1} X_i |\epsilon_i|^p}{n-1} \left( 1 - \frac{n+1}{X} \right)~,
    \end{align*}
    and by the strong law of large numbers we have $\frac{ \sum_{i=1}^{n-1} X_i|\epsilon_i|^p}{n-1} \xrightarrow{\text{a.s.}} \Exp X_i|\epsilon_i|^p < \infty$ and $\frac{n+1}{X} \xrightarrow{\text{a.s.}} 1$, and thus 
    \[
        \sum_{i=1}^{n-1} \left( \tilde{\ell}_i |\epsilon_i|^p - \ell_i |\epsilon_i|^p  \right) \xrightarrow{\text{a.s.}} 0~.
    \]
    Similarly, we also have 
    \[
        \sum_{i=1}^{n-1} \left( \tilde{\ell}_i |\epsilon_{i+1}|^p - \ell_i |\epsilon_{i+1}|^p  \right) \xrightarrow{\text{a.s.}} 0~.
    \]
    Overall, we get
    \begin{align*}
        &\tilde{R} - \hat{R} 
        \\
        &= \sum_{i=1}^{n-1} \left(3^{p-1} (2G)^p \tilde{\ell}_i^{p+1} + 3^{p-1} \tilde{\ell}_i |\eps_{i+1}|^p + 3^{p-1}\cdot 2^p \cdot \tilde{\ell}_i |\eps_{i}|^p 
        \right.
        \\
        &\quad\quad\quad
        \left.
        - 3^{p-1} (2G)^p \ell_i^{p+1} - 3^{p-1} \ell_i |\eps_{i+1}|^p - 3^{p-1} \cdot 2^p \cdot \ell_i |\eps_{i}|^p \right)       \\
        &= 3^{p-1} (2G)^p \sum_{i=1}^{n-1} \left(\tilde{\ell}_i^{p+1} - \ell_i^{p+1} \right) + 3^{p-1} \sum_{i=1}^{n-1} \left(\tilde{\ell}_i |\eps_{i+1}|^p - \ell_i |\eps_{i+1}|^p\right) + 3^{p-1} \cdot 2^{p} \sum_{i=1}^{n-1} \left( \tilde{\ell}_i |\eps_{i}|^p - \ell_i |\eps_{i}|^p \right)
        \\
        &\xrightarrow{\text{a.s.}} 0~.
    \end{align*}
    
\end{proof}

\section{Omitted Proof from Section \ref{sec:characterizing}}

Recall the definition of the affine functions $g_1,\dots, g_{n-1}$ in Section \ref{sec:characterizing}. Also, recall the slope values $\delta_0,\dots, \delta_{n}$ and the sign of the discrete curvature $\curv(x_i)$ at any point $x_i$. As mentioned, \citet{boursier2023penalising} showed Lemma \ref{lem:char}, which says that there is a unique minimizer of \eqref{eq:main_problem}, which is piecewise linear and has at most one kink in the range $[x_i,x_{i+1})$. Due to this, we have only one degree of freedom between any two points; the solution can be completely characterized by variables $s^*=\{s^*_i\}_{i=1}^{n}$ where $s^*_i$ is the slope of the line incoming to point $(x_i,y_i).$ Formally, $s^*_i= \lim_{\eps \rightarrow 0^+} D \hat{f}(x_i-\eps)$.
 
\citet{boursier2023penalising} (Lemma 3) proved the following lemma which upper and lower bounds the value of $s^*_i$ in terms of $\delta_{i-1}$ and $\delta_i$.
\begin{lemma}[\citet{boursier2023penalising}]\label{lem:slopes-bound-left-right}
 For any $i\in [n]$, $s^*_i \in [ \min(\delta_{i-1},\delta_i), \max(\delta_{i-1},\delta_i)]$ where $\delta_0=\delta_1$ and $\delta_n=\delta_{n-1}$.   
\end{lemma}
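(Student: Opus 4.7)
The plan is to prove the lemma by contradiction using the equivalence between minimizing $\norm{\theta}^2$ in \eqref{eq:main_problem} and minimizing the representation cost $R(f)=\sum_j \sqrt{1+\tau_j^2}\,|u_{j+1}-u_j|$, where the sum is over the kinks $\tau_j$ of $\hat{f}_S$ and $u_j$ denotes its slope on the segment containing $x_j$. By Lemma~\ref{lem:char}, $\hat{f}_S$ is piecewise linear with at most one kink $\tau_j \in [x_j,x_{j+1})$ per interval, and by the definition of $s_j^*$ as the left derivative at $x_j$ one has $s_j^* = u_j$. The interpolation constraint $u_j(\tau_j - x_j) + u_{j+1}(x_{j+1}-\tau_j) = (x_{j+1}-x_j)\,\delta_j$ expresses $\delta_j$ as a convex combination of $s_j^*$ and $s_{j+1}^*$; consequently $\delta_{i-1}$ lies between $s_{i-1}^*$ and $s_i^*$, and $\delta_i$ lies between $s_i^*$ and $s_{i+1}^*$. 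The boundary indices $i \in \{1, n\}$ are immediate from the fact (also in Lemma~\ref{lem:char}) that $\hat{f}_S \equiv g_1$ on $(-\infty,x_2)$ and $\hat{f}_S \equiv g_{n-1}$ on $[x_{n-1},\infty)$, which forces $s_1^*=\delta_1$ and $s_n^*=\delta_{n-1}$ in line with the convention $\delta_0:=\delta_1$, $\delta_n:=\delta_{n-1}$.

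For interior $2\leq i \leq n-1$ I would suppose towards contradiction that $s_i^* > \max(\delta_{i-1},\delta_i)$; the case $s_i^* < \min(\delta_{i-1},\delta_i)$ is symmetric. Combined with the convex-combination relations above, this forces $u_{i-1}<\delta_{i-1}<u_i=s_i^*$ and $u_{i+1}<\delta_i<u_i$ strictly, and rules out $\tau_{i-1}=x_{i-1}$ or $\tau_i=x_i$ (each of which would force $\delta_{i-1}=u_i$ or $\delta_i=u_i$, contradicting the strict inequalities). So $u_i$ is a strict local maximum of the slope sequence, with genuine kinks at both $\tau_{i-1}\in(x_{i-1},x_i)$ and $\tau_i\in(x_i,x_{i+1})$.

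I would then perturb $u_i \mapsto u_i - \epsilon$ for small $\epsilon>0$, shifting $\tau_{i-1}$ and $\tau_i$ to restore the two local interpolation equations while leaving all other $u_j,\tau_j$ fixed. Implicit differentiation of those equations yields
\[
\Delta\tau_{i-1} = -\frac{\epsilon(x_i-\tau_{i-1})}{u_i-u_{i-1}} + O(\epsilon^2), \qquad \Delta\tau_i = \frac{\epsilon(\tau_i-x_i)}{u_i-u_{i+1}} + O(\epsilon^2),
\]
so $\tau_{i-1}$ moves left and $\tau_i$ moves right. Expanding the two affected summands of $R$ to first order in $\epsilon$, the slope-gap denominators cancel against the jumps $|u_i-u_{i\pm1}|$, and applying the identities $(1+\tau_{i-1}^2) + \tau_{i-1}(x_i-\tau_{i-1}) = 1+\tau_{i-1}x_i$ and $(1+\tau_i^2) - \tau_i(\tau_i-x_i) = 1+\tau_i x_i$ gives
\[
R' - R = -\epsilon\left[\frac{1+\tau_{i-1}x_i}{\sqrt{1+\tau_{i-1}^2}} + \frac{1+\tau_i x_i}{\sqrt{1+\tau_i^2}}\right] + O(\epsilon^2).
\]
Since all data lie in $[0,1]$, we have $x_i>0$ and $\tau_{i-1},\tau_i\geq 0$, so the bracketed quantity is strictly positive, giving $R'<R$ for sufficiently small $\epsilon$ and contradicting the minimality of $\hat{f}_S$.

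The main obstacle is carrying out the first-order expansion cleanly so that the two simplifying identities reveal a manifestly positive coefficient; once the display above is derived, the sign analysis follows immediately from the fact that the data are supported in $[0,1]$. A secondary bookkeeping concern is ensuring that the perturbed kink positions remain in their respective intervals $[x_{i-1}, x_i)$ and $[x_i, x_{i+1})$, which holds automatically for sufficiently small $\epsilon$ since the unperturbed kinks are strictly interior.
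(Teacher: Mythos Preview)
The paper does not supply its own proof of this lemma; it is quoted verbatim as Lemma~3 of \citet{boursier2023penalising}. Your perturbation argument is a reasonable self-contained approach, and the first-order expansion for the interior case is correct: the two identities $(1+\tau_{i-1}^2)+\tau_{i-1}(x_i-\tau_{i-1})=1+\tau_{i-1}x_i$ and $(1+\tau_i^2)-\tau_i(\tau_i-x_i)=1+\tau_i x_i$ do produce the clean negative first variation you display, and positivity of the bracket follows since all $x_j,\tau_j$ lie in $[0,1]$.

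There are, however, two genuine slips. First, your handling of the boundary indices $i\in\{1,n\}$ is circular within this paper: the statement ``$\hat f_S\equiv g_1$ on $(-\infty,x_2)$'' is \emph{not} part of Lemma~\ref{lem:char} (which only asserts the form with $\tau_j\in[x_j,x_{j+1})$); it is the conclusion of Lemma~\ref{lem:complete-description-property-main_text}, whose proof explicitly invokes Lemma~\ref{lem:slopes-bound-left-right}. The fix is easy: run the same perturbation at $i=1$ (respectively $i=n$), where only the single kink $\tau_1$ (respectively $\tau_{n-1}$) needs to move and the unregularized intercept absorbs the change in $u_1$; the first variation is then $-\epsilon\,(1+\tau_1 x_1)/\sqrt{1+\tau_1^2}<0$, giving the contradiction.

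Second, the sentence ``rules out $\tau_i=x_i$ (which would force $\delta_i=u_i$)'' is wrong: $\tau_i=x_i$ means the weight on $u_i$ in the convex combination for $\delta_i$ is zero, so it forces $\delta_i=u_{i+1}$, not $\delta_i=u_i$, and is perfectly consistent with $u_i>\delta_i$. Fortunately this error is harmless---when $\tau_i=x_i$ the exact solution of your continuity equation gives $\Delta\tau_i=0$, so the kink stays put and the cost at $\tau_i$ still drops by $\epsilon\sqrt{1+x_i^2}$, and your displayed first-order decrease remains valid. You should simply delete that clause rather than rely on it.
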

Having the above lemma at our disposal, we will now show Lemma \ref{lem:complete-description-property-main_text}, which characterizes the worst-case behavior of formation spikes.
\begin{proof}[Proof of Lemma \ref{lem:complete-description-property-main_text}]
    It is easy to see that the function $\hat{f}_S$ cannot have any kink on $(-\infty, x_1)$. This just follows from Lemma \ref{lem:char}. Moreover, the slope of this straight line incoming to $(x_1,y_1)$ is given by $s^*_1 \in [\min\{\delta_0,\delta_1\},\max\{\delta_0,\delta_1\}]=\delta_1$ by Lemma \ref{lem:slopes-bound-left-right}. Therefore, $\hat{f}_S(x)$ in the interval $(-\infty, x_1)$ must be $g_1(x)$. Now, again by Lemma \ref{lem:char} we can have at most one kink between $[x_1, x_2)$. Since $g_1(x)$ is a unique line joining the points $(x_1,y_1)$ and $(x_2,y_2)$, having one kink and changing the line at some point $x\in [x_1, x_2)$ will not pass through the point $(x_2,y_2)$. But since $\hat{f}_S(x_2)=y_2$, we must have that $\hat{f}_S(x)=g_1(x)$ in the entire $(-\infty, x_2)$. 

    Similarly, by Lemma \ref{lem:char}, we don't have any kink from $[x_n,\infty)$. Moreover, the slope incoming to the point $(x_n,y_n)$ is $s^*_n=\delta_{n-1}$ since $s_n^* \in [ \min\{\delta_{n-1},\delta_n\}, \max\{\delta_{n-1}, \delta_n\}]$ and $\delta_{n-1}=\delta_n$ by Lemma \ref{lem:slopes-bound-left-right}. Thus, it must be that $\hat{f}_S(x)=g_{n-1}(x)$ in $[x_n,\infty)$. Moreover, $\hat{f}_S(x)$ has at most one kink in $[x_{n-1},x_n)$. This kink cannot belong to $(x_{n-1},x_n)$ since $g_{n-1}(x)$ is the unique line joining the points $(x_{n-1},y_{n-1})$ and $(x_n,y_n)$. Combining, we can say that $\hat{f}_S(x)=g_{n-1}(x)$ in $[x_{n-1},\infty)$.

    We now consider $[x_i, x_{i+1})$ for any $i \in \{2, \dots, n-2\}$. We prove the lemma under three different cases.
    \begin{enumerate}
        \item \textbf{Case 1: } $\curv(x_i)=\curv(x_{i+1})=-1$\\
        First of all, since $\delta_{i-1}>\delta_i$ and $g_{i-1}(x_i)=g_{i}(x_i)=y_i$, we have $g_{i-1}(x) \geq g_i(x)$ in $x\in [x_i,x_{i+1})$. Similarly, since $\delta_{i+1}<\delta_i$ even $g_{i+1}(x) \geq g_i(x)$ for $x \in [x_i, x_{i+1})$. Using the same argument, since $s^*_i \in [\delta_i, \delta_{i-1}]$ and $s^*_{i+1} \in [\delta_{i+1},\delta_i]$ by Lemma \ref{lem:slopes-bound-left-right}, we say that $\hat{f}_S$ lies above $g_i(x)$ in $[x_i,x_{i+1})$. Therefore, it only remains to show that $\hat{f}_S(x) \leq \min\{g_{i-1}(x),g_{i+1}(x)\}$. 
        
        Let us assume that it is not true. Let $x^* \in [x_i,x_{i+1})$ be the intersection point of $g_{i-1}(x)$ and $g_{i+1}(x)$. If $\hat{f}_S(x) \geq \min\{g_{i-1}(x),g_{i+1}(x)\}$ for some $x \in [x_i,x_{i+1})$, then it is easy to observe that it must be true at the location of the kink which is $x'$, i.e. $\hat{f}_S(x') \geq \min\{g_{i-1}(x'),g_{i+1}(x')\}$.
        
        
        Now we have two possibilities; the first one is $x'<x^*$, in which case $$s^*_i=\frac{\hat{f}_S(x')-y_{i}}{x'-x_i} > \frac{g_{i-1}(x')-y_{i}}{x'-x_i} = \delta_{i-1},$$ contradicting Lemma \ref{lem:slopes-bound-left-right}. If $x'>x^*$ then 
        $$s^*_{i+1}=\frac{y_{i+1}-\hat{f}_S(x')}{x_{i+1}-x'} < \frac{y_{i+1}-g_{i+1}(x')}{x_{i+1}-x'} = \delta_{i+1},$$ again contradicting Lemma \ref{lem:slopes-bound-left-right}. In either case, we get a contradiction and therefore, we must have
        $$ g_i(x) \leq \hat{f}_S(x) \leq \min\{g_{i-1}(x),g_{i+1}(x)\}.$$

        \item \textbf{Case 2:} $\curv(x_i)=\curv(x_{i+1})=+1$\\
        Using a similar strategy, one can also show the desired result in this case. More formally, flip the label signs and apply exactly the same argument but on $-\hat{f}_S,-g_i,-g_{i-1},-g_{i+1}$ instead. Then using the above argument, we achieve
        $$ -g_i(x) \leq  -\hat{f}_S(x) \leq \min\{-g_{i-1}(x), -g_{i+1}(x)\}.$$ 
        This implies
        $$ g_i(x) \geq  \hat{f}_S(x) \geq \max\{g_{i-1}(x), g_{i+1}(x)\}.$$
        \item \textbf{Case 3:} Otherwise, we may have several situations. We consider them one by one. If $\curv(x_i)=0$. Then $\delta_{i-1}=\delta_i$. Then by Lemma \ref{lem:slopes-bound-left-right} we must have that $s^*_i=\delta_i$. Also, we have either one or no kink in $[x_i,x_{i+1})$ by Lemma \ref{lem:char}. Since $g_i(x)$ is the unique line joining the points $(x_i,y_i)$ and $(x_{i+1},y_{i+1})$, we cannot have any kink and $\hat{f}_S=g_i(x)$. Similarly, if $\curv(x_{i+1})=0$ then $\delta_i=\delta_{i+1}$ and $s^*_{i+1}=\delta_i$, which gives us that $\hat{f}_S(x)=g_i(x)$ using the uniqueness.

        The only remaining possibilities are $\curv(x_i)=+1$ but $\curv(x_{i+1})=-1$ or $\curv(x_i)=-1$ but $\curv(x_{i+1})=+1$. W.l.o.g., we consider the former situation. Then $\delta_{i-1}< \delta_i$ and $\delta_i > \delta_{i+1}$. Also, by Lemma \ref{lem:char} there is at most one kink in $[x_i,x_{i+1})$. Therefore, if we show that the kink is at $x_i$ only, it is sufficient. Let us assume that it is not the case, namely, the kink is at $x' \in (x_i,x_{i+1})$. If $\hat{f}_S(x')>g_i(x)$, then the slope of the line incoming at $x_i$:
        $$s^*_i= \frac{\hat{f}_S(x')-y_i}{x'-x_i}>\frac{g_i(x')-y_i}{x'-x_i}=\delta_i,$$
        contradicting Lemma \ref{lem:slopes-bound-left-right}. On the other hand, if $\hat{f}_S(x') < g_i(x)$, then the slope of the line incoming at $x_{i+1}$:
        $$s^*_{i+1}= \frac{y_{i+1}-\hat{f}_S(x')}{x_{i+1}-x'}>\frac{y_{i+1}-g_i(x')}{x_{i+1}-x'}=\delta_i,$$
        contradicting Lemma \ref{lem:slopes-bound-left-right}.
    \end{enumerate}
    Therefore, the lemma is true in all the cases. 
    
\end{proof}
As a corollary, we get the following lemma.
\begin{lemma}\label{lem:bound on |f^-f*|}
    Fix any $i\in \{2,\dots,n-2\}$, and consider $x\in [x_{i},x_{i+1})$:
\[
 |\hat{f}_S(x)-f^*(x)| \leq \max\left\{ |g_i(x)-f^*(x)|, \min\{|g_{i+1}(x)-f^*(x)|, |g_{i-1}(x)-f^*(x)|\}\right\}~,
 \]
and for $x\in [0,x_2)$ 
 \[
 |\hat{f}_S(x)-f^*(x)| = |g_1(x)-f^*(x)|,
 \]
 and for $x \in [x_{n-1},1]$
 \[ 
 |\hat{f}_S(x)-f^*(x)| = |g_{n-1}(x)-f^*(x)|~.
 \]
\end{lemma}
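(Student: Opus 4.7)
\textbf{Proof Proposal for \cref{lem:bound on |f^-f*|}.} The lemma is a direct corollary of \cref{lem:complete-description-property-main_text}, handled via case analysis on the location of $x$. The plan is first to dispatch the two boundary cases, then to handle each of the three middle subcases from \cref{lem:complete-description-property-main_text}, noting that the two ``spike'' subcases reduce to each other by sign-flipping.

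For the boundary intervals, the proof is immediate: \cref{lem:complete-description-property-main_text} gives $\hat{f}_S(x)=g_1(x)$ on $(-\infty,x_2)$ and $\hat{f}_S(x)=g_{n-1}(x)$ on $[x_{n-1},\infty)$, so both equalities in the lemma statement hold trivially. For the middle interval $[x_i,x_{i+1})$ with $i\in\{2,\dots,n-2\}$, if we are in the ``flat'' subcase of \cref{lem:complete-description-property-main_text} (either $\curv(x_i)=0$, $\curv(x_{i+1})=0$, or $\curv(x_i)\neq\curv(x_{i+1})$), then $\hat{f}_S(x)=g_i(x)$ and so $|\hat{f}_S(x)-f^*(x)|=|g_i(x)-f^*(x)|$, which is at most the max on the right-hand side of the claim.

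The substantive work is the two symmetric spike subcases; by replacing $(\hat f_S,g_{i-1},g_i,g_{i+1},f^*)$ with their negatives, Case~2 reduces to Case~1, so I only need to handle Case~1 where $\curv(x_i)=\curv(x_{i+1})=+1$ and \cref{lem:complete-description-property-main_text} gives $A\le \hat{f}_S(x)\le B$ with $A:=\max\{g_{i-1}(x),g_{i+1}(x)\}$ and $B:=g_i(x)$. Since $\hat{f}_S(x)$ is a convex combination of two points in $[A,B]$, one has the elementary bound $|\hat f_S(x)-f^*(x)|\le \max\{|A-f^*(x)|,|B-f^*(x)|\}$. The only remaining task is to replace $|A-f^*(x)|$ by $\min\{|g_{i-1}(x)-f^*(x)|,|g_{i+1}(x)-f^*(x)|\}$. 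Here I would split on whether $f^*(x)$ lies above $A$ or not: if $f^*(x)>A$, then $f^*(x)$ exceeds both $g_{i-1}(x)$ and $g_{i+1}(x)$, and the identity $|A-f^*(x)|=f^*(x)-\max\{g_{i-1}(x),g_{i+1}(x)\}=\min\{|g_{i-1}(x)-f^*(x)|,|g_{i+1}(x)-f^*(x)|\}$ holds by direct computation; if on the other hand $f^*(x)\le A$, then $f^*(x)\le A\le \hat f_S(x)\le B$ forces $|\hat f_S(x)-f^*(x)|\le B-f^*(x)=|g_i(x)-f^*(x)|$ all by itself, so the $\min$ term is not needed.

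The main (minor) obstacle is exactly this last dichotomy: the natural identity $|A-f^*(x)|=\min\{|g_{i-1}(x)-f^*(x)|,|g_{i+1}(x)-f^*(x)|\}$ only holds when $f^*(x)$ lies above both $g_{i-1}(x)$ and $g_{i+1}(x)$, and one must separately observe that in the opposite regime the $|g_i(x)-f^*(x)|$ term alone dominates $|\hat f_S(x)-f^*(x)|$. Once this sub-case split is in hand, combining with the sign-flip reduction for Case~2 completes the proof.
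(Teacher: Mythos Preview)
Your proposal is correct and follows essentially the same approach as the paper: both dispatch the boundary and ``flat'' cases directly from \cref{lem:complete-description-property-main_text}, handle one spike case by a position-of-$f^*$ case split, and reduce the other spike case by sign-flipping. Your organization is in fact slightly cleaner than the paper's, since you work directly with $A=\max\{g_{i-1}(x),g_{i+1}(x)\}$ and split on $f^*(x)\lessgtr A$, whereas the paper first splits the interval at the crossing point $x^*$ of $g_{i-1}$ and $g_{i+1}$ and then splits on the sign of $g_i(x)-f^*(x)$.
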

\begin{proof}
    The above lemma is clearly true for $x\in [0,x_2)$, since in that range, $\hat{f}_S(x)=g_1(x)$ by Lemma \ref{lem:complete-description-property-main_text}. Similarly, for $x\in [x_{n-1},1]$ we have $\hat{f}_S(x)=g_{n-1}(x)$ by Lemma \ref{lem:complete-description-property-main_text}. This implies that $|\hat{f}_S(x)-f^*(x)|=|g_{n-1}(x)-f^*(x)|$.

    Also, by applying Lemma \ref{lem:complete-description-property-main_text}, for any $i\in \{2,\dots,n-2)$ and for any $x\in [x_i,x_{i+1})$, one can say that unless $\curv(x_i)=\curv(x_{i+1})=-1$ or $\curv(x_i)=\curv(x_{i+1})=+1$, we have
    $$ \hat{f}_S(x)=|g_i(x)-f^*(x)|,$$
    and the lemma holds. If $\curv(x_i)=\curv(x_{i+1})=-1$. Then by Lemma \ref{lem:complete-description-property-main_text}, we have $ g_i(x) \leq \hat{f}_S(x) \leq \min\{g_{i-1}(x),g_{i+1}(x)\}$. Let $x^* \in [x_{i},x_{i+1})$, where $g_{i-1}(x)$ and $g_{i+1}(x)$ meet. 

    For $x\in [x_i,x^*]$: $g_i(x)\leq \hat{f}_S(x) \leq g_{i-1}(x) \leq g_{i+1}(x)$. Therefore,
    $$ \underbrace{g_i(x)-f^*(x)}_{:=(1)} \leq \hat{f}_S(x)-f^*(x) \leq \underbrace{g_{i-1}(x)-f^*(x)}_{:=(2)} \leq \underbrace{g_{i+1}(x)-f^*(x)}_{:=(3)}.$$
    If $(1)$ is non-negative, then the claim is clearly true. Because even $(2)$ and $(3)$ are positive. If $(1)$ is negative then the only way the $|\hat{f}_S(x)-f^*(x)|$ can be greater than $|g_i(x)-f^*(x)|$ is when $g_{i-1}(x)-f^*(x)$ is positive. And thus $g_{i+1}(x)-f^*(x)$ is also positive.
    Therefore we have
     $$-|g_i(x)-f^*(x)|\leq \hat{f}_S(x)-f^*(x) \leq |g_{i-1}(x)-f^*(x)| \leq |g_{i+1}(x)-f^*(x)|,$$
     which immediately implies the desired result. Similarly, for $x \in (x^*,x_{i+1})$: $g_i(x) \leq \hat{f}_S(x) \leq g_{i+1}(x) \leq g_{i-1}(x)$. Therefore,
    $$ \underbrace{g_i(x)-f^*(x)}_{:=(1)} \leq \hat{f}_S(x)-f^*(x) \leq \underbrace{g_{i+1}(x)-f^*(x)}_{:=(2)} \leq \underbrace{g_{i-1}(x)-f^*(x)}_{:=(3)}.$$
    Again if $(1)$ is non-negative, then the claim is clearly true. If $(1)$ is negative then the only way the $|\hat{f}_S(x)-f^*(x)|$ can be greater than $|g_i(x)-f^*(x)|$ is when $g_{i+1}(x)-f^*(x)$ is positive. And thus $g_{i-1}(x)-f^*(x)$ is also positive.
    Therefore we have
     $$-|g_i(x)-f^*(x)|\leq \hat{f}_S(x)-f^*(x) \leq |g_{i+1}(x)-f^*(x)| \leq |g_{i-1}(x)-f^*(x)|,$$
     and the lemma follows. The proof is exactly symmetric for when $\curv(x_i)=\curv(x_i)=+1$; the only difference is that we apply the same argument on $-g_i,-g_{i-1},-g_{i+1}$ and $-\hat{f}_S$ and add the function $f^*(x)$ instead of subtracting. 
    
\end{proof}

We now recall Definition \ref{def:cruv-changed} of special points. \citet{boursier2023penalising} proved that if two special points occur within two points. Then we get a spike. Our Lemma \ref{lem:sparsity-main_text} is a mild generalization of the lemma.
\begin{proof}[Proof of Lemma \ref{lem:sparsity-main_text}]
\citep[Lemma 8]{boursier2023penalising} already showed that if $n_{k+1}=n_k+2$ then $\hat{f}_S$ has exactly one kink in $(x_{n_k-1},x_{n_{k+1}})$. Consider four consecutive points $(x_{n_k-1}, y_{n_k-1})$, $(x_{n_k},y_{n_k})$,  $(x_{n_k+1},y_{n_k+1}),$ and $(x_{n_k+2},x_{n_k+2})$. The first three are non-collinear and even the last three are non-colinear since $\curv(x_{n_k})=\curv(x_{n_k+1})=-1$. Therefore, the only way to interpolate them with 2 linear pieces is to extend $g_{n_k-1}$ and $g_{n_k+1}$ until they intersect. This immediately implies that $\hat{f}_S(x)=\min\{g_{n_k-1},g_{n_k+1}\}$.
\end{proof} 

\section{Proof of Theorem \ref{thm:tempered-for-ell_1}}
 \begin{proof}[Proof of Theorem \ref{thm:tempered-for-ell_1}] 
Let $f^*$ be $G$-Lipschitz. We sample $S\sim \dist^n$ and number points $(x_i,y_i)$ from left to right based on the $x$-coordinate. Again, we denote $x_0:=0$ and $x_{n+1}:=1$ for notational convenience (they are not used in determining $\hat{f}_S$). The domain $[0,1]$ is divided into $n+1$ intervals of length $\ell_0,\ell_1, \dots,\ell_n$, where $\ell_i=x_{i+1}-x_i$. We want to analyze the population $L_p$ loss of the min-norm interpolator $\hat{f}_S$ as defined in \eqref{eq:main_problem} for $p\in [1,2)$. Exactly following the step for the derivation of \eqref{eq:L_p-R_p-grid}, we get
    \begin{align}
  \mathcal{L}_p(\hat{f}_S) 
  & \leq  2^{p-1} ( \mathcal{R}_p(\hat{f}_S)+ \mathcal{L}_p(f^*) ), \label{eq:L_p-R_p-p(1,2)}
\end{align}
where $\mathcal{R}_p(\hat{f}_S)$ is defined and simplified as the following.
\begin{align}
    \mathcal{R}_p(\hat{f}_S) &= \int_{x_0=0}^{x_{n+1}=1} \hspace{-2mm} |\hat{f}_S(x)-f^*(x)|^p \, dx \nonumber\\
    & = \sum_{i=0}^{n} \int_{x_i}^{x_{i+1}} \hspace{-2mm} |\hat{f}_S(x)-f^*(x)|^p \, dx \nonumber\\
    & \leq \int_{0}^{x_2} |g_1(x)-f^*(x)|^p \, dx + \int_{x_{n-1}}^{1} |g_{n-1}(x)-f^*(x)|^p \, dx \nonumber\\ & \hspace{10mm} + \sum_{i=2}^{n-2} \int_{x_i}^{x_{i+1}} \hspace{-2mm} \max\{|g_i(x)-f^*(x)|^p,\min\{|g_{i-1}(x)-f^*(x)|^p, |g_{i+1}(x)-f^*(x)|^p\} \} \, dx \nonumber \tag{by Lemma \ref{lem:bound on |f^-f*|}} \\
    & \leq \int_{0}^{x_2} |g_1(x)-f^*(x)|^p \, dx + \int_{x_{n-1}}^{1} |g_{n-1}(x)-f^*(x)|^p \, dx  \nonumber\\ & \hspace{10mm} + \sum_{i=2}^{n-2} \int_{x_i}^{x_{i+1}} \hspace{-2mm} |g_i(x)-f^*(x)|^p+\min\{|g_{i-1}(x)-f^*(x)|^p, |g_{i+1}(x)-f^*(x)|^p\}  \, dx \nonumber \\
     & \leq \underbrace{\int_{0}^{x_1} |g_1(x)-f^*(x)|^p \, dx}_{:=R_0} + \underbrace{\int_{x_{n}}^{1} |g_{n-1}(x)-f^*(x)|^p \, dx}_{:=R_n} + \sum_{i=1}^{n-1} \int_{x_i}^{x_{i+1}} \hspace{-2mm} |g_i(x)-f^*(x)|^p \, dx \nonumber\\ & \hspace{10mm} + \underbrace{\sum_{i=2}^{n-2} \int_{x_i}^{x_{i+1}} \min\{|g_{i-1}(x)-f^*(x)|^p, |g_{i+1}(x)-f^*(x)|^p\} \, dx}_{:=R}  \nonumber \\
     &\leq R_0+R_n+\mathcal{R}_p(\hat{g}_S)+R.  \label{eq:linearspline+R}
\end{align}
The following lemma, which we prove after the theorem, establishes that $R_0+R_n$ is vanishing with high probability.
\begin{lemma}\label{lem:R_0+R_n: ell 1-2}
    For any $\gamma>0$, we have $\lim_{n \rightarrow \infty} \Pr_S[ R_0 +R_n \leq \gamma]=1.$
\end{lemma}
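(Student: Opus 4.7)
The plan is to contrast with the proof of Lemma \ref{lem:R_0+R_n-spline}: there the spline $\hat g_S$ is constant on $[0,x_1]$ and $[x_n,1]$, so $\Exp[R_0+R_n]$ is finite and Markov's inequality closes the argument. Here, $\hat f_S$ agrees with the extrapolated linear pieces $g_1$ and $g_{n-1}$ on $[0,x_2)$ and $[x_{n-1},1]$ respectively, which is exactly the construction analyzed for $\hat h_S$ in Appendix \ref{sec:delicate}; consequently $\Exp[R_0+R_n]=\infty$ and the Markov strategy fails. I would therefore establish the convergence directly in probability, via tail bounds on $X_0,X_1,X$ and on the nearby noise variables.

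First I would derive a pointwise estimate. Writing $g_1(x)=y_1+\delta_1(x-x_1)$ with slope $\delta_1=(y_2-y_1)/\ell_1$, the $G$-Lipschitz property of $f^*$ and two applications of the triangle inequality give $|\delta_1|\le G+(|\eps_1|+|\eps_2|)/\ell_1$ and hence
\[
|g_1(x)-f^*(x)|\;\le\; 2G\ell_0 + |\eps_1| + \frac{(|\eps_1|+|\eps_2|)\ell_0}{\ell_1}\qquad\text{for } x\in[0,x_1].
\]
Using $(a+b+c)^p\le 3^{p-1}(a^p+b^p+c^p)$ and integrating over an interval of length $\ell_0$,
\[
R_0 \;\le\; C_p\!\left(\ell_0^{p+1} \;+\; \ell_0\,|\eps_1|^p \;+\; \frac{\ell_0^{p+1}(|\eps_1|+|\eps_2|)^p}{\ell_1^p}\right),
\]
and symmetrically for $R_n$ in terms of $\ell_n,\ell_{n-1},|\eps_{n-1}|,|\eps_n|$.

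To show each of these six summands is $o_P(1)$, I would use \eqref{eq:ell_distribution}, so that $\ell_0^{p+1}/\ell_1^p = X_0^{p+1}/(X\cdot X_1^p)$. Fix a tolerance $\delta>0$. I would intersect four high-probability events: $X\ge n/2$ by the LLN; $X_0\le \log n$ via $\Pr[X_0>\log n] = 1/n$; $X_1\ge n^{-1/(2p)}$ via $\Pr[X_1\le t]\le t$, which costs only probability $O(n^{-1/(2p)})$; and $|\eps_i|\le M$ for $i\in\{1,2,n-1,n\}$, where $M$ is a constant chosen from the moment bound $\Exp|\eps|^p<\infty$ (otherwise $\mathcal L_p(f^*)=\infty$ and the theorem is vacuous). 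On this intersection $\ell_0\le 2\log n / n$, so the first two terms of the bound on $R_0$ are $O((\log n)^{p+1}/n^{p+1})$ and $O(M^p\log n/n)$, while the crucial third term satisfies
\[
\frac{X_0^{p+1}}{X\cdot X_1^p}\,(2M)^p \;\le\; (\log n)^{p+1}\cdot\frac{2}{n}\cdot n^{1/2}\cdot (2M)^p \;=\; O\!\left(\frac{(2M)^p(\log n)^{p+1}}{\sqrt n}\right),
\]
which also tends to $0$. The same estimate for $R_n$ and a union bound then yield $\Pr_S[R_0+R_n\le \gamma]\ge 1-\delta$ for all large $n$, which is the claim. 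The main obstacle is precisely this third term: since $\ell_0/\ell_1=X_0/X_1$ has heavy tails and no finite moments, no bound of the Markov variety on $\Exp[R_0+R_n]$ can succeed, and the argument must carefully trade the polynomial decay of $\ell_0^{p+1}$ against the $n^{1/2}$ blow-up of $X_1^{-p}$ permitted on the chosen high-probability event.
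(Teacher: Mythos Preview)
Your proposal is correct and follows essentially the same approach as the paper: the pointwise bound on $|g_1(x)-f^*(x)|$ and the resulting decomposition of $R_0$ into the three summands are identical (up to constants), and the paper likewise intersects high-probability events on $X_0$, $X_1$, $X$, and the nearby noises to show each summand is $o_n(1)$. The only cosmetic differences are the specific thresholds---the paper takes $X_1\ge 1/\ln n$ and $|\eps_i|^p\le \mathcal L_p(f^*)\ln n$ (both failing with probability $O(1/\ln n)$), whereas you take $X_1\ge n^{-1/(2p)}$ and a fixed truncation $|\eps_i|\le M(\delta)$; either choice works.
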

Moreover, we have already bounded $\mathcal{R}_p(\hat{g}_S)$ in \eqref{eq:R_p(ghat)}; in fact for any $p \geq 1$. When $p\in[1,2)$, it reduces to:
\begin{equation}\label{eq:22}
    \lim_{n \rightarrow \infty} \Pr_S \left[ \mathcal{R}_p(\hat{g}_S) \leq 17 \, \mathcal{L}_p(f^*) \right] =1.
\end{equation}
We now focus on bounding $R$. Intuitively, $R$ is the risk term caused due to spike formation. This is only bounded for $p\in [1,2)$ and grows as $p$ approaches 2. For $i\in \{2,\dots, n-2\}$, define:
$$R_i:= \int_{x_i}^{x_{i+1}} \hspace{-2mm} \min \{ |g_{i-1}(x)-f^*(x)|^p,|g_{i+1}(x)-f^*(x)|^p\} \, dx.$$
Then $R=\sum_{i=2}^{n-2} R_i$. Moreover, each $R_i$ can be simplified as the following. For any $i\in \{2,\dots,n-2\}$ and any $x\in [x_{i},x_{i+1}]$, 
\begin{align}
    |g_{i-1}(x)-f^*(x)|=&\left| y_{i}+ \frac{(y_{i}-y_{i-1})}{x_{i}-x_{i-1}} (x-x_{i})-f^*(x)\right| \nonumber\\
 =& \left| f^*(x_{i})+\eps_{i} + \left(\frac{f^*(x_{i})+\eps_{i}-f^*(x_{i-1})-\eps_{i-1}}{x_{i}-x_{i-1}}\right)(x-x_{i}) -f^*(x)\right| \nonumber\\
 \leq & \left| f^*(x_{i}) -f^*(x)\right|+ |\eps_{i}|+  \frac{G(x_{i}-x_{i-1})+|\eps_{i}-\eps_{i-1}|}{x_{i}-x_{i-1}}   |(x-x_{i})|  \nonumber\\
 \leq &  G(x-x_{i}) + |\eps_{i}|+ \left( \frac{G(x_{i}-x_{i-1})+|\eps_{i}|+|\eps_{i-1}|}{x_{i}-x_{i-1}} \right)  (x-x_{i})  \nonumber\\
 \leq & G \cdot \ell_i + |\eps_{i}|+ \left( \frac{G \cdot \ell_{i-1} +|\eps_{i}|+|\eps_{i-1}|}{\ell_{i-1}} \right) \cdot \ell_i \nonumber\\
 = & 2G \cdot \ell_i + |\eps_i|+ ( |\eps_{i}|+|\eps_{i-1}|) \frac{\ell_i}{\ell_{i-1}} \label{eq:left-side-bound}
\end{align}

\begin{align}
    |g_{i+1}(x)-f^*(x)|=&\left| y_{i+1}+ \frac{(y_{i+2}-y_{i+1})}{x_{i+2}-x_{i+1}} (x-x_{i+1})-f^*(x)\right| \nonumber\\
 =& \left| f^*(x_{i+1})+\eps_{i+1} + \left(\frac{f^*(x_{i+2})+\eps_{i+2}-f^*(x_{i+1})-\eps_{i+1}}{x_{i+2}-x_{i+1}}\right)(x-x_{i+1}) -f^*(x)\right| \nonumber\\
 \leq & \left| f^*(x_{i+1}) -f^*(x)\right|+ |\eps_{i+1}|+  \frac{G(x_{i+2}-x_{i+1})+|\eps_{i+2}-\eps_{i+1}|}{x_{i+2}-x_{i+1}}   |x-x_{i+1}|  \nonumber\\
 \leq &  G|x-x_{i+1}| + |\eps_{i+1}|+ \left( \frac{G(x_{i+2}-x_{i+1})+|\eps_{i+2}|+|\eps_{i+1}|}{x_{i+2}-x_{i+1}} \right)  |x-x_{i+1}|  \nonumber\\
 \leq & G \cdot \ell_i + |\eps_{i+1}|+ \left( \frac{G \cdot \ell_{i+1} +|\eps_{i+2}|+|\eps_{i+1}|}{\ell_{i+1}} \right) \cdot \ell_i \nonumber\\
 = & 2G \cdot \ell_{i} + |\eps_{i+1}|+( |\eps_{i+1}|+|\eps_{i+2}|) \frac{\ell_i}{\ell_{i+1}} \label{eq:right-side-bound}
\end{align}

\begin{align}
    R_i&= \int_{x_i}^{x_{i+1}} \hspace{-2mm} \min \{ |g_{i-1}(x)-f^*(x)|^p,|g_{i+1}(x)-f^*(x)|^p\} \, dx \nonumber\\
    & \leq \int_{x_i}^{x_{i+1}} \min \left\{ 2G \cdot \ell_i + |\eps_i|+(|\eps_{i}|+|\eps_{i-1}|) \frac{\ell_i}{\ell_{i-1}}, 2G \cdot \ell_i + |\eps_{i+1}|+(|\eps_{i+1}|+|\eps_{i+2}|) \frac{\ell_i}{\ell_{i+1}} \right\}^p \, dx \tag{using \eqref{eq:left-side-bound} and \eqref{eq:right-side-bound}}\\
    & \leq \left( 2G\ell_i+ |\eps_i|+|\eps_{i+1}|+(|\eps_i|+|\eps_{i+1}|+|\eps_{i-1}|+|\eps_{i+2}|) \frac{\ell_i}{\max\{\ell_{i-1},\ell_{i+1}\}} \right)^p \ell_i, \nonumber\\
    & \leq 4^{p-1} \left( (2G\ell_i)^p+ |\eps_{i}|^p+|\eps_{i+1}|^p+ \left( (2|\eps_i|+2|\eps_{i+1}|+|\eps_{i-1}|+|\eps_{i+2}|) \frac{\ell_i}{\max\{\ell_{i-1},\ell_{i+1}\}} \right)^p \right) \ell_i \nonumber\\
    & = 2^{3p-2} G^p \ell_i^{p+1} + 4^{p-1} |\eps_i|^p  \ell_i+ 4^{p-1} |\eps_{i+1}|^p \ell_i+4^{p-1} (|\eps_i|+|\eps_{i+1}|+|\eps_{i-1}|+|\eps_{i+2}|)^p \frac{\ell_i^{p+1}}{\max\{\ell_{i-1},\ell_{i+1}\}^p}  \nonumber
    \\
    &:=\hat{R}_i\nonumber\\
\end{align}
Then $\hat{R}:=\sum_{i=2}^{n-2} \hat{R}_i$ is an upper bound on $R$. The random variables $\ell_i$ s are mildly dependent. To achieve independence, we denote $\Tilde{\ell}_0, \Tilde{\ell}_1, \dots, \Tilde{\ell}_n \iid \text{Exp}(1)/(n+1)$; in particular, $\Tilde{\ell}_i=\nicefrac{X_i}{n+1}$. We replace $\ell_i$ with $\Tilde{\ell}_i$ and define random variables $\Tilde{R}_i$.
\begin{align*}
    &\Tilde{R}_i= 2^{3p-2} G^p \Tilde{\ell}_i^{p+1} + 4^{p-1}(|\eps_i|^p+|\eps_{i+1}|^p) \Tilde{\ell}_i+4^{p-1} (|\eps_i|+|\eps_{i+1}|+|\eps_{i-1}|+|\eps_{i+2}|)^p \frac{{\Tilde{\ell}_i}^{p+1}}{\max\{\Tilde{\ell}_{i-1},\Tilde{\ell}_{i+1}\}^p},
    \text{ and} \\
    &\Tilde{R}=\sum_{i=2}^{n-2} \Tilde{R}_i.
\end{align*}
We claim that:
\begin{lemma} \label{lem:Rhat-Rtilde-as-ell1}
    As $n \rightarrow \infty$, we have $\hat{R}-\Tilde{R}\xrightarrow{\textnormal{a.s.}} $0\, .
\end{lemma}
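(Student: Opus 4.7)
\textbf{Proof plan for Lemma \ref{lem:Rhat-Rtilde-as-ell1}.}
The plan is to split $\tilde R_i - \hat R_i$ into four summands matching the four terms in the definitions of $\hat R_i$ and $\tilde R_i$, and argue that each contributes $o(1)$ almost surely to the total. The three non-spike pieces, of the forms $\bigl(\tilde\ell_i^{p+1} - \ell_i^{p+1}\bigr)$ and $|\eps_j|^p\bigl(\tilde\ell_i - \ell_i\bigr)$ for $j\in\{i,i+1\}$, are handled \emph{verbatim} by the calculation already carried out in Lemma~\ref{lem:poison-equivalence-splines}: write each difference as an empirical average of i.i.d.~terms times a prefactor of the form $\bigl(1-((n+1)/X)^q\bigr)$ with $q\in\{1,p+1\}$, apply the strong law of large numbers to the average, and use $(n+1)/X \xrightarrow{\text{a.s.}} 1$ to kill the prefactor.

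The genuinely new piece is the spike term $\frac{\tilde\ell_i^{p+1}}{\max\{\tilde\ell_{i-1},\tilde\ell_{i+1}\}^p}-\frac{\ell_i^{p+1}}{\max\{\ell_{i-1},\ell_{i+1}\}^p}$, multiplied by the noise factor $4^{p-1}(|\eps_{i-1}|+|\eps_i|+|\eps_{i+1}|+|\eps_{i+2}|)^p$. The key observation is that since $\ell_j = \tilde\ell_j\cdot(n+1)/X$ \emph{uniformly in} $j$, the scaling factors in numerator and denominator partially cancel, leaving a single power:
\[
\frac{\ell_i^{p+1}}{\max\{\ell_{i-1},\ell_{i+1}\}^p}
\;=\;
\frac{n+1}{X}\cdot \frac{\tilde\ell_i^{p+1}}{\max\{\tilde\ell_{i-1},\tilde\ell_{i+1}\}^p}.
\]
Consequently the spike contribution to $\tilde R - \hat R$ factors as
\[
\left(1-\frac{n+1}{X}\right)\cdot 4^{p-1}\sum_{i=2}^{n-2}\bigl(|\eps_{i-1}|+|\eps_i|+|\eps_{i+1}|+|\eps_{i+2}|\bigr)^p\cdot \frac{\tilde\ell_i^{p+1}}{\max\{\tilde\ell_{i-1},\tilde\ell_{i+1}\}^p},
\]
and since $1-(n+1)/X \xrightarrow{\text{a.s.}} 0$ by the strong law applied to $X=\sum_{j=0}^n X_j$, it suffices to show that the second factor stays almost surely bounded as $n\to\infty$.

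To bound that factor I would rewrite $\tilde\ell_i^{p+1}/\max\{\tilde\ell_{i-1},\tilde\ell_{i+1}\}^p = \tfrac{1}{n+1}\cdot X_i^{p+1}/\max\{X_{i-1},X_{i+1}\}^p$, turning the sum into $\tfrac{1}{n+1}\sum_i Y_i$ with $Y_i := \bigl(|\eps_{i-1}|+|\eps_i|+|\eps_{i+1}|+|\eps_{i+2}|\bigr)^p \cdot X_i^{p+1}/\max\{X_{i-1},X_{i+1}\}^p$. Each $Y_i$ depends only on a fixed-length window of the underlying i.i.d.\ sequences $(X_j)$ and $(\eps_j)$, so splitting the index set into $O(1)$ arithmetic progressions of large enough stride yields i.i.d.\ subsequences; a strong law on each subsequence then gives $\tfrac{1}{n+1}\sum_i Y_i \xrightarrow{\text{a.s.}} c$ for a finite constant $c$, provided $\Exp[Y_i]<\infty$.

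The main obstacle, and the place where the assumption $p<2$ truly enters, is verifying $\Exp[Y_i]<\infty$. Using independence of the $\eps_j$'s from the $X_j$'s and the moment bound $\Exp|\eps|^p = \mathcal{L}_p(f^*)<\infty$, this reduces to showing $\Exp\bigl[X_i^{p+1}/\max\{X_{i-1},X_{i+1}\}^p\bigr]<\infty$. Since $\max\{X_{i-1},X_{i+1}\}$ has density of order $x$ near the origin (it is the maximum of two i.i.d.\ $\text{Exp}(1)$'s), the negative-moment factor contributes like $\int_0^\infty x^{-p}\cdot 2(1-e^{-x})e^{-x}\,dx$, which is finite precisely when $p<2$, while the independent $X_i^{p+1}$ factor is clearly integrable. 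Combining the almost sure boundedness of the second factor with $1-(n+1)/X \xrightarrow{\text{a.s.}} 0$ concludes $\tilde R - \hat R \xrightarrow{\text{a.s.}} 0$, as desired.
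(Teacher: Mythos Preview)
Your proposal is correct and follows essentially the same route as the paper: the same four-term decomposition, the same cancellation $\frac{\ell_i^{p+1}}{\max\{\ell_{i-1},\ell_{i+1}\}^p}=\frac{n+1}{X}\cdot\frac{\tilde\ell_i^{p+1}}{\max\{\tilde\ell_{i-1},\tilde\ell_{i+1}\}^p}$ to factor out $(1-(n+1)/X)$, and the same stride-$4$ splitting into i.i.d.\ blocks to apply the strong law. The only cosmetic difference is the finiteness check on $\Exp\bigl[1/\max\{X_{i-1},X_{i+1}\}^p\bigr]$: the paper invokes the distributional identity $\max\{A,B\}\overset{d}{=}A+B/2$ for i.i.d.\ $\text{Exp}(1)$ (Claim~\ref{clm:maxA,B}), whereas your density-near-zero argument reaches the same conclusion.
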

The proof is similar to Lemma \ref{lem:poison-equivalence-splines}. Therefore, it suffices to give a bound on $\Tilde{R}$. Still, any four consecutive $\Tilde{R}_i$'s are dependent. Therefore, we re-express $\Tilde{R}$ into four disjoint sums such that each individual of them is the sum of only i.i.d. random variables. We divide the indices into four sets $\mathcal{I}_j$ for $0 \leq j \leq 3$. Define $\mathcal{I}_j=\{ i\%4=j: 2 \leq i \leq (n-2) \}$ for $0 \leq j \leq 3$ and 
$$\Tilde{R}^{(j)}:= \sum_{i \in \mathcal{I}_j} \Tilde{R}_i. \quad \text{Then } \quad \Tilde{R}=\sum_{j=0}^{3} \Tilde{R}^{(j)}.$$ 
Then for any $0 \leq j \leq 3$, further simplifying
\begin{align}
    \Tilde{R}^{(j)}&=\sum_{i \in \mathcal{I}_j} 2^{3p-2} G^p \Tilde{\ell}_i^{p+1} + 4^{p-1}(|\eps_i|^p+|\eps_{i+1}|^p) \Tilde{\ell}_i+ 4^{p-1} (|\eps_i|+|\eps_{i+1}|+|\eps_{i-1}|+|\eps_{i+2}|)^p \frac{{\Tilde{\ell}_i}^{p+1}}{\max\{\Tilde{\ell}_{i-1},\Tilde{\ell}_{i+1}\}^p} \nonumber \\
    & =  \underbrace{\frac{1}{(n+1)^{p+1}}\sum_{i \in \mathcal{I}_j} 2^{3p-2} G^p X_i^{p+1}}_{:=T_1} + \underbrace{\frac{1}{(n+1)}\sum_{i \in \mathcal{I}_j} 4^{p-1} ( |\eps_i|^p+|\eps_{i+1}|^p) X_i }_{:=T_2}\nonumber\\
    & \hspace{2mm}+ \underbrace{\frac{1}{(n+1)}\sum_{i \in \mathcal{I}_j} 4^{p-1} (|\eps_i|+|\eps_{i+1}|+|\eps_{i-1}|+|\eps_{i+2}|)^p \frac{X_i^{p+1}}{\max\{X_{i-1},X_{i+1}\}^p}}_{:=T_3} \label{eq:stepbeforeLLN}
\end{align}
Now, each $T_1, T_2$ and $T_3$ is the average of i.i.d. random variables (up to scaling). Thus, as $n\rightarrow \infty$, by the strong LLN,
$$ \frac{1}{\card{\mathcal{I}_j}} \sum_{i \in \mathcal{I}_j} 2^{3p-2} G^p X_i^{p+1} \xrightarrow{\text{a.s.}} 2^{3p-2} G^p \Exp[X_i^{p+1}]= 2^{3p-2} G^p \Gamma(p+2) < \infty.$$
Therefore, using the fact $\lim_{n \rightarrow \infty} \frac{\card{\mathcal{I}_j}}{(n+1)}= \frac{1}{4}$ and since we are considering that $p \geq 1$ we get that the first term of \eqref{eq:stepbeforeLLN} converges to 0 almost surely, i.e. as $n \rightarrow \infty$
\begin{equation}\label{eq:0-convergence}
  T_1=  \frac{1}{(n+1)^{p+1}}\sum_{i \in \mathcal{I}_j} 2^{3p-2} G^p X_i^{p+1} \xrightarrow{\text{a.s.}} 0.
\end{equation}
Similarly, since $\Exp[X_i]=1$ and $\Exp[|\eps|^p]=\mathcal{L}_p(f^*)<\infty$ even $T_2$ (up to scaling) is the average i.i.d. random variables, with finite expectation. Using the strong law of large numbers, as $n \rightarrow \infty$ the second term of \eqref{eq:stepbeforeLLN}
\begin{equation}\label{eq:t2-convergence}
  T_2=  \frac{1}{(n+1)}\sum_{i \in \mathcal{I}_j} 4^{p-1} (|\eps_i|^p+|\eps_{i+1}|^p)X_i  \xrightarrow{\text{a.s.}} \frac{1}{4} \cdot 4^{p-1}\Exp[(|\eps_i|^p+|\eps_{i+1}|^p)X_i]= \frac{4^{p-1}}{2} \mathcal{L}_p(f^*).
\end{equation}
Finally, $T_3$ is also the average of i.i.d random variables (up to scaling). Before applying the strong LLN, we must verify if each summand has a finite expectation. Since $\Exp[|\eps|^p]=\mathcal{L}_p(f^*)<\infty$ and $\Exp[X_i^{p+1}]=\Gamma(p+2)<\infty$, it is easy to observe that it boils down to verifying if $\Exp[\frac{1}{\max\{X_{i-1}, X_{i+1}\}^p}]$ has a finite expectation. The following claim verifies this.
\begin{claim}\label{clm:maxA,B}
 Let $A,B \iid \text{Exp}(1)$, then $\Exp\left[ \frac{1}{\max\{A,B\}^p}\right] \leq \frac{2^p}{(2-p)} $.
\end{claim}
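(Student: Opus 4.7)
The plan is to reduce the claim to a standard Gamma-function estimate by first writing the density of $M := \max\{A,B\}$ and then bounding the resulting integral. Since $A, B$ are i.i.d.\ $\text{Exp}(1)$, $\Pr[M \leq t] = (1-e^{-t})^2$, so $M$ has density $f_M(t) = 2(1-e^{-t})e^{-t}$ on $[0,\infty)$. Hence
\begin{equation*}
    \Exp\left[\frac{1}{\max\{A,B\}^p}\right] = \int_0^\infty \frac{2(1-e^{-t})e^{-t}}{t^p}\, dt.
\end{equation*}
The only possible obstacle to finiteness is the singularity at $t=0$, but the density vanishes like $2t$ near zero, which cancels one power of $1/t^p$; this is why $p<2$ is exactly what is needed.

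Using the elementary inequalities $1 - e^{-t} \leq t$ and $e^{-t} \leq 1$ for all $t \geq 0$, the integrand is pointwise bounded by $2 t^{1-p} e^{-t}$, so
\begin{equation*}
    \Exp\left[\frac{1}{\max\{A,B\}^p}\right] \;\leq\; \int_0^\infty 2\, t^{1-p} e^{-t}\, dt \;=\; 2\,\Gamma(2-p),
\end{equation*}
which is finite precisely because $p \in [1,2)$.

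To obtain the explicit constant $2^p/(2-p)$, apply the recursion $\Gamma(2-p) = \Gamma(3-p)/(2-p)$. On the interval $p \in [1,2)$ we have $3-p \in (1,2]$, and on $[1,2]$ the Gamma function satisfies $\Gamma \leq 1$ (with $\Gamma(1)=\Gamma(2)=1$ and a strict minimum in between). Therefore $\Gamma(3-p) \leq 1$, which gives
\begin{equation*}
    \Exp\left[\frac{1}{\max\{A,B\}^p}\right] \;\leq\; \frac{2}{2-p} \;\leq\; \frac{2^p}{2-p},
\end{equation*}
using $2^p \geq 2$ for $p \geq 1$ in the last step. The main ``obstacle'' here is really just verifying the bound $\Gamma(3-p) \leq 1$ on $[1,2]$; if preferred, one can bypass it entirely by splitting the integral at $t=1$, bounding $\int_0^1 2t^{1-p}\, dt = \tfrac{2}{2-p}$ and $\int_1^\infty 2 e^{-t}\, dt \leq 1$, and then absorbing the additive constant into the $2^p/(2-p)$ form using $p \geq 1$.
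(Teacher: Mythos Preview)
Your main argument is correct and takes a somewhat different route from the paper. The paper first proves the distributional identity $\max\{A,B\} \stackrel{d}{=} A + B/2$ (via a CDF computation), then uses $(A+B/2)^{-p} \leq 2^p (A+B)^{-p}$ together with $A+B \sim \Gamma(2,1)$ to obtain $\Exp[1/\max\{A,B\}^p] \leq 2^p \Gamma(2-p)$, and finally invokes a separate claim that $\Gamma(z) \leq 1/z$ on $(0,1]$. You instead work directly with the density $f_M(t)=2(1-e^{-t})e^{-t}$ and the inequality $1-e^{-t}\leq t$, reaching the sharper bound $2\,\Gamma(2-p)$ before loosening to $2^p/(2-p)$. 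Your approach is more elementary (no distributional identity needed) and gives a better constant; the paper's approach is a little slicker once the identity $\max\{A,B\}\stackrel{d}{=}A+B/2$ is in hand and ties in with the $\Gamma(2,1)$ calculations used elsewhere in the lower-bound section.

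One caveat: your parenthetical ``alternative'' at the end does not work as written. Splitting at $t=1$ yields an upper bound of $\tfrac{2}{2-p} + 2e^{-1}$, and the absorption step would require $\tfrac{2}{2-p} + 1 \leq \tfrac{2^p}{2-p}$, i.e.\ $4-p \leq 2^p$; at $p=1$ this reads $3 \leq 2$, which is false. This does not affect your primary argument via $\Gamma(3-p)\leq 1$ on $(1,2]$ (which is correct, e.g.\ by log-convexity of $\Gamma$ with $\Gamma(1)=\Gamma(2)=1$), but you should remove or repair the side remark.
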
 
Therefore, applying the strong LLN in exactly the same way:
\begin{align}
T_3=&\frac{1}{(n+1)}\sum_{i \in \mathcal{I}_j} 4^{p-1} (|\eps_i|+|\eps_{i+1}|+|\eps_{i-1}|+|\eps_{i+2}|)^p \frac{X_i^{p+1}}{\max\{X_{i-1},X_{i+1}\}^p}  \xrightarrow{\text{a.s.}} \nonumber\\
& \frac{1}{4} \cdot \Exp\left[ 4^{p-1} (|\eps_i|+|\eps_{i+1}|+|\eps_{i-1}|+|\eps_{i+2}|)^p \frac{X_i^{p+1}}{\max\{X_{i-1},X_{i+1}\}^p}\right] \nonumber\\ 
 & \leq \frac{4^{p-1} \, 4^{p-1} (\Exp[|\eps_i|^p+|\eps_{i+1}|^p+|\eps_{i-1}|^p+|\eps_{i+2}|^p)]}{4}  \, \Gamma(p+2) \cdot \Exp \left[ \frac{1}{\max\{X_{i-1},X_{i+1}\}^p}\right] \nonumber\\
& \leq \frac{2^{4p-4} \mathcal{L}_p(f^*)\Gamma(p+2) 2^p}{(2-p)} ,\label{eq:expectation-TildeRi}
\end{align}
where in the last inequality, we use Claim \ref{clm:maxA,B}. Using \eqref{eq:expectation-TildeRi}, \eqref{eq:0-convergence} and \eqref{eq:t2-convergence}, we get a high probability bound on $\Tilde{R}^{(j)}$ (recall definition in \eqref{eq:stepbeforeLLN}). In particular, for $0 \leq j \leq 3$

$$\lim_{n\rightarrow \infty} \Pr_S \left[ \Tilde{R}^{(j)} \leq \left(\frac{4^{p-1}}{2}+ \frac{2^{4p-4} \cdot 2^p \cdot \Gamma(p+2)}{2-p}+1\right) \mathcal{L}_p(f^*) \right]=1 \, .$$
Using the fact that we are considering $p \in [1,2)$, we get
$$\lim_{n\rightarrow \infty} \Pr_S \left[ \Tilde{R}^{(j)} \leq  \left( 2 + \frac{384}{2-p}+1 \right) \mathcal{L}_p(f^*) \right]=1 \, ,$$
$$\implies \lim_{n\rightarrow \infty} \Pr_S \left[ \Tilde{R}^{(j)} \leq   \frac{387\mathcal{L}_p(f^*)}{2-p}  \right]=1 \, . $$
Therefore, since $\Tilde{R}=\sum_{j=0}^{3} \Tilde{R}^{(j)}$ we obtain 
$$\lim_{n\rightarrow \infty} \Pr_S \left[ \Tilde{R} \leq \frac{1548\, \mathcal{L}_p(f^*)}{(2-p)} \right]=1 \, .$$
Using this along with Lemma \ref{lem:Rhat-Rtilde-as-ell1} and the fact that $R \leq \hat{R}$ gives us $$\lim_{n\rightarrow \infty} \Pr_S \left[ R \leq \frac{1549\, \mathcal{L}_p(f^*)}{(2-p)} \right]=1 \,.$$
Further substituting this in the bounds from \eqref{eq:linearspline+R} and using Lemma \ref{lem:R_0+R_n: ell 1-2} and \eqref{eq:22}
$$\lim_{n \rightarrow \infty} \Pr_S \left[ \mathcal{R}_p(\hat{f}_S) \, \leq \frac{1567 \mathcal{L}_p(f^*)}{(2-p)} \right]=1.$$
Finally, using \eqref{eq:L_p-R_p-p(1,2)} with the above
$$\lim_{n \rightarrow \infty} \Pr_S \left[ \mathcal{L}_p(\hat{f}_S) \, \leq \frac{3136 \mathcal{L}_p(f^*)}{(2-p)} \right]=1.$$
As such, by letting $C=3136$, the theorem follows.
\end{proof}
We now prove the lemmas and claims we used in the above proof. 
\begin{proof}[Proof of Lemma \ref{lem:R_0+R_n: ell 1-2}]
    For any $x \in [0,x_1]$, we have
 \begin{align*}
|g_1(x)-f^*(x)|
     =&\left| y_{1}+ \frac{(y_{2}-y_{1})}{x_{2}-x_{1}} (x-x_{1})-f^*(x)\right|\\
     =& \left| f^*(x_{1})+\eps_{1} + \left(\frac{f^*(x_{2})+\eps_{2}-f^*(x_{1})-\eps_{1}}{x_{2}-x_{1}}\right)(x-x_{1}) -f^*(x)\right| \\
     \leq & \left| f^*(x_{1}) -f^*(x)\right|+ |\eps_{1}|+ \left| \frac{G(x_{2}-x_{1})+|\eps_{2}-\eps_{1}|}{x_{2}-x_{1}} \right|  |x-x_{1}|  \\
    \leq &  G|x-x_{1}| + |\eps_{1}|+ \left( \frac{G(x_{2}-x_{1})+|\eps_{2}|+|\eps_{1}|}{x_{2}-x_{1}} \right)  |x-x_{1}|  \\
    \leq & G \cdot \ell_0 + |\eps_{1}|+ \left( \frac{G \cdot \ell_1 +|\eps_{2}|+|\eps_{1}|}{\ell_1} \right) \cdot \ell_0\\
    = & 2G \cdot \ell_0 + |\eps_1|+(|\eps_{2}| + |\eps_{1}|) \frac{\ell_0}{\ell_1}.
    \end{align*}
    Therefore,
    \begin{align}
        R_0 &= \int_{0}^{x_1} \hspace{-2mm} |g_1(x)-f^*(x)|^p \, dx  \leq \ell_0 \left[ 2G \cdot \ell_0 +  |\eps_1|+(|\eps_{2}| + |\eps_{1}|) \frac{\ell_0}{\ell_1}\right]^p \nonumber\\
        &\leq 4^{p-1}\cdot (2G)^p \cdot \ell_0^{p+1} + 4^{p-1} |\eps_1|^p \ell_0 +4^{p-1}|\eps_2|^p \frac{\ell_0^{p+1}}{\ell_1^p} + 4^{p-1} \cdot |\eps_1|^p  \cdot \frac{\ell_0^{p+1}}{\ell_1^p} \nonumber\\
        &= 4^{p-1}\cdot (2G)^p \cdot \frac{X_0^{p+1}}{(X_0+\dots+X_n)^{p+1}}+ 4^{p-1} |\eps_1|^p \frac{X_0}{(X_0+\dots+X_n)} \nonumber \\
        & \hspace{5mm}+(4^{p-1}\cdot |\eps_2|^p +4^{p-1} \cdot |\eps_1|^p)  \cdot \frac{X_0^{p+1}}{X_1^p \cdot (X_0+\dots+X_n)} \label{eq:high-probability-bound-splines}
    \end{align}
We now provide probabilistic upper bounds on $X_0$, $|\eps_2|^p$ and $|\eps_1|^p$, and lower bounds on $X_1$ and $X_0+\dots+X_n$, which suffices to further upper bound $R_0$.
\begin{align*}
    \Pr[X_0 \leq 3 \ln n] = 1-\Pr[X_0 > 3 \ln n]=1-\int_{3\ln n}^{\infty} e^{-z} \, dz = 1-\frac{1}{n^3}.
\end{align*}
Using Markov's inequality,
\begin{align*}
    \Pr[ |\eps_1|^p \leq \mathcal{L}_p(f^*)\, \ln n  ] = 1-\Pr[ |\eps_1|^p > \mathcal{L}_p(f^*)\, \ln n ] \geq 1-\frac{\Exp[|\eps_1|^p]}{\mathcal{L}_p(f^*) \ln n} = 1-\frac{1}{\ln n}.
\end{align*}
Similarly, 
\begin{align*}
    \Pr[ |\eps_2|^p \leq \mathcal{L}_p(f^*)\, \ln n  ] \geq 1-\frac{1}{\ln n}.
\end{align*}
\begin{align*}
    \Pr[ X_1 \geq \frac{1}{\ln n}] = \int_{\frac{1}{\ln n}}^{\infty} e^{-z} \, dz = e^{-\frac{1}{\ln n}} \geq 1-\frac{1}{\ln n},
\end{align*}
where the last inequality follows from the Taylor expansion of $e^z$.
Finally, as $n \rightarrow \infty$, by the strong Law of Large Numbers, $ \frac{1}{n} \sum_{i=0}^{n}X_i \xrightarrow{\text{a.s.}} 1$. This implies
$$\Pr\left[ \sum_{i=0}^{n} X_i \geq \frac{n}{2}\right]=1-o_n(1).$$
Doing a union bound over these events, and substituting these probabilistic bounds in \eqref{eq:high-probability-bound-splines}, we get that with probability $1-o_n(1)$,
\begin{align*}
    R_0 \leq \frac{4^{p-1} \cdot (2G)^p (3\ln n)^{p+1}  }{(\nicefrac{n}{2})^{p+1}}+ \frac{4^{p-1} \mathcal{L}_p(f^*) \ln n \cdot 3 \ln n}{(\nicefrac{n}{2})}+\frac{2 \cdot 4^{p-1} \mathcal{L}_p(f^*) \ln n \cdot (3 \ln n)^{p+1}}{ \left(\nicefrac{1}{\ln^p n}\right) \cdot (\nicefrac{n}{2})}=o_n(1).
\end{align*}
Following the exact same steps,  one can say that with probability $1-o_n(1)$, we have $R_n=o_n(1)$. Therefore, doing the union bound and combining both, we get that with probability $1-o_n(1)$ we have $R_0+R_n=o_n(1)$. This implies, for any $\gamma>0$,
$$\lim_{n\rightarrow \infty} \Pr_S \left[ R_0+R_n \leq \gamma \right]=1. $$
\end{proof}
\begin{proof}[Proof of Lemma \ref{lem:Rhat-Rtilde-as-ell1}]
    By \eqref{eq:ell_distribution}, we have $\left( \ell_0,\dots,\ell_n \right) \sim  \left( \frac{X_0}{X}, \dots, \frac{X_n}{X} \right)$, where $X_0,\dots,X_n \iid \text{Exp}(1)$ and $X:=\sum_{i=0}^{n} X_i$. Also, we have $\tilde{\ell}_i = \frac{X_i}{n+1}$ for all $i$. 

Therefore,
\begin{align*}
   & \sum_{i=2}^{n-2} 4^{p-1} (|\eps_i|+|\eps_{i+1}|+|\eps_{i-1}|+|\eps_{i+2}|)^p \left( \frac{{\ell_i}^{p+1}}{\max\{\ell_{i-1},\ell_{i+1}\}^p}- \frac{{\Tilde{\ell}_i}^{p+1}}{\max\{\Tilde{\ell}_{i-1},\Tilde{\ell}_{i+1}\}^p} \right) \nonumber\\
   &= \sum_{i=2}^{n-2} 4^{p-1} (|\eps_i|+|\eps_{i+1}|+|\eps_{i-1}|+|\eps_{i+2}|)^p \cdot  \frac{{X_i}^{p+1}}{\max\{X_{i-1},X_{i+1}\}^p} \left(\frac{1}{X}-\frac{1}{n+1} \right) \nonumber\\
  &= \underbrace{\frac{1}{(n+1)}\sum_{i=2}^{n-2} 4^{p-1} (|\eps_i|+|\eps_{i+1}|+|\eps_{i-1}|+|\eps_{i+2}|)^p \cdot  \frac{{X_i}^{p+1}}{\max\{X_{i-1},X_{i+1}\}^p}}_{:=T} \left(\frac{n+1}{X}-1 \right)
\end{align*}
   By the strong law of large numbers, as $n \rightarrow \infty$, $\frac{n+1}{X} \xrightarrow{\text{a.s.}} 1$. Thus, $(\frac{n+1}{X}-1) \xrightarrow{\text{a.s.}}0$. Also, $T$ converges 
   almost surely 
   to a finite quantity
   as $n \rightarrow \infty$. To see this, we split $T$ into four disjoint sums, one each over indices in $\mathcal{I}_j=\{i : i\%4=j\}$ for $0 \leq j \leq 3 $. And each of them 
   converges almost surely to a
   finite quantity by \eqref{eq:expectation-TildeRi}, and hence also the overall sum $T$. Therefore, as $n \rightarrow \infty$
   \begin{equation} \label{eq:a.s._lemma_app_c_1}
    \sum_{i=2}^{n-2} 4^{p-1} (|\eps_i|+|\eps_{i+1}|+|\eps_{i-1}|+|\eps_{i+2}|)^p \left( \frac{{\ell_i}^{p+1}}{\max\{\ell_{i-1},\ell_{i+1}\}^p}- \frac{{\Tilde{\ell}_i}^{p+1}}{\max\{\Tilde{\ell}_{i-1},\Tilde{\ell}_{i+1}\}^p} \right) \xrightarrow{\text{a.s.}} 0~.
   \end{equation}

   Next, we have
   \begin{align*}
       \sum_{i=2}^{n-2} 2^{3p-2} G^p \left(\ell_i^{p+1}-\Tilde{\ell}_i^{p+1}\right)
       &= \sum_{i=2}^{n-2} 2^{3p-2} G^p X_i^{p+1} \left(\frac{1}{X^{p+1}} - \frac{1}{(n+1)^{p+1}}\right)
       \\
       &= 2^{3p-2} G^p \cdot \frac{\sum_{i=2}^{n-2}  X_i^{p+1}}{n-3} \cdot \frac{n-3}{(n+1)^{p+1}} \cdot \left(\frac{(n+1)^{p+1}}{X^{p+1}} - 1\right)~,
   \end{align*}
   and by the strong law of large numbers we have $\frac{ \sum_{i=2}^{n-2} X_i^{p+1}}{n-3} \xrightarrow{\text{a.s.}} \Exp X_i^{p+1} < \infty$ and $\left(\frac{n+1}{X}\right)^{p+1} \xrightarrow{\text{a.s.}} 1$, and thus
   \begin{equation} \label{eq:a.s._lemma_app_c_2}
        \sum_{i=2}^{n-2} 2^{3p-2} G^p \left(\ell_i^{p+1}-\Tilde{\ell}_i^{p+1}\right) 
        \xrightarrow{\text{a.s.}} 0~.   
   \end{equation}

   Moreover,
   \begin{align*}
       \sum_{i=2}^{n-2} 4^{p-1}|\eps_i|^p \left(\ell_i - \Tilde{\ell}_i \right)
       &= 4^{p-1} \sum_{i=2}^{n-2} \left(\frac{|\eps_i|^p X_i}{X} - \frac{|\eps_i|^p X_i}{n+1}\right) 
       \\
       &= 4^{p-1} \cdot \frac{\sum_{i=2}^{n-2} |\eps_i|^p X_i}{n-3} \cdot \frac{n-3}{n+1} \left(\frac{n+1}{X} - 1\right)~,
   \end{align*}
   and by the strong law of large numbers we have $\frac{\sum_{i=2}^{n-2} |\eps_i|^p X_i}{n-3} \xrightarrow{\text{a.s.}} \Exp |\eps_i|^p X_i < \infty$ and $\left(\frac{n+1}{X}\right) \xrightarrow{\text{a.s.}} 1$, and thus
   \begin{equation} \label{eq:a.s._lemma_app_c_3}
        \sum_{i=2}^{n-2} 4^{p-1}|\eps_i|^p \left(\ell_i - \Tilde{\ell}_i \right)
        \xrightarrow{\text{a.s.}} 0~.
   \end{equation}
   By a similar argument, we also have
   \begin{equation} \label{eq:a.s._lemma_app_c_4}
         \sum_{i=2}^{n-2} 4^{p-1}|\eps_{i+1}|^p \left(\ell_i - \Tilde{\ell}_i \right)
        \xrightarrow{\text{a.s.}} 0~.
   \end{equation}

   Combining \eqref{eq:a.s._lemma_app_c_1}, (\ref{eq:a.s._lemma_app_c_2}),~(\ref{eq:a.s._lemma_app_c_3}) and~(\ref{eq:a.s._lemma_app_c_4}), we get 
   \begin{align*}
       &\hat{R} - \Tilde{R}
       = \sum_{i=2}^{n-2} 2^{3p-2} G^p \left(\ell_i^{p+1} - \Tilde{\ell}_i^{p+1}\right) + \sum_{i=2}^{n-2} 4^{p-1} |\eps_i|^p  \left(\ell_i - \Tilde{\ell}_i\right) + \sum_{i=2}^{n-2} 4^{p-1} |\eps_{i+1}|^p \left( \ell_i - \Tilde{\ell}_i\right) 
       \\
       &+ \sum_{i=2}^{n-2} 4^{p-1} (|\eps_i|+|\eps_{i+1}|+|\eps_{i-1}|+|\eps_{i+2}|)^p \left( \frac{\ell_i^{p+1}}{\max\{\ell_{i-1},\ell_{i+1}\}^p} - \frac{\Tilde{\ell}_i^{p+1}}{\max\{\Tilde{\ell}_{i-1},\Tilde{\ell}_{i+1}\}^p} \right) 
       \xrightarrow{\text{a.s.}} 0~ 
   \end{align*}
   
\end{proof}

\begin{proof}[Proof of Claim \ref{clm:maxA,B}]
We first show that $\max\{A,B\}\overset{d}{=} A+\frac{B}{2}.$ Both $A,B \iid \text{Exp}(1)$. We will show that the CDF of $\max\{A, B\}$ is the same as the CDF of $A+\frac{B}{2}$. Let $F_{\max\{A,B\}}(.)$ and $F_{A+\frac{B}{2}}(.)$ be their CDFs respectively. Then for any $z \geq 0$.
    \begin{align*}
        F_{A+\frac{B}{2}}(z) =& \Pr\left(A + \frac{B}{2} \leq z\right) = \int_0^z e^{-x} \Pr \left(B \leq 2(z-x)\right)\ dx = \int_0^z e^{-x} \left(1-e^{-2(z-x)}\right)\ dx\\
        =& \int_0^z e^{-x}\ dx - e^{-2z}\int_0^z e^x\ dx= 1 - e^{-z} - e^{-2z}(e^z-1)= 1 - 2e^{-z} + e^{-2z}\\
        =& \left(1-e^{-z}\right)^2=  \Pr (A \leq z, B \leq z)=\Pr(\max\{A , B\} \leq z)\\
        =& F_{\max\{A,B\}}(z). 
    \end{align*}
   Using this, we can further simplify the expectation as follows. 
\begin{align*}
    \Exp \left[ \frac{1}{\max\{A,B\}^p}\right] &=   \Exp \left[ \frac{1}{(A+\frac{B}{2})^p}\right]  \leq \Exp \left[ \frac{2^p}{(A+B)^p}\right] = 2^p \Exp \left[\frac{1}{\Gamma(2,1)^p} \right] \\
    &= 2^p \int_{0}^{\infty} \frac{1}{z^p} \cdot z e^{-z} \, dz \\    
    &= 2^p \int_{0}^{\infty} z^{1-p} e^{-z} \, dz \\
    &=2^p \Gamma(2-p)    \\
    & \leq \frac{2^p}{2-p},
\end{align*} 
where the last inequality follows from Claim \ref{clm:Gamma(z)} and the fact that $1\leq p < 2$. 
\end{proof}
\begin{claim}\label{clm:Gamma(z)}For $z\in (0,1]$, we have $ \frac{1}{2z}\leq \Gamma(z) \leq \frac{1}{z}$.
\end{claim}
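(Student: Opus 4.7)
The plan is to use the functional identity $\Gamma(z) = \Gamma(z+1)/z$ to reduce both inequalities to bounds on $\Gamma$ over the interval $(1,2]$. Setting $t := z+1$, the claim $\frac{1}{2z} \leq \Gamma(z) \leq \frac{1}{z}$ for $z \in (0,1]$ is equivalent to $\frac{1}{2} \leq \Gamma(t) \leq 1$ for $t \in (1,2]$.

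For the upper bound, I would invoke convexity of $\Gamma$ on $(0,\infty)$, which follows from differentiating $\Gamma(t) = \int_0^\infty x^{t-1} e^{-x}\, dx$ twice in $t$ and observing that $(\log x)^2 x^{t-1} e^{-x} \geq 0$. Since $\Gamma(1) = \Gamma(2) = 1$, convexity forces $\Gamma(t)$ to lie below its secant on $[1,2]$, which is identically $1$. Hence $\Gamma(t) \leq 1$ on $(1,2]$, giving $\Gamma(z) \leq 1/z$.

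For the lower bound, I would split $\Gamma(t) = \int_0^1 x^{t-1} e^{-x}\, dx + \int_1^\infty x^{t-1} e^{-x}\, dx$. On $[0,1]$, because $t-1 \in [0,1]$ and $\log x \leq 0$, we have $x^{t-1} = e^{(t-1)\log x} \geq e^{\log x} = x$, and also $e^{-x} \geq e^{-1}$, so this piece is at least $e^{-1} \int_0^1 x\, dx = \frac{1}{2e}$. On $[1,\infty)$, $x^{t-1} \geq 1$, so this piece is at least $\int_1^\infty e^{-x}\, dx = e^{-1}$. Adding yields $\Gamma(t) \geq \frac{3}{2e} > \frac{1}{2}$, which gives $\Gamma(z) \geq \frac{1}{2z}$.

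I do not expect any real obstacle: the minimum of $\Gamma$ on $[1,2]$ is approximately $0.886$, so the crude split above leaves plenty of slack for the target $1/2$. The only thing to verify carefully is the elementary inequality $x^{t-1} \geq x$ for $x \in [0,1]$ and $t \in [1,2]$, which uses that $x \mapsto x^a$ is decreasing in $a$ on $[0,1]$.
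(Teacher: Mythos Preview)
Your proof is correct and shares the same opening move as the paper: reduce via the functional equation $\Gamma(z)=\Gamma(z+1)/z$ to the bounds $\tfrac12\le\Gamma(t)\le 1$ for $t\in(1,2]$. Where you differ is in how you justify those bounds. For the upper bound, the paper appeals to unimodality of $\Gamma$ on $[1,2]$ (so the maximum occurs at an endpoint), whereas you use convexity of $\Gamma$, which is cleaner since it follows directly from the nonnegativity of $\Gamma''(t)=\int_0^\infty(\log x)^2 x^{t-1}e^{-x}\,dx$. For the lower bound, the paper simply cites a numerical fact (the minimum of $\Gamma$ on $[1,2]$ is about $0.8856$, referencing \cite{deming1935minimum}), while you give a fully self-contained estimate by splitting the integral at $x=1$ and obtaining $\Gamma(t)\ge \tfrac{3}{2e}\approx 0.552>\tfrac12$. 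Your route is more elementary and avoids the external reference; the paper's route gives a sharper constant but relies on a cited fact.
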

\begin{proof}
    For $z>0$,
    \begin{align}
      \Gamma(z)&=\int_{0}^{\infty} w^{z-1} e^{-w} \, dw = \frac{w^z}{z} \cdot e^{-w} \Big\vert_0^{\infty} - \int_{0}^{\infty} -e^{-w} \frac{w^z}{z} \, dw =0 + \frac{1}{z} \cdot \int_{0}^{\infty} e^{-w} w^z \, dw \nonumber\\
      &= \frac{\Gamma(1+z)}{z} \label{eq:Gammaz-Gammaz+1}
    \end{align}
Now for $0 <  z \leq 1$, we have $1 < 1+z \leq 2$. Therefore, 
$$\frac{1}{2} \leq \Gamma(1+z) \leq 1 .$$ 
The upper bound follows from the fact that $\Gamma(.)$ is unimodal (with first decreasing and then increasing). Therefore, the maximum value of $\Gamma(.)$ in $[1,2]$ is $\max\{\Gamma(1),\Gamma(2)\}=1$. The lower bound follows from \citet{deming1935minimum}; the minimum value is approximately 0.8856032, which is at least $ 1/2$. Putting this back in \eqref{eq:Gammaz-Gammaz+1}, for $0 < z \leq  1$
$$ \frac{1}{2z} \leq \Gamma(z) \leq \frac{1}{z}.$$

\end{proof}

\section{Lower Bounds}
Our lower bounds (Theorem \ref{thm:tempered-for-ell_1-lower_bound} and \ref{thm:ell2-catastrophic}) follow from very similar situations under which spikes are formed. Therefore, we first present the basic construction of such a situation; later, we specialize to the case of $p\in [1,2)$ and $p \geq 2$ and obtain both theorems respectively. 

 Recall the noise model $\eps \sim \normal(0,1)$. We first claim the following.
 \begin{lemma}\label{lem: Lp>Rp}
For any function $f$, we have $\mathcal{L}_p(f) \geq \mathcal{R}_p(f)$.
 \end{lemma}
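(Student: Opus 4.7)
The plan is to reduce $\mathcal{L}_p(f)$ to a conditional expectation over the independent noise $\epsilon$ and then invoke Jensen's inequality.

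First I would rewrite the loss using the noise model $y = f^*(x) + \epsilon$ with $\epsilon$ independent of $x$ and $\Exp[\epsilon]=0$. Setting $g(x) := f(x) - f^*(x)$, we get
\[
\mathcal{L}_p(f) = \Exp_{x,\epsilon}\bigl[|f(x)-f^*(x)-\epsilon|^p\bigr] = \Exp_x \Exp_\epsilon\bigl[|g(x)-\epsilon|^p\bigr].
\]
Conditioning on $x$, the inner expectation is over $\epsilon$ alone, and $g(x)$ is deterministic.

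Next, since the map $t \mapsto |t|^p$ is convex for $p \geq 1$, Jensen's inequality gives, for each fixed $x$,
\[
\Exp_\epsilon\bigl[|g(x)-\epsilon|^p\bigr] \;\geq\; \bigl|\Exp_\epsilon[g(x)-\epsilon]\bigr|^p \;=\; |g(x)|^p,
\]
where the last equality uses $\Exp[\epsilon]=0$.

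Finally, taking expectation over $x \sim \unif([0,1])$ on both sides yields
\[
\mathcal{L}_p(f) \;\geq\; \Exp_x\bigl[|g(x)|^p\bigr] \;=\; \Exp_x\bigl[|f(x)-f^*(x)|^p\bigr] \;=\; \mathcal{R}_p(f),
\]
which is the claim. There is no real obstacle here: the only ingredients used are independence of $\epsilon$ from $x$, zero mean of $\epsilon$ (which holds for $\normal(0,\sigma^2)$), convexity of $|\cdot|^p$ for $p \geq 1$, and Fubini to exchange the two expectations, all of which are standard.
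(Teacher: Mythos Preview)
Your proof is correct and in fact cleaner than the paper's. The paper exploits the \emph{symmetry} of the Gaussian: it pairs $\epsilon=\delta$ with $\epsilon=-\delta$ via the folded normal density and then proves the pointwise inequality $\tfrac12(|\mu+\delta|^p+|\mu-\delta|^p)\ge|\mu|^p$ by a small case analysis (one case of which is itself Jensen). You instead apply Jensen's inequality directly to $t\mapsto|t|^p$ using only $\Exp[\epsilon]=0$, which collapses the whole argument to one line. The trade-off is in the hypotheses: your route needs the noise to have mean zero (fine here, since $\epsilon\sim\normal(0,\sigma^2)$), while the paper's route needs the noise law to be symmetric about zero but not necessarily integrable. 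In the present setting both hold, so either argument works; yours is the more direct one.
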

 The above lemma then allows us to solely focus on lower bounding the reconstruction loss of the predictor.
 \begin{proof}[Proof of Lemma \ref{lem: Lp>Rp}]
By definition,
\begin{align*}
    \mathcal{L}_p(f)&= \Exp_{(x,y)\sim \dist} \left[ |f(x)-y|^p\right] = \Exp_{x \sim \unif([0,1]) ,\eps \sim \normal(0,1)} \left[ |f(x)-f^*(x) - \eps|^p\right]. \\
\end{align*}
Let $q: \IR_+ \rightarrow \IR_+$ be the density of the folded standard Gaussian, i.e. $|\mathcal{N}(0,1)|$. Then due to the symmetry of the Gaussian,
\begin{align*}
    \mathcal{L}_p(f)&= \Exp_{x \sim \unif([0,1])} \Exp_{\eps \sim \normal(0,1)} \left[ |f(x)-f^*(x)-\eps|^p\right] \\
    &= \Exp_{x \sim \unif([0,1])} \left[ \int_{0}^{\infty} \left( \frac{1}{2}|f(x)-f^*(x) - \delta|^p+ \frac{1}{2}|f(x)-f^*(x) + \delta|^p \right) q(\delta) \, d \delta \right] \\
    & \geq \Exp_{x \sim \unif([0,1])} \left[ \int_{0}^{\infty} |f(x)-f^*(x)|^p  q(\delta) \, d \delta \right] \tag{by Claim \ref{clm:minimum} below}\\
    &= \Exp_{x\sim \unif([0,1])} [ |f(x)-f^*(x)|^p]=\mathcal{R}_p(f). \tag{by definition} 
\end{align*}
Here the second last step follows from the following claim, which we prove below. 
\begin{claim}\label{clm:minimum}
    For any $\mu \in \IR$, $p \geq 1$, and $\delta \in \IR_+$, we have $ \frac{1}{2}(|\mu+\delta|^p+|\mu-\delta|^p) \geq |\mu|^p$.
\end{claim}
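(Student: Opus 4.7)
The statement is exactly a two-point Jensen inequality for the function $\phi(t) = |t|^p$. The plan is to observe that $\phi$ is convex on $\IR$ for every $p \geq 1$ (it is the composition of the convex function $|\cdot|$ with the convex, nondecreasing function $u \mapsto u^p$ on $\IR_+$; alternatively, $\phi$ is twice differentiable for $p>1$ with nonnegative second derivative, and the case $p=1$ is the triangle inequality). Once convexity is in hand, applying Jensen (or the definition of convexity) to the two points $\mu+\delta$ and $\mu-\delta$ with equal weights $1/2$ gives
\[
\tfrac{1}{2}|\mu+\delta|^p + \tfrac{1}{2}|\mu-\delta|^p \;\geq\; \left|\tfrac{1}{2}(\mu+\delta) + \tfrac{1}{2}(\mu-\delta)\right|^p \;=\; |\mu|^p,
\]
which is the claim. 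Multiplying both sides by $2$ yields the stated form.

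There is no real obstacle here; the only thing worth noting is that we need the inequality for all real $\mu$ and all $\delta \geq 0$ (not just $\delta \leq |\mu|$ or similar), so the convexity argument is preferable to, say, expanding and using AM-GM, which would require case splits according to the signs of $\mu \pm \delta$. The convexity proof handles all sign combinations uniformly. If a self-contained derivation is desired (avoiding citing Jensen), one can either verify $\phi''(t) = p(p-1)|t|^{p-2} \geq 0$ for $p>1$ and handle $p=1$ separately via $|\mu+\delta|+|\mu-\delta| \geq |(\mu+\delta)+(\mu-\delta)| = 2|\mu|$, or note that $g(\delta) := \tfrac{1}{2}(|\mu+\delta|^p+|\mu-\delta|^p)$ satisfies $g(0)=|\mu|^p$ and is nondecreasing in $\delta \geq 0$ for fixed $\mu$ (by checking the derivative sign on each sign region of $\mu\pm\delta$). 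The convexity approach is the cleanest and is the one I would present.
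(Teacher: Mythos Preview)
Your proof is correct and is in fact cleaner than the paper's. The paper proceeds by case analysis: it first handles $\mu=0$ trivially, then for $\mu>0$ splits into the subcases $\delta\geq\mu$ (where it uses the crude bound $|\mu+\delta|^p\geq(2\mu)^p$) and $\delta<\mu$ (where it applies exactly your convexity/Jensen argument, since then both $\mu\pm\delta>0$), and finally reduces $\mu<0$ to the positive case by symmetry. Your observation that convexity of $t\mapsto|t|^p$ on all of $\IR$ makes every one of these cases unnecessary is exactly right; the paper's extra case splits buy nothing and your single-line Jensen argument is the preferable presentation.
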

\begin{proof}
    The claim trivially holds for $\mu=0$. Now consider $\mu>0$, then if $\delta \geq \mu$ then
    $$ \frac{1}{2}((\mu+\delta)^p+|\mu-\delta|^p) \geq \frac{1}{2}(2\mu)^p \geq 2^{p-1} \mu^p \geq \mu^p.$$
    But if $\delta<\mu$ then since the function $|.|^p$ is convex for $p \geq 1$ we have
    $$ \frac{1}{2}(|\mu+\delta|^p+|\mu-\delta|^p) = \frac{1}{2}(\mu+\delta)^p+ \frac{1}{2}(\mu-\delta)^p \geq \left( \frac{\mu+\delta+\mu-\delta}{2}\right)^p = \mu^p.$$
Finally, if $\mu<0$, then apply the same argument to $-\mu$ giving the desired result.
 \end{proof}
This concludes the proof of the lemma.
\end{proof}

We now describe the situations under which spikes are formed. We will consider $f^*(x) \equiv 0$ and the noise model $\eps \sim \normal(0,1)$. Let $S\sim \dist^{n}$ be the data points. Again, we number them such that $$ x_0=: 0 < x_1 < \dots < x_n < 1:=x_{n+1}.$$  
We have $n+1$ intervals with lengths $\ell_0,\dots, \ell_n$ from left to right. In particular, $\ell_i=x_{i+1}-x_{i}$. 

We now consider $N:=\floor{n/10}$ disjoint subsets $S_1,\dots ,S_N \subset S$ such that $S_i \cap S_j =\emptyset$ for $i\neq j$. Each $S_i$ is a set of six consecutive points (ordered from left to right). For any $i \in [N]$,
$$S_i:=\{ (x_j,y_j): j= 10(i-1)+1, \dots, 10(i-1)+6 \}.$$
It is clear by definition that these sets are disjoint. 

We denote the noise random variables associated with $j$-th point of $S_i$ by $\eps_{i,j}$. Also. we re-denote the distance between any two consecutive points from the same subset by $\ell_{i,j}$, i.e. $\ell_{i,j}$ is the distance between $(j+1)$-th point and $j$-th point in $S_i$. Consider the events $A_1,\dots, A_N$ as below. 
\begin{equation*}
    A_i:=\{ \eps_{i,2}, \eps_{i,5} \leq -1 \} \cap \{\eps_{i,1},\eps_{i,3},\eps_{i,4},\eps_{i,6} \in [1,2] \} \cap \{ \ell_{i,3} \geq \ell_{i,2} \} \cap \{ \ell_{i,3} \geq \ell_{i,4} \}
\end{equation*}
\begin{figure}[t] 
\hspace{-0cm}
\includegraphics[scale=0.8]{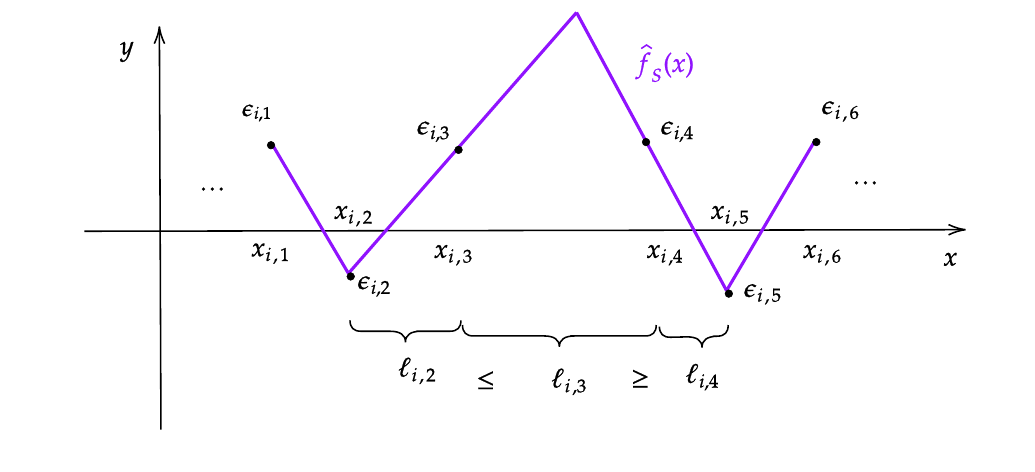}
\vspace{-1.1cm}
\caption{An illustration of configuration of points in the $i$-th subset $S_i$ when the event $A_i$ happens.}
\label{fig:bad-configuration}
\end{figure}

See Figure \ref{fig:bad-configuration} for an illustration. This considers the event that the bad configuration of points happens for points in $S_i$. Then we can express $\mathcal{R}_p(\hat{f}_S)$, the risk of the predictor in $[0,1]$, in terms of the random variables $\ell_i$ s when such bad configurations occur. Formally,

\begin{lemma}\label{lem:risk-when-Ai-happens}
    \begin{itemize}
        \item For $1 \leq p <2$:
        $$\mathcal{R}_p(\hat{f}_S) \geq \sum_{i=1}^{N} \frac{\ell_{i,3}^{p+1}}{(\ell_{i,2}+\ell_{i,4})^p} \mathbbm{1}[A_i];$$
        \item For $p \geq 2:$
        $$\mathcal{R}_p(\hat{f}_S) \geq \sum_{i=1}^{N} \frac{\ell_{i,3}^3}{(\ell_{i,2}+\ell_{i,4})^{2}} \mathbbm{1}[A_i].$$
    \end{itemize}
\end{lemma}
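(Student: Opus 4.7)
The plan is to work block by block: under $A_i$, I will show that on the middle sub-interval $[x_{i,3},x_{i,4})$ the interpolator $\hat{f}_S$ equals a large ``tent'', and lower-bound the corresponding $L_p$ mass. Since $f^*\equiv 0$, $\mathcal{R}_p(\hat{f}_S)=\int_0^1|\hat{f}_S|^p\,dx$, and since the blocks $S_i$ are disjoint in global order the intervals $[x_{i,3},x_{i,4})$ are pairwise disjoint, so the per-block bounds simply add.

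First, I unpack the geometry under $A_i$. The forced labels give $\delta_{i,1}\le -2/\ell_{i,1}$, $\delta_{i,2}\ge 2/\ell_{i,2}$, $|\delta_{i,3}|\le 1/\ell_{i,3}$, $\delta_{i,4}\le -2/\ell_{i,4}$, and $\delta_{i,5}\ge 2/\ell_{i,5}$, which together with $\ell_{i,3}\ge\max\{\ell_{i,2},\ell_{i,4}\}$ (from $A_i$) yield the curvature pattern $(\curv(x_{i,2}),\curv(x_{i,3}),\curv(x_{i,4}),\curv(x_{i,5}))=(+1,-1,-1,+1)$ and also $\delta_{i,2}\neq\delta_{i,3}$. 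Next I invoke Lemma~\ref{lem:sparsity-main_text}: position~3 of $S_i$ starts a run of $-1$'s in the global curvature sequence and hence equals some special point $n_k$ of Definition~\ref{def:cruv-changed}; since $\curv(x_{i,4})=-1$ and $\curv(x_{i,5})=+1$, the next special point satisfies $n_{k+1}=n_k+2$. Combined with $\delta_{n_k-1}=\delta_{i,2}\neq\delta_{i,3}=\delta_{n_k}$, Lemma~\ref{lem:sparsity-main_text} then gives, on $[x_{i,3},x_{i,4})$,
\[
\hat{f}_S(x)\ =\ T(x)\ :=\ \min\{g_{i,2}(x),\,g_{i,4}(x)\}.
\]

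Finally I bound the tent integral. Because $y_{i,3},y_{i,4}\ge 1$ and $g_{i,2},g_{i,4}$ are monotone, $T\ge 1$ on $[x_{i,3},x_{i,4}]$; writing $u=x-x_{i,3}$ the slope bounds also give $T(u)\ge\min\{2u/\ell_{i,2},\,2(\ell_{i,3}-u)/\ell_{i,4}\}$. A direct computation (splitting the integral at the crossing $u^*=\ell_{i,3}\ell_{i,2}/(\ell_{i,2}+\ell_{i,4})$) yields
\[
\int_{x_{i,3}}^{x_{i,4}}T(x)^p\,dx\ \ge\ \frac{2^p}{p+1}\cdot\frac{\ell_{i,3}^{p+1}}{(\ell_{i,2}+\ell_{i,4})^p}\ \ge\ \frac{\ell_{i,3}^{p+1}}{(\ell_{i,2}+\ell_{i,4})^p},
\]
using $2^p\ge p+1$ for $p\ge 1$. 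This is the first bound. For the second ($p\ge 2$), $T\ge 1$ gives $T^p\ge T^2$ pointwise, so applying the above display at $p=2$ already yields $\int T^p\,dx \ge \tfrac{4}{3}\,\ell_{i,3}^3/(\ell_{i,2}+\ell_{i,4})^2\ge\ell_{i,3}^3/(\ell_{i,2}+\ell_{i,4})^2$. Summing over $i\in\{1,\ldots,N\}$ completes the proof.

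The only delicate part is the second step: formally identifying position~3 of $S_i$ as a genuine \emph{global} special point $n_k$ (a notion depending on the full curvature sequence, including whatever occurs in the gaps between blocks) and verifying the non-degeneracy hypothesis $\delta_{n_k-1}\neq\delta_{n_k}$ required by Lemma~\ref{lem:sparsity-main_text}. The remaining tent-integral estimates are elementary.
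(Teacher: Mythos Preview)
Your proposal is correct and follows essentially the same route as the paper: under $A_i$ you establish the curvature pattern $(+1,-1,-1,+1)$ at positions $2,3,4,5$ of the block, invoke Lemma~\ref{lem:sparsity-main_text} to conclude $\hat f_S=\min\{g_{i,2},g_{i,4}\}$ on $[x_{i,3},x_{i,4})$, and then lower-bound the tent integral exactly as the paper does (splitting at the crossing point, dropping the additive $1$, and for $p\ge 2$ using $T\ge 1$ to reduce to $p=2$). Your treatment of the ``delicate part''---arguing that position~3 of $S_i$ is a genuine global special point $n_k$ with $n_{k+1}=n_k+2$ and checking $\delta_{n_k-1}\neq\delta_{n_k}$---is in fact spelled out more carefully than in the paper, which simply asserts that $x_3$ and $x_5$ are special points.
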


\begin{proof}
      Without loss of generality, assume that $A_1$ holds and consider the following analysis under $A_1$. (One can choose any $i\in [N]$ and the discussion follows in the same way). If $A_1$ happens then $\curv(x_2)=+1$, $\curv(x_3)=\curv(x_4)=-1$ and again at $x_5$ we will have $\curv(x_5)=+1$. Therefore, $x_3$ and $x_5$ are special points according to Definition \ref{def:cruv-changed}; the curvature changes within the distance of 2 points. Therefore, by Lemma \ref{lem:sparsity-main_text}, there is exactly one kink possible in $(x_2,x_5)$. More importantly, in the interval $(x_3,x_4)$, the function $\hat{f}_S(x)=\min\{g_2(x),g_4(x)\}$. Therefore, for any $x\in [x_3,x_4]$,  

$$g_2(x)= y_3+ \left( \frac{y_3-y_2}{x_3-x_2} \right) (x-x_3) = \eps_3+ \frac{\eps_3-\eps_2}{x_3-x_2}(x-x_3) \geq 1 + \frac{2}{\ell_2}(x-x_3).$$ 
Similarly,
$$g_4(x)= y_4+ \frac{y_5-y_4}{x_5-x_4}(x-x_4) = \eps_4+ \frac{\eps_5-\eps_4}{x_5-x_4}(x-x_4) = \eps_4 + \frac{\eps_4-\eps_5}{x_5-x_4}(x_4-x) \geq 1 + \frac{2}{\ell_4}(x_4-x).$$
Therefore, if $A_1$ happened then writing the $L_p$ reconstruction error in the interval $[x_3,x_4]$:
\begin{align*}
   \int_{x_3}^{x_4} |\hat{f}_S(x)-f^*(x)|^p dx &= \int_{x_3}^{x_4} |\hat{f}_S(x)|^p dx = \int_{x_3}^{x_4} \min\{g_2(x),g_4(x)\}^p \, dx \\
   & \geq  \int_{x_3}^{x_4} \min\{1+\frac{2}{\ell_2}(x-x_3), 1+\frac{2}{\ell_4}(x_4-x) \}^p \, dx \\
  & = \int_{x_3}^{x^*} \left( 1+\frac{2}{\ell_2}(x-x_3) \right)^p\, dx + \int_{x^*}^{x_4} \left( 1+\frac{2}{\ell_4}(x_4-x) \right)^p\, dx \, , \\
\end{align*}
where $x^* \in [x_3,x_4]$ is the $x$-coordinate of the point of intersection of the two lines. Thus, to find $x^*$
\begin{align*}
    1+\frac{2}{\ell_2}(x^*-x_3) &= 1+\frac{2}{\ell_4}(x_4-x^*) \\
   \frac{1}{\ell_2}(x^*-x_3) &= \frac{1}{\ell_4}(\ell_3-(x^*-x_3)) \\
   \left( \frac{1}{\ell_2} + \frac{1}{\ell_4} \right) (x^*-x_3) &= \frac{\ell_3}{\ell_4} \\
   (x^*-x_3) &= \frac{\ell_3}{\ell_4} \cdot \frac{\ell_2 \ell_4}{\ell_2+\ell_4} = \frac{\ell_2 \ell_3}{\ell_2+\ell_4} \\
   \implies (x_4-x^*) &= \ell_3-\frac{\ell_2 \ell_3}{\ell_2+\ell_4} = \frac{\ell_4 \ell_3}{\ell_2+\ell_4} \\
\end{align*}
 
\textbf{When $\mathbf{1 \leq p <2}$}: Writing the reconstruction risk in $[x_3,x_4]$ and substituting the above:
\begin{align}
   \int_{x_3}^{x_4} |\hat{f}_S(x)-f^*(x)|^p \, dx &\geq  \int_{x_3}^{x^*} \left( 1+\frac{2}{\ell_2}(x-x_3) \right)^p\, dx + \int_{x^*}^{x_4} \left( 1+\frac{2}{\ell_4}(x_4-x) \right)^p\, dx  \nonumber \\
  & \geq  \int_{x_3}^{x^*} \left( \frac{2}{\ell_2}(x-x_3) \right)^p \, dx+\int_{x^*}^{x_4} \left( \frac{2}{\ell_4}(x_4-x) \right)^p\, dx \nonumber\\
 & = \frac{2^p}{{(p+1)} \cdot {\ell_2}^p } (x^*-x_3)^{p+1}+\frac{2^p}{{(p+1)} \cdot {\ell_4}^p } (x_4-x^*)^{p+1} \nonumber\\
 &= \frac{2^p}{(p+1)} \left( \frac{\ell_2 \ell_3^{p+1}}{(\ell_2+\ell_4)^{p+1}}+\frac{\ell_4 \ell_3^{p+1}}{(\ell_2+\ell_4)^{p+1}} \right) \nonumber  \\
 & = \frac{2^{p}}{(p+1)} \cdot \frac{\ell_3^{p+1}}{(\ell_2+\ell_4)^{p}} \nonumber \\
 & \geq \frac{\ell_3^{p+1}}{(\ell_2+\ell_4)^{p}}, \label{eq:lb-one-interval-risk-p(1,2)}
\end{align} 
where the last inequality follows from the fact that we are considering $p \in [1,2)$.

\textbf{When $\mathbf{p \geq 2}$}
\begin{align}
   \int_{x_3}^{x_4} |\hat{f}_S(x)-f^*(x)|^p \, dx &\geq  \int_{x_3}^{x^*} \left( 1+\frac{2}{\ell_2}(x-x_3) \right)^p\, dx + \int_{x_*}^{x_4} \left( 1+\frac{2}{\ell_4}(x_4-x) \right)^p\, dx \nonumber\\ 
  & \geq  \int_{x_3}^{x^*} \left( 1+ \frac{2}{\ell_2}(x-x_3) \right)^2\, dx +  \int_{x^*}^{x_4} \left( 1+\frac{2}{\ell_4}(x_4-x) \right)^2\, dx \tag{both integrands are at least 1 in the range} \nonumber\\
  & \geq \int_{x_3}^{x^*} \left( \frac{2}{\ell_2}(x-x_3) \right)^2\, dx +  \int_{x^*}^{x_4} \left(\frac{2}{\ell_4}(x_4-x) \right)^2\, dx \nonumber\\
 & = \frac{4}{3 \, \ell_2^2} (x^*-x_3)^3 + \frac{4}{3 \, \ell_4^2} (x_4-x^*)^3 \nonumber\\
 &= \frac{4}{3} \cdot \frac{\ell_2 \ell_3^3}{(\ell_2+\ell_4)^3} + \frac{4}{3} \cdot \frac{\ell_4 \ell_3^3}{(\ell_2+\ell_4)^3} \nonumber\\
 & =  \frac{4}{3} \cdot \frac{\ell_3^3}{(\ell_2+\ell_4)^2} \nonumber \\
 & \geq \frac{\ell_3^3}{(\ell_2+\ell_4)^2}. \label{eq:lb-one-interval-risk-p>2}
\end{align}

By the definition of risk, one can say that
$$\mathcal{R}_p(\hat{f}_S) \geq \sum_{i \in N} \int_{x_{i,3}}^{x_{i,4}} |\hat{f}_S(x)-f^*(x)|^p \, dx $$
Then by doing the above analysis as in \eqref{eq:lb-one-interval-risk-p(1,2)} and \eqref{eq:lb-one-interval-risk-p>2} for each $i \in N$, along with the observation that the risk in any $[x_{i,3},x_{i,4}]$ is non-negative, we obtain the desired result. In particular, for $1\leq p <2:$
 $$\mathcal{R}_p(\hat{f}_S)\geq \sum_{i=1}^{N}\frac{\ell_{i,3}^{p+1}}{(\ell_{i,2}+\ell_{i,4})^{p}} \mathbbm{1}[A_i];$$
    And, for $p \geq 2:$
        $$\mathcal{R}_p(\hat{f}_S) \geq \sum_{i=1}^{N} \frac{\ell_{i,3}^3}{(\ell_{i,2}+\ell_{i,4})^{2}} \mathbbm{1}[A_i]. $$
\end{proof}

\begin{lemma}\label{lem:A_i-are-iid}
    The events $\{A_i\}_{i=1}^{N}$ are independent. Moreover, $\Pr(A_i)=c_0$ for some universal constant $c_0$.
\end{lemma}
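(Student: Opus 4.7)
The plan is to separate each event $A_i$ into a noise part and a spacing part, show that each factorizes across $i$ by using independence and scale invariance, and then lower bound each factor by an explicit positive constant. The subtle point is that the spacings $\ell_j$ share the common normalizer $X$ in the representation \eqref{eq:ell_distribution}, so they are not jointly independent; the key observation is that the event $A_i$ depends on the spacings only through the two comparisons $\ell_{i,3}\geq\ell_{i,2}$ and $\ell_{i,3}\geq\ell_{i,4}$, which are scale invariant.

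First I would write $A_i = B_i \cap C_i$, where $B_i$ is the noise event $\{\eps_{i,2},\eps_{i,5}\leq -1\}\cap\{\eps_{i,1},\eps_{i,3},\eps_{i,4},\eps_{i,6}\in[1,2]\}$ and $C_i$ is the spacing event $\{\ell_{i,3}\geq\ell_{i,2}\}\cap\{\ell_{i,3}\geq\ell_{i,4}\}$. Since the six points of $S_i$ have original indices $10(i-1)+1,\dots,10(i-1)+6$, the noise variables appearing in $B_i$ (for different $i$) form disjoint subsets of the i.i.d.\ collection $\{\eps_k\}$, and hence $B_1,\dots,B_N$ are mutually independent. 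Moreover, because the noise $\eps_k$ is independent of the sample points $x_k$, the family $\{B_i\}$ is independent of the family $\{C_i\}$, so to prove independence of the $A_i$'s it remains to show independence of $C_1,\dots,C_N$.

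For this, I would invoke \eqref{eq:ell_distribution} to write $\ell_{i,j}=\ell_{10(i-1)+j}=X_{10(i-1)+j}/X$, where $X_0,\dots,X_n\iid\mathrm{Exp}(1)$ and $X=\sum_k X_k$. Since the event $C_i$ is scale invariant, the common factor $X$ cancels and
\[
C_i = \{X_{10(i-1)+3}\geq X_{10(i-1)+2}\}\cap\{X_{10(i-1)+3}\geq X_{10(i-1)+4}\}.
\]
For distinct $i$'s the triples of indices $\{10(i-1)+2,10(i-1)+3,10(i-1)+4\}$ are pairwise disjoint (since consecutive blocks start 10 apart and each triple has width $3$), so the $C_i$'s depend on disjoint subfamilies of the i.i.d.\ collection $\{X_k\}$ and are therefore mutually independent. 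Combining with the previous step yields mutual independence of $A_1,\dots,A_N$.

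For the probability, $\Pr(A_i)=\Pr(B_i)\Pr(C_i)$ by the noise/position independence. The noise factor equals $\Pr_{\eps\sim\normal(0,1)}(\eps\leq-1)^2\cdot\Pr_{\eps\sim\normal(0,1)}(\eps\in[1,2])^4$, which is a positive absolute constant. For the spacing factor, by the symmetry of three i.i.d.\ exponentials, $\Pr(X_{10(i-1)+3}=\max\{X_{10(i-1)+2},X_{10(i-1)+3},X_{10(i-1)+4}\})=1/3$. Hence $\Pr(A_i)=\tfrac{1}{3}\Pr_\eps(\eps\leq-1)^2\Pr_\eps(\eps\in[1,2])^4=:c_0>0$, which depends on no parameter and is therefore a universal constant, completing the proof.
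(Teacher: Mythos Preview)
Your proof is correct and follows essentially the same approach as the paper: use scale invariance to rewrite the spacing comparisons $\{\ell_{i,3}\geq\ell_{i,2}\}\cap\{\ell_{i,3}\geq\ell_{i,4}\}$ as events on disjoint triples of the i.i.d.\ exponentials $X_k$, and combine with the i.i.d.\ structure of the noises. Your version is more detailed than the paper's terse proof, in particular making explicit the decomposition $A_i=B_i\cap C_i$, the independence of the noise and spacing families, and the exact value $c_0=\tfrac{1}{3}\,\Pr(\eps\le -1)^2\,\Pr(\eps\in[1,2])^4$, but the underlying argument is the same.
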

\begin{proof}
    The events $A_1,\dots, A_N$ are independent since the noises are i.i.d. Also, the event about the interval lengths (that are dependent random variables) in the above $\{ \ell_{i,3} \geq \ell_{i,2} \} \cap \{ \ell_{i,3} \geq \ell_{i,4} \}$ can be expressed in terms of the underlying exponential random variables (which are independent) as $\{ X_{i,3} \geq X_{i,2} \} \cap \{ X_{i,3} \geq X_{i,4} \}$. Finally, it is easy to observe that for any $i\in [N]$, $\prob{A_i}$ is some universal constant $c_0$. 
\end{proof}

\subsection{Proof of Theorem \ref{thm:tempered-for-ell_1-lower_bound}}

\begin{proof}[Proof of Theorem \ref{thm:tempered-for-ell_1-lower_bound}]
    We have $f^*\equiv 0$ and the noise distribution is $\normal(0,1)$. Therefore, $\mathcal{L}_p(f^*)$ is just a constant (dependent on $p$). More precisely,
    \begin{equation}\label{eq:bound on bayes error}
        L_p(f^*)=\Exp[|\eps|^p]= \sigma^p \cdot \frac{2^{p/2}}{\sqrt{\pi}} \Gamma\left( \frac{p+1}{2} \right) \leq \frac{2}{\sqrt{\pi}},
    \end{equation}
   where in the last inequality we use $\sigma=1$ and $1\leq p <2$.  Also, for $1 \leq p < 2$, by Lemma \ref{lem:risk-when-Ai-happens}
    \begin{equation}\label{eq:def-risk-lb}
       \mathcal{R}_p(\hat{f}_S)\geq \left( \sum_{i=1}^{N}\frac{\ell_{i,3}^{p+1}}{(\ell_{i,2}+\ell_{i,4})^{p}} \mathbbm{1}[A_i]  \right) := \hat{R}_p 
    \end{equation}
Then, to create independence, we define $(\Tilde{\ell}_0, \dots, \Tilde{\ell}_n) \iid \nicefrac{\text{Exp}(1)}{(n+1)}$ where $\Tilde{\ell}_i= \frac{X_i}{n+1}$. We now replace $\ell_i$'s with $\Tilde{\ell}_i$'s to define $\Tilde{R}_p$. 
$$ \Tilde{R}_p := \sum_{i=1}^{N}\frac{\Tilde{\ell}_{i,3}^{p+1}}{(\Tilde{\ell}_{i,2}+\Tilde{\ell}_{i,4})^{p}} \mathbbm{1}[A_i] $$
\begin{lemma}\label{lem:almost-surely-lb-p in [1,2)}
As $n \rightarrow \infty$ we have $\hat{R}_p - \Tilde{R}_p \xrightarrow{\textnormal{a.s.}}  0
$.
\end{lemma}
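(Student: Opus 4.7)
The plan is to factor $\hat R_p - \tilde R_p$ as a product whose two factors each have clear limiting behaviour: one will vanish almost surely and the other will converge to a finite constant. Using the joint representation in \eqref{eq:ell_distribution} we have $\ell_{i,j} = X_{i,j}/X$ and $\tilde\ell_{i,j} = X_{i,j}/(n+1)$, where $X = \sum_{k=0}^n X_k$ and $\{X_{i,j}\}$ denotes the i.i.d.\ $\mathrm{Exp}(1)$ variables corresponding to the gaps within $S_i$. Since each summand is homogeneous of net degree $+1$ in the $\ell$'s (degree $p+1$ on top, $p$ on the bottom), a single factor of $1/X$ (resp.\ $1/(n+1)$) pulls out, and the constraints in $A_i$ involving the $\ell$'s are scale-free, so they are equivalently $\{X_{i,3}\geq X_{i,2}\}\cap\{X_{i,3}\geq X_{i,4}\}$. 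Thus
\[
\hat R_p - \tilde R_p \;=\; \frac{1}{n+1}\left(\frac{n+1}{X}-1\right)T_n, \qquad T_n \;:=\; \sum_{i=1}^N \frac{X_{i,3}^{p+1}}{(X_{i,2}+X_{i,4})^p}\,\mathbbm{1}[A_i].
\]

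Next I would apply the strong law of large numbers in two places. First, since $X$ is a sum of $n+1$ i.i.d.\ $\mathrm{Exp}(1)$ variables, $(n+1)/X \xrightarrow{\text{a.s.}} 1$, so the outer factor $(n+1)/X - 1 \xrightarrow{\text{a.s.}} 0$. Second, the subsets $S_1,\ldots,S_N$ are disjoint and $A_i$ together with the fraction in $T_n$ depends only on $\{\epsilon_{i,j}, X_{i,j}\}$ indexed by the $i$-th group, so the $N$ summands of $T_n$ are i.i.d.\ copies of a single random variable. By the strong law, $T_n/N \xrightarrow{\text{a.s.}} \mu$ with
\[
\mu \;:=\; \Exp\!\left[\frac{X_{1,3}^{p+1}}{(X_{1,2}+X_{1,4})^p}\,\mathbbm{1}[A_1]\right],
\]
provided $\mu<\infty$. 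Since $N/(n+1)\to 1/10$, this gives $T_n/(n+1)\xrightarrow{\text{a.s.}} \mu/10$; multiplying by the vanishing outer factor then yields $\hat R_p-\tilde R_p\xrightarrow{\text{a.s.}} 0$.

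The one real thing to verify is $\mu<\infty$, and this is exactly the step where the restriction $p\in[1,2)$ is essential. Dropping the indicator (which only enlarges $\mu$) and using independence of $X_{1,3}$ from $(X_{1,2},X_{1,4})$,
\[
\mu \;\leq\; \Exp[X_{1,3}^{p+1}]\cdot\Exp\!\left[(X_{1,2}+X_{1,4})^{-p}\right] \;=\; \Gamma(p+2)\int_0^\infty z^{1-p}e^{-z}\,dz \;=\; \Gamma(p+2)\,\Gamma(2-p),
\]
using that $X_{1,2}+X_{1,4}\sim \Gamma(2,1)$ has density $ze^{-z}$. Both gamma factors are finite on $[1,2)$, but $\Gamma(2-p)$ diverges as $p\uparrow 2$, which is consistent with (and essentially foreshadows) the transition to catastrophic overfitting in Theorem~\ref{thm:ell2-catastrophic}. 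I expect this finiteness check to be the only nontrivial step; everything else is routine bookkeeping of the independence structure and standard applications of the strong LLN.
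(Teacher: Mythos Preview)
Your proposal is correct and follows essentially the same approach as the paper: factor $\hat R_p-\tilde R_p$ as $\bigl(\tfrac{n+1}{X}-1\bigr)$ times $\tfrac{1}{n+1}\sum_i \tfrac{X_{i,3}^{p+1}}{(X_{i,2}+X_{i,4})^p}\mathbbm{1}[A_i]$, use the strong LLN on each factor, and verify finiteness of the mean via $\Gamma(p+2)\Gamma(2-p)<\infty$ for $p\in[1,2)$. Your observation that the $\ell$-constraints in $A_i$ are scale-free (hence equivalent to constraints on the $X_{i,j}$) and that the summands are i.i.d.\ is exactly what the paper uses (cf.\ Lemma~\ref{lem:A_i-are-iid} and \eqref{eq:39}).
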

Moreover, expressing $\Tilde{R}_p$ in terms of $X_i$'s 
\begin{equation}\label{eq:Rtilde-definition-lb}
    \Tilde{R}_p= \frac{1}{(n+1)}\sum_{i=1}^{N}\frac{X_{i,3}^{p+1}}{(X_{i,2}+X_{i,4})^{p}} \mathbbm{1}[A_i]=\frac{N}{(n+1)} \cdot \frac{1}{N}\sum_{i=1}^{N}\frac{X_{i,3}^{p+1}}{(X_{i,2}+X_{i,4})^{p}} \mathbbm{1}[A_i]
\end{equation}
Therefore, $\Tilde{R}_p$ is the average of $N$ i.i.d. random variables (up to scaling). Each random variables have the expectation
\begin{align}
    \Exp \left[ \frac{X_{i,3}^{p+1}}{(X_{i,2}+X_{i,4})^{p}} \mathbbm{1}[A_i] \right] & \leq \Exp \left[ \frac{X_{i,3}^{p+1}}{(X_{i,2}+X_{i,4})^{p}} \right] = \Exp[X_{i,3}^{p+1}]\Exp \left[ \frac{1}{(X_{i,2}+X_{i,4})^{p}} \right] \nonumber\\
    &=\Exp[\text{Exp}(1)^{p+1}] \cdot \Exp \left[ \frac{1}{\Gamma(2,1)^p}\right] \tag{sum of two i.i.d. Exp(1)s is $\Gamma(2,1)$} \nonumber \\
    &=\left( \int_{0}^{\infty} z^{p+1} e^{-z} \, dz \right) \left( \int_{0}^{\infty} \frac{1}{z^{p}} \cdot \frac{1}{\Gamma(2)} z e^{-z} \,dz \right) \nonumber\\
    &= \left( \int_{0}^{\infty} z^{p+1} e^{-z} \, dz \right) \left( \int_{0}^{\infty} z^{1-p} e^{-z} \,dz \right) \nonumber\\
    &= \Gamma(p+2) \Gamma(2-p)< \infty, \label{eq:39}
\end{align}
for $1 \leq p <2.$ Thus, by the strong law of large numbers, as $n \rightarrow \infty$ (also $N=\floor{\nicefrac{n}{10}} \rightarrow \infty)$
\begin{align*}
   & \frac{1}{N}\sum_{i=1}^{N}\frac{X_{i,3}^{p+1}}{(X_{i,2}+X_{i,4})^{p}} \mathbbm{1}[A_i] \xrightarrow{\text{a.s}} \Exp\left[ \frac{X_{i,3}^{p+1}}{(X_{i,2}+X_{i,4})^{p}} \mathbbm{1}[A_i] \right] \\
    &= \Pr(A_i) \Exp \left[ \frac{X_{i,3}^{p+1}}{(X_{i,2}+X_{i,4})^{p}} \mid A_i \right] 
    = \Pr(A_i) \Exp \left[ \frac{X_{i,3}^{p+1}}{(X_{i,2}+X_{i,4})^{p}} \mid X_{i,2}\leq X_{i,3}, X_{i,4} \leq X_{i,3} \right] \tag{by definition of $A_i$} \\
    & \geq \Pr(A_i) \Exp \left[ \frac{X_{i,3}^{p+1}}{(X_{i,2}+X_{i,4})^{p}} \right] \tag{since the function is non-increasing in terms of $X_{i,2}$ and $X_{i,4}$} \\
    &= c_0 \Exp[\text{Exp}(1)^{p+1}] \cdot \Exp\left[ \frac{1}{\Gamma(2,1)^p}\right] = c_0 \cdot \Gamma(2+p) \cdot \Gamma(2-p)  \geq 2 c_0 \cdot \Gamma(2-p) \tag{since $1\leq p <2$}\\
    & \geq \frac{c_0}{(2-p)} \tag{by Claim \ref{clm:Gamma(z)}}
\end{align*}
Moreover, $\lim_{n \rightarrow \infty} \frac{N}{n+1}=\lim_{n \rightarrow \infty} \frac{\floor{\nicefrac{n}{10}}}{n+1}=\frac{1}{10}.$ Therefore, recalling the definition of $\Tilde{R}_p$ in \eqref{eq:Rtilde-definition-lb}, we obtain that for a constant $\gamma>0$
$$ \lim_{n \rightarrow \infty} \Pr_S \left[ \Tilde{R}_p \geq  \frac{c_0}{10(2-p)} -\gamma \right]=1.$$
Therefore, using Lemma \ref{lem:almost-surely-lb-p in [1,2)}
$$ \lim_{n \rightarrow \infty} \Pr_S \left[ \hat{R}_p \geq  \frac{c_0}{20(2-p)} \right]=1.$$
Recalling \eqref{eq:def-risk-lb} that $\hat{R}_p$ is a lower-bound on the risk, we obtain
$$ \lim_{n \rightarrow \infty} \Pr_S \left[ \mathcal{R}_p(\hat{f}_S) \geq  \frac{c_0}{20(2-p)}\right]=1.$$
By \eqref{eq:bound on bayes error}, $\mathcal{L}_p(f^*) \leq \nicefrac{2}{\sqrt{\pi}}$. Choosing $c:=\nicefrac{c_0 \sqrt{\pi}}{40}$ we get the desired theorem, i.e.
$$\lim_{n\rightarrow \infty} \Pr_S \left[ \mathcal{R}_p(\hat{f}_S) \geq \frac{c}{(2-p)} \, \mathcal{L}_p(f^*)\right ]=1.$$
Finally, using Lemma \ref{lem: Lp>Rp} we obtain
$$\lim_{n\rightarrow \infty} \Pr_S \left[ \mathcal{L}_p(\hat{f}_S) \geq \frac{c}{(2-p)} \, \mathcal{L}_p(f^*)\right ]=1. $$
\end{proof}

\begin{proof}[Proof of Lemma \ref{lem:almost-surely-lb-p in [1,2)}]
    Recall \eqref{eq:ell_distribution} that $\ell_{i,j}\sim \nicefrac{X_{i,j}}{X}$ for $1 \leq i \leq N$ and $j\in \{1,2,3,4,5,6\}$, where $X_{i,j}\sim \text{Exp}(1)$ and $X$ is the sum of $(n+1)$ i.i.d Exp$(1)$ random variables.
    Therefore, by definition of $\hat{R}_p$ and $\Tilde{R}_p$ 
    \begin{align}
        \hat{R}_p-\Tilde{R}_p &= \left(\sum_{i=1}^{N}\frac{\ell_{i,3}^{p+1}}{(\ell_{i,2}+\ell_{i,4})^{p}} \mathbbm{1}[A_i]\right)-\left(\sum_{i=1}^{N}\frac{\Tilde{\ell}_{i,3}^{p+1}}{(\Tilde{\ell}_{i,2}+\Tilde{\ell}_{i,4})^{p}} \mathbbm{1}[A_i] \right) \nonumber\\
        &= \sum_{i=1}^{N} \left(\frac{ (\nicefrac{X_{i,3}}{X})^{p+1}}{(\nicefrac{X_{i,2}}{X}+ \nicefrac{X_{i,4}}{X})^{p}} \mathbbm{1}[A_i]-\frac{ (\nicefrac{X_{i,3}}{n+1})^{p+1}}{(\nicefrac{X_{i,2}}{n+1}+ \nicefrac{X_{i,4}}{n+1})^{p}} \mathbbm{1}[A_i] \right) \nonumber\\
  &= \underbrace{\frac{1}{n+1}\sum_{i=1}^{N}\left[ \frac{X_{i,3}^{p+1}}{(X_{i,2}+X_{i,4})^{p}} \mathbbm{1}[A_i] \right]}_{:=T}  \left(\frac{n+1}{X}-1 \right) \label{eq:40}
    \end{align}
    We already know that by \eqref{eq:39}, $T$ converges to a finite quantity almost surely as $n \rightarrow \infty$. More formally, by the strong LLN 
    \begin{align*}
         T &= \frac{N}{n+1} \cdot \frac{1}{N} \sum_{i=1}^{N}\left[ \frac{X_{i,3}^{p+1}}{(X_{i,2}+X_{i,4})^{p}} \mathbbm{1}[A_i]\right] \xrightarrow{\text{a.s.}} \frac{1}{10} \cdot \Exp\left[ \frac{X_{i,3}^{p+1}}{(X_{i,2}+X_{i,4})^{p}} \mathbbm{1}[A_i] \right]<\infty,
    \end{align*}
where the last inequality follows from \eqref{eq:39}. Since $X$ is the sum of $(n+1)$ i.i.d. Exp(1) random variables, by the strong law of large numbers, as $n\rightarrow \infty$, we have $\frac{n+1}{X} \xrightarrow{\text{a.s.}} 1 $. Therefore, $\left(\frac{n+1}{X} -1\right)\xrightarrow{\text{a.s.}} 0$. Combining this in \eqref{eq:40}, we obtain that $\hat{R}_p-\Tilde{R}_p \xrightarrow{\text{a.s}} 0$. 
\end{proof}

\subsection{Proof of Theorem \ref{thm:ell2-catastrophic}}

\begin{proof}[Proof of Theorem \ref{thm:ell2-catastrophic}]
    We have the same $f^*\equiv 0$ and the noise distribution is $\normal(0,1)$. By Lemma \ref{lem:risk-when-Ai-happens} for $p \geq 2$
    $$ \mathcal{R}_p(\hat{f}_S) \geq \sum_{i=1}^{N} \frac{\ell_{i,3}^3}{(\ell_{i,2}+\ell_{i,4})^2} \cdot \mathbbm{1}[A_i]. $$
    Recall \eqref{eq:ell_distribution} that $\ell_{i,j}\sim \nicefrac{X_{i,j}}{X}$, where $X_{i,j}\sim \text{Exp}(1)$ and $X$ is the sum of $(n+1)$ i.i.d Exp$(1)$ random variables, one of which is $X_{i,j}$. Thus,
    \begin{equation}\label{eq:risk-def-catastrophic}
       \mathcal{R}_p(\hat{f}_S) \geq \sum_{i=1}^{N} \frac{ (\nicefrac{X_{i,3}}{X})^3}{(\nicefrac{X_{i,2}}{X}+\nicefrac{X_{i,4}}{X})^2} \cdot \mathbbm{1}[A_i]= \frac{1}{X} \sum_{i=1}^{N} \frac{X_{i,3}^3}{(X_{i,2}+X_{i,4})^2}\mathbbm{1}[A_i]:=\hat{R}.  
    \end{equation}
For $i \in [N]$, define
\begin{equation}\label{eq:risk-in-intervals}
\hat{R}_i:= \frac{X_{i,3}^3 \mathbbm{1}[A_i]}{(X_{i,2}+X_{i,4})^2}. \quad \text{Then } \hat{R}=\frac{1}{X} \sum_{i=1}^{N} \hat{R}_i 
\end{equation}
Each $\hat{R}_i$ has an infinite expectation, as we show in the following calculation.
\begin{align*}
    \Exp[\hat{R}_i]&=\Exp \left[ \frac{X_{i,3}^3 \mathbbm{1}[A_i]}{(X_{i,2}+X_{i,4})^2} \right]=\Pr(A_i) \cdot \Exp\left[\frac{X_{i,3}^3}{(X_{i,2}+X_{i,4})^2} \mid A_i\right] \\
    & = c_0 \cdot \Exp\left[\frac{X_{i,3}^3}{(X_{i,2}+X_{i,4})^2} \mid X_{i,2} \leq X_{i,3}, X_{i,4} \leq X_{i,3}\right] \tag{by Lemma \ref{lem:A_i-are-iid} and recall $A_i$} \\
    & \geq c_0 \cdot \Exp\left[\frac{X_{i,3}^3}{(X_{i,2}+X_{i,4})^2}\right] \tag{the function inside expectation is non-increasing in terms of $X_{i,2}, X_{i,4}$ }\\
    &= c_0 \cdot \Exp[\text{Exp}(1)^3] \cdot \Exp\left[\frac{1}{\Gamma(2,1)^2}\right]\\
    &= 6 c_0  \cdot \int_{0}^{\infty}
    \frac{1}{z^2} \cdot z e^{-z} \, dz= 6 c_0  \cdot \int_{0}^{\infty}
    \frac{e^{-z}}{z} \, dz= \infty.
\end{align*}
Therefore, we define the truncated random variables $\hat{T}_i(a)$ which is the truncation of $\hat{R}_i$ at any finite threshold $a$. Formally,
   $$\hat{T}_i(a):= \min\{\hat{R}_i,a\}.$$
   Then $\hat{T}_i(a)$ has finite expectation by its definition, which we denote by $\Exp[\hat{T}_i(a)]=h(a)$ for some non-decreasing function $h(a)$. Also, it must be that $\lim_{a\rightarrow \infty} h(a)=\infty$, because the expectation of $\hat{T}_i(a)$ goes to infinity as $a$ does. We now define
 $$\hat{T}(a):= \frac{1}{X}\sum_{i=1}^{N} \hat{T}_i(a),$$
then $\hat{T}(a)$ for any fixed $a$, by its definition serves as a lower bound on $\hat{R}$ (recall \eqref{eq:risk-in-intervals}). We rewriting $\hat{T}(a)$ as follows
\begin{equation}\label{eq:rewrite-hatT}
    \hat{T}(a):= \frac{n+1}{X} \cdot \frac{N}{n+1} \cdot \frac{1}{N}\sum_{i=1}^{N} \hat{T}_i(a).
\end{equation}
Then as $n \rightarrow \infty$, since $X$ is the sum of $n+1$ i.i.d. Exp$(1)$, by the strong law of large numbers 
\begin{equation}\label{eq:X-con}
    \frac{n+1}{X} \xrightarrow{\text{a.s.}} \Exp[\text{Exp}(1)]= 1.
\end{equation}
Similarly, by the strong LLN
\begin{equation}\label{eq:h(a)-con}
    \frac{1}{N} \sum_{i=1}^{N} \hat{T}_i(a) \xrightarrow{\text{a.s.}} \Exp[\hat{T}_i(a)]= h(a).
\end{equation}
And, finally using $\lim_{n\rightarrow \infty} \frac{N}{n+1}=\frac{1}{10}$ along with $\eqref{eq:h(a)-con}$ and \eqref{eq:X-con} in \eqref{eq:rewrite-hatT}, we obtain
    $$\hat{T}(a)\xrightarrow{ \text{a.s.}} \frac{h(a)}{10}. $$
Therefore, for any $b>0$ choose $a^*$ such that $h(a^*) \geq 21 b$, then as $n\rightarrow \infty$, $\hat{T}(a^*) \xrightarrow{ \text{a.s.}} \nicefrac{h(a^*)}{10}=2.1 b$ (note that it is always possible to choose such $a^*$ since $\lim_{a \rightarrow \infty} h(a)=\infty$). This implies
$$\lim_{n \rightarrow \infty} \Pr_S [ \hat{T}(a^*) \geq 2b]=1.$$
Thus, using the fact that $\hat{T}(a^*)$ is a lower bound on $\hat{R}$ 
$$\lim_{n \rightarrow \infty} \Pr_S [ \hat{R} \geq 2 b]=1.$$
Finally, recalling \eqref{eq:risk-def-catastrophic} that $\hat{R}$ is a lower bound on $\mathcal{R}_p(\hat{f}_S)$ we obtain the desired theorem.
$$\lim_{n \rightarrow \infty} \Pr_S [ \mathcal{R}_p(\hat{f}_S)> b]=1.$$
Note that Lemma \ref{lem: Lp>Rp} immediately also implies
$$\lim_{n \rightarrow \infty} \Pr_S [ \mathcal{L}_p(\hat{f}_S)> b]=1.$$

\end{proof}
\section{Proof of Theorem \ref{thm:grid}}

\begin{proof}[Proof of Theorem \ref{thm:grid}]
Let $f^*$ be $G$-Lipschitz. First, we observe that the distance between two consecutive $x_i$'s is $1/n$. The domain $[0,1]$ is divided into $n$ intervals of length $1/n$. We again denote $x_0=0$ for notational convenience. Our goal is to analyze the population $L_p$ loss of min-norm interpolating solution $\hat{f}_S(x)$ defined in \eqref{eq:main_problem}. 
\begin{align}
  \mathcal{L}_p(\hat{f}_S) &= \Exp_{(x,y)\sim \dist} \left[ | \hat{f}_S(x)- y|^p\right] \nonumber\\  
  &= \Exp_{x\sim \unif([0,1]),\eps} \left[ |\hat{f}_{S}(x)-f^*(x)+\eps|^p\right] \nonumber\\
  & \leq 2^{p-1} \Exp_{x\sim \unif([0,1]),\eps} \left[ |\hat{f}_S(x)-f^*(x)|^p+|\eps|^p\right] \nonumber\\
  &= 2^{p-1} \left( 
  \Exp_{x\sim \unif([0,1])} \left[ |\hat{f}_S(x)-f^*(x)|^p \right]
  + \Exp_{\eps} \left[ |\eps|^p\right] \right) \nonumber\\
  &= 2^{p-1} ( \mathcal{R}_p(\hat{f}_S)+ \mathcal{L}_p(f^*) ) \label{eq:L_p-R_p-grid}
\end{align}
Therefore, it suffices to bound $\mathcal{R}_p(\hat{f}_S)$. It can be expressed as the following.
\begin{equation}\label{eq:R_p-expansion-grid}
    \mathcal{R}_p(\hat{f}_S)=\sum_{i=0}^{n-1} \left( \underbrace{\int_{x_i}^{x_{i+1}} |\hat{f}(x)-f^*(x)| \, dx}_{:=R_i} \,  \right)
\end{equation}
For $x\in [0,x_2]$, $\hat{f}_S(x)=g_1(x)$ by Lemma \ref{lem:complete-description-property-main_text}. Thus we have
\begin{align}
        |\hat{f}_S(x)-f^*(x)| =& |g_1(x)-f^*(x)| \nonumber\\
     =&\left| y_{1}+ \frac{(y_{2}-y_{1})}{x_{2}-x_{1}} (x-x_{1})-f^*(x)\right| \nonumber\\
     =& \left| f^*(x_{1})+\eps_{1} + \left(\frac{f^*(x_{2})+\eps_{2}-f^*(x_{1})-\eps_{1}}{x_{2}-x_{1}}\right)(x-x_{1}) -f^*(x)\right| \nonumber \\
     \leq & \left| f^*(x_{1}) -f^*(x)\right|+ |\eps_{1}|+ \frac{G |x_{2}-x_{1}|+|\eps_{2}-\eps_{1}|}{x_{2}-x_{1}}   |x-x_{1}| \nonumber \\
    \leq &  G|x-x_{1}| + |\eps_{1}|+ \left( \frac{G(x_{2}-x_{1})+|\eps_{2}|+|\eps_{1}|}{x_{2}-x_{1}} \right)  |x-x_{1}| \nonumber \\
    \leq & G \cdot \frac{1}{n} + |\eps_{1}|+ \left( \frac{G \cdot \frac{1}{n} +|\eps_{2}|+|\eps_{1}|}{\frac{1}{n}} \right) \cdot \frac{1}{n}\nonumber\\
    = & \frac{2G}{n} + |\eps_{2}| +2 |\eps_{1}|. \nonumber
\end{align}
Using this we get,
\begin{align}
    R_0+R_1 & \leq \int_{0}^{x_2} \hspace{-2mm}|\hat{f}_S(x)-f^*(x)|^p \, dx \leq \frac{2}{n} \left( \frac{2G}{n} + |\eps_{2}| +2 |\eps_{1}| \right)^p \nonumber\\
    & \leq 3^p\left( \left(\frac{2G}{n}\right)^p + |\eps_{2}|^p+ 2^p |\eps_{1}|^p\right)\cdot \frac{2}{n}. \label{eq:R_0+R_1-grid}
\end{align}
Using exactly the same steps, one can also bound the risk in the last interval $[x_{n-1},x_n]$.
\begin{align}
    R_{n-1} & \leq \int_{x_{n-1}}^{x_n=1} \hspace{-2mm}|\hat{f}_S(x)-f^*(x)|^p \, dx 
    \leq 3^p\left( \left(\frac{2G}{n}\right)^p + |\eps_{n-1}|^p+ 2^p |\eps_{n}|^p\right)\cdot \frac{1}{n}. \label{eq:R_n-grid}
\end{align}
Fix any $i\in \{2,\dots,n-2\}$, and consider $x\in [x_{i},x_{i+1}]$. By Lemma \ref{lem:bound on |f^-f*|}, we get:
\[
 |\hat{f}(x)-f^*(x)| \leq \max\left\{ |g_i(x)-f^*(x)|, \min\{|g_{i+1}(x)-f^*(x)|, |g_{i-1}(x)-f^*(x)|\}\right\}~,
 \]
 We can simplify each one by one.
\begin{align*}
 |g_i(x)-f^*(x)|=&\left| y_{i}+ \frac{(y_{i+1}-y_{i})}{x_{i+1}-x_{i}} (x-x_{i})-f^*(x)\right|\\
 =& \left| f^*(x_{i})+\eps_{i} + \frac{f^*(x_{i+1})+\eps_{i+1}-f^*(x_{i})-\eps_{i}}{x_{i+1}-x_{i}}(x-x_{i}) -f^*(x)\right| \\
 \leq & \left| f^*(x_{i}) -f^*(x)\right|+ |\eps_{i}|+ \left( \frac{G/n+|\eps_{i+1}|+|\eps_{i}|}{1/n} \right)\cdot  \left| (x-x_{i-1}) \right| \\
 \leq & \frac{G}{n}+ |\eps_{i}|+ \left( \frac{G/n+|\eps_{i+1}|+|\eps_{i}|}{1/n} \right) \cdot \frac{1}{n} \\
 = &\frac{2G}{n}+ 2 |\eps_i|+ |\eps_{i+1}|.
\end{align*}
Similarly,
\begin{align*}
 |g_{i-1}(x)-f^*(x)|=&\left| y_{i}+ \frac{(y_{i}-y_{i-1})}{x_{i}-x_{i-1}} (x-x_{i})-f^*(x)\right|\\
 =& \left| f^*(x_{i})+\eps_{i} + \frac{f^*(x_{i})+\eps_{i}-f^*(x_{i-1})-\eps_{i-1}}{x_{i}-x_{i-1}}(x-x_{i}) -f^*(x)\right| \\
 \leq & \left| f^*(x_{i}) -f^*(x)\right|+ |\eps_{i}|+ \left( \frac{G/n+|\eps_{i}|+|\eps_{i-1}|}{1/n} \right) \left| (x-x_{i}) \right| \\
 \leq & \frac{G}{n}+ |\eps_{i}|+ \left( \frac{G/n+ |\eps_{i}|+|\eps_{i-1}|}{1/n} \right)\cdot \frac{1}{n} \\
 = &\frac{2G}{n}+ 2|\eps_{i}|+|\eps_{i-1}|.
\end{align*}

Using a similar strategy, we also bound:
\begin{align*}
 |g_{i+1}(x)-f^*(x)|=&\left| y_{i+1}+ \frac{(y_{i+2}-y_{i+1})}{x_{i+2}-x_{i+1}} (x-x_{i+1})-f^*(x)\right|\\
 =& \left| f^*(x_{i+1})+\eps_{i+1} + \frac{f^*(x_{i+2})+\eps_{i+2}-f^*(x_{i+1})-\eps_{i+1}}{x_{i+2}-x_{i+1}}(x-x_{i+1}) -f^*(x)\right| \\
 \leq & \left| f^*(x_{i+1}) -f^*(x)\right|+ |\eps_{i+1}|+ \left( \frac{G/n+|\eps_{i+2}|+|\eps_{i+1}|}{1/n} \right) \left| (x-x_{i+1}) \right| \\
 \leq & \frac{G}{n}+ |\eps_{i+1}|+ \left( \frac{G/n+|\eps_{i+2}|+|\eps_{i+1}|}{1/n} \right) \cdot \frac{1}{n}\\
 = &\frac{2G}{n}+ 2|\eps_{i+1}|+|\eps_{i+2}|.
\end{align*}
Therefore, combining the above three, for $x$ in the intermediate intervals:
\begin{align*}
    |\hat{f}(x)-f^*(x)| &\leq \max\left\{\frac{2G}{n}+2|\eps_{i}|+|\eps_{i+1}|, \frac{2G}{n}+2|\eps_{i}|+|\eps_{i-1}|,\frac{2G}{n}+2|\eps_{i+1}|+|\eps_{i+2}|\right\} \\
    & \leq \frac{2G}{n}+2|\eps_i|+2|\eps_{i+1}|+|\eps_{i+2}|+|\eps_{i-1}|~.
\end{align*}

Therefore, for any $i \in \{2,\dots,n-2\}$, and $x\in [x_i, x_{i+1}]$,
$$ |\hat{f}(x)-f^*(x)|^p \leq 5^p \left( \left(\frac{2G}{n} \right)^p+2^p |\eps_i|^p+2^p |\eps_{i+1}|^p+|\eps_{i+2}|^p+|\eps_{i-1}|^p \right)$$
Therefore,
$$R_i = \int_{x_i}^{x_{i+1}} |\hat{f}(x)-f^*(x)|^p \, dx \, \leq 5^p \left( \left(\frac{2G}{n} \right)^p+2^p |\eps_i|^p+2^p |\eps_{i+1}|^p+|\eps_{i+2}|^p+|\eps_{i-1}|^p \right) \frac{1}{n}.$$
We now sum over all $i \in \{0,2,\dots,n-1\}$ and use \eqref{eq:R_0+R_1-grid} and \eqref{eq:R_n-grid}:
\begin{align}
   \mathcal{R}_p(\hat{f}_S) =\sum_{i=0}^{n-1} R_i & \leq \sum_{i=2}^{n-2} \left[5^p \left( \left(\frac{2G}{n} \right)^p+2^p |\eps_i|^p+2^p |\eps_{i+1}|^p+|\eps_{i+2}|^p+|\eps_{i-1}|^p \right)\frac{1}{n} \right] \nonumber\\
    &\hspace{2mm} + 3^p\left( \left(\frac{2G}{n}\right)^p + |\eps_{2}|^p+ 2^p |\eps_{1}|^p\right)\cdot \frac{2}{n}+ 3^p\left( \left(\frac{2G}{n}\right)^p + |\eps_{n-1}|^p+ 2^p |\eps_{n}|^p\right)\cdot \frac{1}{n} \nonumber \\
    & \leq \frac{(5^p+2 \cdot 3^p+3^p) (2G)^p}{n^p}+ \frac{2 \cdot 3^p |\eps_2|^p+2 \cdot 6^p |\eps_1|^{p} + 3^{p} |\eps_{n-1}|^p+6^p|\eps_n|^p}{n}\nonumber\\
    & \hspace{2mm}+\frac{1}{n} \sum_{i=2}^{n-2} (10^p|\eps_i|^p+10^p |\eps_{i+1}|^p+ 5^p |\eps_{i+2}|^p+ 5^p|\eps_{i-1}|^p ) \nonumber\\
    & \leq \frac{(5^p+3^{p+1}) (2G)^p}{n^p}+ \frac{2 \cdot (10^p+5^p)}{n} \cdot \sum_{i=1}^{n}  |\eps_i|^p \label{eq:all-but-three-grid}
\end{align}
The first term converges to $0$ surely as $n \rightarrow \infty$. The second term is proportional to the average of $n$ i.i.d. random variables $|\eps_i|^p$, each having $\Exp[|\eps_i|^p]=\mathcal{L}_p(f^*)$. Therefore, by the strong LLN
$$ \frac{1}{n} \sum_{i=1}^{n} |\eps_i|^p \xrightarrow{a.s.} \mathcal{L}_p(f^*).$$
Combining this with \eqref{eq:all-but-three-grid}:
$$ \lim_{n\rightarrow \infty} \Pr_S \left[ \mathcal{R}_p(\hat{f}_S) \leq (2 \cdot 10^p + 2 \cdot 5^p+1) \, \mathcal{L}_p(f^*) \right]=1.$$
Finally, substituting the bound from \eqref{eq:L_p-R_p-grid} in the above:
$$ \lim_{n\rightarrow \infty} \Pr_S \left[ \mathcal{L}_p(\hat{f}_S) \leq (20^p +10^p+ 2^p) \, \mathcal{L}_p(f^*) \right]=1.$$
Letting $ C_p = 20^p +10^p+ 2^p$, we get the desired theorem. 
\end{proof}

\end{document}